\newtheorem{definition}{Definition}
\newtheorem{lemma}{Lemma}
\newtheorem{theorem}{Theorem}
\newtheorem*{lemma*}{Lemma}
\newtheorem*{theorem*}{Theorem}
\def\0{\mathbf{0}}
\def\a{\mathbf{a}}
\def\b{\mathbf{b}}
\def\x{\mathbf{x}}
\def\y{\mathbf{y}}
\def\u{\mathbf{u}}
\def\v{\mathbf{v}}
\def\w{\mathbf{w}}
\def\A{\mathbf{A}}
\def\cA{\mathcal{A}}
\def\cP{\mathcal{P}}
\def\cJ{\mathcal{J}}
\def\cK{\mathcal{K}}
\def\cD{\mathcal{D}}
\def\cS{\mathcal{S}}
\def\cV{\mathcal{V}}
\def\cW{\mathcal{W}}
\def\transpose{\top} % Vector and Matrix Transpose
\def\spann{\text{span}}
\def\rank{\text{rank}}
\def\conv{\text{conv}}
\def\card{\text{card}}
\def\BP{\text{BP}}
\def\OMP{\text{OMP}}
\def\vol{\text{vol}}
\def\st{\hspace{1em} \mathrm{s.t.} \hspace{0.5em}}
\def\Re{\mathbb{R}}
\def\Sp{\mathbb{S}}
\tikzset{
	box/.style={rectangle, rounded corners=6pt,
		minimum width=50pt, minimum height=20pt, inner sep=6pt,
		draw=black,thick, fill=white}
}
\definecolor{tiffanyblue}{rgb}{0.04, 0.73, 0.71}
\colorlet{pPntColor}{blue} % color for \cA_0
\colorlet{dPntColor}{red} % color for \cD_0
\colorlet{oPntColor}{orange} % color for \cA_c
\colorlet{cColor}{tiffanyblue} % color for \gamma_0
\colorlet{pRgnColor}{green} % color for primal region
\colorlet{dRgnColor}{yellow} % color for dual region
\begin{document}
%
% paper title
% Titles are generally capitalized except for words such as a, an, and, as,
% at, but, by, for, in, nor, of, on, or, the, to and up, which are usually
% not capitalized unless they are the first or last word of the title.
% Linebreaks \\ can be used within to get better formatting as desired.
% Do not put math or special symbols in the title.
\title{Subspace-Sparse Representation}
%
%
% author names and IEEE memberships
% note positions of commas and nonbreaking spaces ( ~ ) LaTeX will not break
% a structure at a ~ so this keeps an author's name from being broken across
% two lines.
% use \thanks{} to gain access to the first footnote area
% a separate \thanks must be used for each paragraph as LaTeX2e's \thanks
% was not built to handle multiple paragraphs
%

%\author{Michael~Shell,~\IEEEmembership{Member,~IEEE,}
%        John~Doe,~\IEEEmembership{Fellow,~OSA,}
%        and~Jane~Doe,~\IEEEmembership{Life~Fellow,~IEEE}% <-this % stops a space
%\thanks{M. Shell is with the Department
%of Electrical and Computer Engineering, Georgia Institute of Technology, Atlanta,
%GA, 30332 USA e-mail: (see http://www.michaelshell.org/contact.html).}% <-this % stops a space
%\thanks{J. Doe and J. Doe are with Anonymous University.}% <-this % stops a space
%\thanks{Manuscript received April 19, 2005; revised September 17, 2014.}}

\author{Chong~You
	and~Ren\'e~Vidal
	\thanks{Center for Imaging Science, Johns Hopkins University, Baltimore, MD, 21218, USA. e-mail: \{cyou, rvidal\}@cis.jhu.edu.}% <-this % stops a space

%	\thanks{Manuscript received April 19, 2005; revised September 17, 2014.}
	}

\maketitle

% As a general rule, do not put math, special symbols or citations
% in the abstract or keywords.
\begin{abstract}
Given an overcomplete dictionary $\A$ and a signal $\b$ that is a linear combination of a few \textit{linearly independent} columns of $\A$, classical sparse recovery theory deals with the problem of recovering the unique sparse representation $\x$ such that $\b = \A \x$. It is known that under certain conditions on $\A$, $\x$ can be recovered by the Basis Pursuit (BP) and the Orthogonal Matching Pursuit (OMP) algorithms. In this work, we consider the more general case where $\b$ lies in a low-dimensional subspace spanned by some columns of $\A$, which are possibly \textit{linearly dependent}. In this case, the sparsest solution $\x$ is generally not unique, and we study the problem that the representation $\x$ identifies the subspace, i.e. the nonzero entries of $\x$ correspond to dictionary atoms that are in the subspace. Such a representation $\x$ is called \textit{subspace-sparse}. We present sufficient conditions for guaranteeing subspace-sparse recovery, which have clear geometric interpretations and explain properties of subspace-sparse recovery. We also show that the sufficient conditions can be satisfied under a randomized model. Our results are applicable to the traditional sparse recovery problem and we get conditions for sparse recovery that are less restrictive than the canonical mutual coherent condition. We also use the results to analyze the sparse representation based classification (SRC) method, for which we get conditions to show its correctness.

%Such conditions reveal important geometric insights and have implications for the theory of classical sparse recovery as well as sparse representation based classification.
\end{abstract}

% Note that keywords are not normally used for peerreview papers.
%\begin{IEEEkeywords}
%IEEEtran, journal, \LaTeX, paper, template.
%\end{IEEEkeywords}

% For peer review papers, you can put extra information on the cover
% page as needed:
% \ifCLASSOPTIONpeerreview
% \begin{center} \bfseries EDICS Category: 3-BBND \end{center}
% \fi
%
% For peerreview papers, this IEEEtran command inserts a page break and
% creates the second title. It will be ignored for other modes.
\IEEEpeerreviewmaketitle

\section{Introduction}
% The very first letter is a 2 line initial drop letter followed
% by the rest of the first word in caps.
% 
% form to use if the first word consists of a single letter:
% \IEEEPARstart{A}{demo} file is ....
% 
% form to use if you need the single drop letter followed by
% normal text (unknown if ever used by IEEE):
% \IEEEPARstart{A}{}demo file is ....
% 
% Some journals put the first two words in caps:
% \IEEEPARstart{T}{his demo} file is ....
% 
% Here we have the typical use of a "T" for an initial drop letter
% and "HIS" in caps to complete the first word.
% You must have at least 2 lines in the paragraph with the drop letter
% (should never be an issue)
\IEEEPARstart{S}{parsity} has played an important role in the area of signal processing for the past few years. Given an overcomplete dictionary $\A \in \Re ^{D \times J}$, consider the sparse pursuing program: 
\begin{equation}
\min_{\x} \|\x\|_0 \st \b = \A \x,
\label{eq:L0}
\end{equation}
in which $\|\cdot\|_0$ counts the number of nonzero entries. Sparse representation concerns about the uniqueness of the solution and how the solution can be recovered efficiently \cite{Candes:SPM08,Elad:SIAM09,Mairal:FT12}. Since solving \eqref{eq:L0} is generally intractable computationally, it is usually approached by some approximate algorithms such as Orthogonal Matching Pursuit (OMP) \cite{Pati:Asilomar93} and Basis Pursuit (BP) \cite{Chen:SIAM98}. There has also been studies of these algorithms and the results show that if $\A$ is sufficiently \emph{incoherent} \cite{DonohoElad:PNAS03,Tropp:TIT04,Donoho:TIT06} or satisfies the so-called \emph{restricted isometry property} \cite{Candes:TIT05,Candes:CRAS08,Davenport:TIT10,Mo:ACHA11,Mo:TIT12-RIPinOMP,Cai:TIT14}, then the true sparsest solution can be found by these approximate algorithms. 

In this work, we consider an extension of the canonical sparse recovery to the cases where the dictionary $\A$ is \emph{not} necessarily incoherent. Let $\cA = \{\a_j, j \in \cJ\}$ be the set of all columns of $\A$ in problem \eqref{eq:L0}, where $\cJ = \{1, \cdots, J\}$. We consider the case that the dictionary $\cA$ is subspace-structured, i.e., there is a set $\cJ_0 \subsetneq \cJ$ such that $\cA_0 := \{\a_j, j \in \cJ_0\}$ spans a low dimensional subspace, denoted as $\cS_0$. In this case the dictionary is not necessarily incoherent, e.g., two atoms in $\cA_0$ could be arbitrarily close or even be identical. Moreover, for any $\b \in \cS_0$, the solution to \eqref{eq:L0} is generally not unique, since one can get equal sparsity solutions by using any $d_0$ atoms from $\cA_0$, where $d_0 := \dim (\cS_0)$. The goal in this case is not to recover any specific one of these solutions; observe that all of them have the property that they represent $\b$ using atoms only from $\cA_0$, we study whether the solution to \eqref{eq:L0} has such a general property. A solution that satisfies this property is called \emph{subspace-sparse}. Similar to sparse recovery, in the subspace-sparse recovery problem we study whether the approximate algorithms such as OMP and BP give subspace-sparse representations.

The term of subspace-sparse representation is proposed in \cite{Elhamifar:TPAMI13}, and such a representation is also called to be \emph{subspace-preserving} \cite{Vidal:GPCAbook}, or called to satisfy the \emph{subspace-detection property} \cite{Soltanolkotabi:AS13}, or called to have \emph{exact feature selection} \cite{Dyer:JMLR13} in general non-sparse contexts. The concept plays a key role in analyzing subspace-structured data for the tasks of classification \cite{Wright:PAMI09,Elhamifar:CVPR11,Peng:PAMI12} and clustering \cite{Elhamifar:CVPR09,Liu:ICML10,Liu:TPAMI13,Favaro:CVPR11,Vidal:PRL14,Lu:ECCV12,Wang:NIPS13-LRR+SSC, Heckel:arxiv13,Soltanolkotabi:AS14,Park:NIPS14,Li:CVPR15}, with applications to face recognition, motion segmentation, video segmentation, etc. The idea has also intrigued new methods with applications to visual object tracking \cite{Mei:TPAMI11,Zhong:CVPR12}, action recognition \cite{Yang:JAISE09,Castrodad:IJCV12}, subset selection \cite{Elhamifar:CVPR12}, and so on.

Following the initial work of \cite{Elhamifar:CVPR09}, several recent works \cite{Elhamifar:ICASSP10,Elhamifar:TPAMI13,Soltanolkotabi:AS13,Soltanolkotabi:AS14,Wang-Xu:ICML13,Dyer:JMLR13} have studied the subspace-sparse recovery problem in the context of \emph{subspace clustering}, where the task is to cluster a collection of points lying in a union of subspaces. In this case, the problem is solved by first finding a subspace-sparse representation of each point in terms of a dictionary composed of all other points and then applying spectral clustering to these subspace-sparse representations. Notice, however, that these analyses are specific for the correctness of subspace clustering. In this work we study the more general subspace-sparse recovery problem, where the signal to be represented is an arbitrary point in the subspace $\cS_0$, and the goal is to derive conditions on the dictionary under which the OMP and BP algorithms are guaranteed to give subspace-sparse solutions. Based on the analysis, we also obtain new theoretical conditions for classical sparse recovery and sparse representation based classification.

\subsection{Problem formulation and relation with sparse recovery}

Given a dictionary $\cA = \{ \a_j \in \Re^D, j \in \cJ \}$, suppose that there is a partition $\cJ = \cJ_0 \cup \cJ_c$, such that $\cA_0 := \{ \a_j, j \in \cJ_0 \}$ contains points that are in a subspace $\cS_0 := \spann(\cA_0)$ of dimension $d_0 < D$, and $\cA_c := \{ \a_j, j \in \cJ_c \}$ contains points that are not in the subspace $\cS_0$. For an arbitrary point $\b \in \cS_0$, by applying the BP or the OMP algorithm to $\b$ with dictionary $\cA$, we can get a sparse vector $\x$ such that $\b = \A \x  $. The problem of subspace-sparse recovery is to study the conditions on the dictionary $\cA$ under which the representation $\x$ is \emph{subspace-sparse}, i.e. $x_j \ne 0$ only if $j \in \cJ_0$.  We also assume that all atoms in dictionary $\cA$ are normalized to have unit $\ell_2$ norm. 

Classical sparse recovery is a particular case of subspace-sparse recovery. Assume that there is an unknown vector $\x$ that is $s_0$-sparse (i.e. $\x$ has at most $s_0$ nonzero entries), sparse recovery studies the problem of recovering it from the measurement $\b = \A \x$ by algorithms such as BP and OMP. In order for this problem to be well posed, $\x$ needs to be the \emph{unique} sparsest solution, thus the $s_0$ atoms of $\cA$ corresponding to the $s_0$ nonzero entries of $\x$ must be linearly independent. On the other hand, if we assume that the set $\cA_0$ contains $s_0 := \card(\cA_0)$ linearly independent points in the subspace-sparse recovery problem formulation, then the subspace-sparse solution is unique for any $\b \in \cS_0$. In such cases, the conditions for guaranteeing subspace-sparse recovery also guarantees sparse recovery of any $s_0$-sparse vectors.

\subsection{Results and Contributions}

We summarize our major subspace-sparse recovery results, which is discussed in detail in sections \ref{sec:subspace-sparse-deterministic} and \ref{sec:subspace-sparse-randomized}.

Theorems \ref{thm:PRC} and \ref{thm:DRC} introduce, respectively, the principal recovery condition (PRC) and the dual recovery condition (DRC) for \emph{subspace-sparse} recovery. Both of them are conditions on the dictionary $\cA$ under which both OMP and BP give a subspace-sparse solution for every $\b \in \cS_0$. 

The PRC requires that 
\begin{equation}
	\gamma_0 < s(\cA_c, \cS_0),
\end{equation}
where the left hand side, $\gamma_0$, is the \emph{covering radius} of the points $\cA_0$, which is defined as the smallest angle such that any point in the subspace $\cS_0$ is within angle $\gamma_0$ of at least one point in $\cA_0$. \emph{Covering radius} measures how well distributed the atoms $\cA_0$ are in the subspace $\cS_0$, and should be relatively small if the points are equally distributed in all directions within the subspace and not skewed in a certain direction. The right hand side, $s(\cA_c, \cS_0)$, is the minimum angle between any atom in $\cA_c$ and any point in the subspace $\cS_0$. It is large when all pairs of points from the two sets are sufficiently separated. Thus, intuitively, the PRC requires the atoms $\cA_0$ to be sufficiently well spread-out and the atoms $\cA_c$ to be sufficiently away from the subspace $\cS_0$.

The PRC has the drawback that $\cS_0$ on the right hand side contains infinitely many points, making the requirement too strong. We show that a finite subset of the points in $\cS_0$ is sufficient for this purpose, leading to the DRC: 

\begin{equation}
	\gamma_0 < s(\cA_c, \cD_0).
\end{equation}
where $\cD_0$ is a finite subset of the points in the subspace $\cS_0$, which will be defined in Section \ref{sec:def}. The DRC does not require \emph{all} points in subspace $\cS_0$ to be away from the atoms in $\cA_c$, as done by the PRC. Instead, only a finite number of points $\cD_0$ are sufficient for all the points in $\cS_0$. Hence, the DRC is implied by the PRC, thus it gives a stronger result.

In Theorem \ref{thm:randomized}, we show that the DRC can be satisfied under a probabilistic model. Assume that the atoms in $\cA_0$ are independently and uniformly distributed on the unit sphere of subspace $\cS_0$, and atoms in $\cA_c$ are independently and uniformly distributed on the unit sphere of the ambient space $\Re ^D$, then under the condition that $2 \le d_0 \le \sqrt{D/2}$, the DRC is satisfied with a probability $p$ that 1) is an increasing function of $D$, 2) is a decreasing function of $d$ and 3) goes to $100 \%$ as we increase $\card(\cJ_0)$ to infinity while fix $\card(\cJ_c)/ \card(\cJ_0)$. This says that BP and OMP works better for subspace-sparse recovery with low subspace dimension relative to high ambient dimension and for densely sampled dictionary.

\subsection{Applications}

In section \ref{sec:sparse-recovery} we show that our results of subspace-sparse recovery can be applied to the analysis of the traditional sparse recovery problem. The results will be new conditions on a dictionary that can guarantee exact sparse recovery of any $s$-sparse vector by BP and OMP. We discuss how this condition can be computed, as well as its relation with the traditional mutual coherent condition. 

We then discuss in section \ref{sec:sparse-classification} the method of Sparse Representation based Classification (SRC) \cite{Wright:PAMI09}. This method was first proposed for the task of face image classification, in which one is given several aligned face images for each of the several subjects, and the task is to classify any query face image that belongs to one of these subjects. The rationale is that for a Lambertian object, the set of all images taken under varying lighting conditions can be well approximated by a low dimensional subspace. Thus, it is proposed in \cite{Wright:PAMI09} that one uses all the labeled images of all subjects as a dictionary and find a sparse representation of any query image using this dictionary, and the class label is assigned to the group that corresponds to the position of the nonzero entries. The method is generally viewed as an application of spare representation, but it lacks a theoretical justification and there has been discussions and doubts about its effectiveness \cite{Shi:CVPR11,Zhang:ICCV11,Deng:CVPR13,Rigamonti:CVPR11}. In this work, we analyze SRC from the perspective of subspace-sparse recovery, and provide an analysis for it based on our results.

\section{Background}
\label{sec:background}

The purpose of this section is to introduce background for understanding the main results of the paper. We first briefly review the OMP and BP methods for completeness. We then define geometric quantities for charactering the dictionary $\cA$ and talk about their basic properties.

\subsection{Algorithms}

OMP and BP are two methods for sparse recovery. For a dictionary $\cA$ and a signal $\b$, consider the problem

\[
\arg\min_{\x} \|\x\|_0 \st \A \x = \b
\]

OMP is a greedy method that sequentially chooses one dictionary atom in a locally optimal manner. It keeps track of a residual $\v_k$ at step $k$, initialized as the input signal $\b$, and a set $\cW_k$ that contains the atoms already chosen, initialized as the empty set. At each step, $\cW_k$ is updated to $\cW_{k+1}$ by adding the dictionary atom that has the maximum absolute inner product with $\v_k$. Then, $\v_k$ is updated to $\v_{k+1}$ by setting it to be the component of $\b$ that is orthogonal to the space spanned by atoms indexed by $\cW_{k+1}$. The process is terminated when a precise representation of $\b$ is established, i.e., when $\v_k = 0$ for some $k$. 

BP is a convex relaxation approach. The idea is to use the $\ell_1$ norm in lieu of the $\ell_0$ norm, i.e., solve for
\begin{equation}
	P(\cA, \b) := \arg\min_{\x} \|\x\|_1 \st \A \x = \b.
	\label{eq:def-P}
\end{equation}
It has the benefit that \eqref{eq:def-P} is convex and can be solved more efficiently. We will denote the objective value of $P(\cA, \b)$ by $p(\cA, \b)$, and by convention, $p(\cA, \b) = +\infty$ if the problem is infeasible. The dual of the above optimization program is
\begin{equation}
	D(\cA, \b) := \arg\max_\omega \langle \omega, \b \rangle \st \|\A ^\transpose \omega\|_{\infty} \le 1.
	\label{eq:def-D}
\end{equation}
Let $d(\cA, \b)$ be the objective value of the dual problem $D(\cA, \b)$. If the primal problem is feasible, then strong duality holds, i.e., $p(\cA, \b) = d(\cA, \b)$.

\subsection{Sphere and spherical distance}

The spherical distance is defined as the angle between two points in a space $\Re^p \backslash \{\0\}$.
\begin{definition}[Spherical distance]
	The spherical distance $s(\v, \w)$ of two points $\v, \w \in \Re^p \setminus \{\0\}$ is defined as
	\[s(\v, \w):= \cos^{-1} \langle \frac{\v}{\|\v\|_2}, \frac{\w}{\|\w\|_2} \rangle.\]
\end{definition}

The spherical distance is in the range of $[0, \pi]$. For notational convenience, we allow one or both operands of $s(\cdot, \cdot)$ to be sets, in which case the spherical distance is taken to be the infimum of all pairs of points, i.e., for any $\cV \subseteq \Re ^p, \cW \subseteq \Re ^p$,
\[
	s(\cV, \cW) := \inf_{\v \in \cV \setminus \{\0\}}\inf_{\w \in \cW \setminus \{\0\}} s(\v, \w).
\]

Let $\Sp ^{p-1} := \{\v\in \Re^p: \|\v\|_2 = 1\}$ be the set of unit vectors in $\Re ^p$. It is known that $s(\cdot, \cdot)$ defines a metric on $\Sp ^{p-1}$ \cite{Burago:2001}. 

\subsection{Geometric characterization of the dictionary}
\label{sec:def}

The deterministic subspace-sparse recovery conditions rely on geometric properties of the dictionary $\cA$ that characterize the distribution of the atoms in $\cA_0$ and the separation between atoms in $\cA_0$ and $\cA_c$. We first introduce the concept of covering radius. 

\begin{definition}[Covering radius]
	Given the space $\Sp^{p-1}$ with metric $s(\cdot, \cdot)$, the (relative\footnotemark[1]) covering radius of a set of points $\cV \subseteq \Sp ^{p-1}$ is defined as
	\[
	%		\gamma(\cX):= \max_{\substack{\y\in \spann(\cX) \\ \y \in \Sp^{D-1}}} \min_{\x \in \cX} s(\x, \y).
	\gamma(\cV):= \max \{ s(\cV, \w): \w\in \spann(\cV) \cap \Sp^{p-1} \}.
	\]
\end{definition}

Intuitively, given a set of points $\cV$, we find a point on the unit sphere of $\spann(\cV)$ that is furthest away from all the points in $\cV$. The name of covering radius also suggests another interpretation, that is, it is the smallest radius such that closed balls of that radius centered at the points of $\cV$ covers all points in $\Sp ^{p-1} \cap \spann(\cV)$. Thus, this concept characterizes how well the points in $\cV$ are distributed, without leaving a large patch of empty region unfilled by any point.

Using this concept, the distribution of the atoms $\cA_0$ is characterized by the covering radius of the set of symmetrized points $\pm\cA_0 := \{\pm\a_i, i \in \cJ_0\}$. We will use the simplified notation $\gamma_0 := \gamma(\pm\cA_0)$. Intuitively, if $\gamma_0$ is small, then there are enough sample points in subspace $\cS_0$, and it should be expected that subspace-sparse recovery should be easier.

Denote $\cK_0 := \conv(\pm \cA_0)$, where $\conv(\cdot)$ is the convex hull of a set of points. It can be identified as a symmetric convex body defined below. 

\begin{definition}[Symmetric convex body]
	A convex set $\cP$ that satisfies $\cP = -\cP$ is called symmetric.
	A compact convex set with nonempty interior is called a convex body.
\end{definition}

\begin{definition}[Polar Set]
	The (relative\footnotemark[1]) polar of a set $\cP$ is defined as $\cP^o = \{\v \in \spann(\cP): \langle\v, \w\rangle \le 1$, $\forall \w \in \cP\}$.
	\label{def:polar-set}
\end{definition}
\footnotetext[1]{It is more convenience to work with the relative quantities in covering radius and polar set since the data $\cA_0$ are in a subspace.}

By this definition, the polar set of $\cK_0$ is given by $\cK_0^o := \{\v \in \cS_0: |\langle \v, \a_i \rangle| \le 1, \forall i \in \cJ_0\}$. Specifically, $\cK_0 ^o$ is also a symmetric convex body, as the polar of a convex body is also a convex body \cite{Brazitikos:14}. 

A subset of the points in $\cK_0^o$ will play a critical role. 

\begin{definition}[Extreme Point]
	A point $\v$ in a convex set $\cP$ is an extreme point if it cannot be expressed as a strict convex combination of two other points in $\cP$, i.e., there are no $\lambda \in (0, 1)$, $\v_1, \v_2 \in \cP$, $\v_1 \neq \v_2$, such that $\v = (1-\lambda)\v_1 + \lambda \v_2$. 
\end{definition}

\begin{definition}[Dual Point]
	The set of dual points of the set $\cA_0$, denoted by $\cD_0$, is defined as the set of extreme points of the set $\cK_0^o$.
	\label{def:dual-point}
\end{definition}

A geometric illustration of some of the definitions is provided in Figure \ref{fig:geometry-2D}. In the following, we discuss some relevant properties for understanding of the concepts and for later use.

%\begin{figure}[!t]
%	\centering
%	\includegraphics[width=1\linewidth,clip=true,trim=50 30 50 10]{Draw1}
%	\caption[]{Illustration of inlier characterization. Dictionary atoms are $\{\a_i\}_{i=1}^3$ that lie on the unit circle (drawn in black) of a two-dimensional subspace. $\cK_0$ and its inradius are illustrated in green. The polar set $\cK_0^o$ and its circumradius are illustrated in blue. The six blue dots are the dual points.}
%	\label{fig:dualpoints0}
%\end{figure}

\begin{figure*}[t]
	\centering
	\def\nPoint{5}
	\def\primalAngle
	{{21.60, 57.60, 86.40, 126.00, 158.40, 201.60}}
	\def\dualRadius
	{{1.05, 1.03, 1.06, 1.04, 1.08, 1.05}}
	\def\dualAngle
	{{39.60, 72.00, 106.20, 142.20, 180.00, 219.60}}
	\subfigure[Geometry in 2D]
	{
		\begin{tikzpicture}[scale = 2.5]
		\coordinate (0) at (0,0);
		% w
		\def\wAngle{0};
		\def\wRefAngle{\primalAngle[0]};
		
		% subspace
		\def\Sx{1.2}         
		\def\Sy{1.2}
		\filldraw[draw=none,fill=gray!20, opacity=0.2] 
		(-\Sx,-\Sy) -- (-\Sx, \Sy) -- (\Sx, \Sy) -- (\Sx, -\Sy) -- cycle;
		\node[black] at (1.1, 1.1, 0) {$\cS_0$};
		% circle
		\draw[black, fill = none] (0) circle [radius = 1]; 
		%	\draw[solid, pRgnColor] (0) ([shift=(-120:1cm)] 0, 0) arc (-120:120:1cm);
		% origin
		\node[left, black] at (0) {O}; 
		\draw [black, fill=black] (0) circle [radius=0.02];
		% primal points
		\foreach \i in {1, 2, ..., \nPoint}
		{
			\def\angle{\primalAngle[\i-1]}
			% primal points
			\draw [pPntColor, fill=pPntColor] (cos \angle, sin \angle) circle [radius=0.02];
			\draw [pPntColor, fill=pPntColor] (-cos \angle, -sin \angle) circle [radius=0.02];
			\draw [dotted, pPntColor] (cos \angle, sin \angle) -- (-cos \angle, -sin \angle);
			% label
			\node[above, pPntColor] at (cos \angle, sin \angle) {$\a_\i$}; 
			\node[above, pPntColor] at (-cos \angle, -sin \angle) {$-\a_\i$}; 
		}
		% dual points
		\foreach \i in {1, 2, ..., \nPoint}
		{
			\def\angle{\dualAngle[\i-1]}
			\def\radius{\dualRadius[\i-1]}
			\draw [dPntColor, fill=dPntColor] (\radius * cos \angle, \radius * sin \angle) circle [radius=0.03];
			\draw [dPntColor, fill=dPntColor] (-1 * \radius * cos \angle, -1 * \radius * sin \angle) circle [radius=0.03];
		}
		% K
		\foreach \i in {1, 2, ..., \nPoint}
		{
			\draw[solid, pPntColor] (cos \primalAngle[\i-1], sin \primalAngle[\i-1]) -- (cos \primalAngle[\i], sin \primalAngle[\i]); 
			\draw[solid, pPntColor] (-cos \primalAngle[\i-1], -sin \primalAngle[\i-1]) -- (-cos \primalAngle[\i], -sin \primalAngle[\i]); 
		}
		\node[right, pPntColor] at (-0.9, 0.1) {$\cK_0$};
		% K^o
		\foreach \i in {1, 2, ..., \nPoint}
		{
			\draw[solid, dPntColor] (\dualRadius[\i-1] * cos \dualAngle[\i-1], \dualRadius[\i-1] * sin \dualAngle[\i-1]) -- (\dualRadius[\i] * cos \dualAngle[\i], \dualRadius[\i] * sin \dualAngle[\i]); 
			\draw[solid, dPntColor] (-1 * \dualRadius[\i-1] * cos \dualAngle[\i-1], -1 * \dualRadius[\i-1] * sin \dualAngle[\i-1]) -- (-1 * \dualRadius[\i] * cos \dualAngle[\i], -1 * \dualRadius[\i] * sin \dualAngle[\i]); 
		}
		\node[above, dPntColor] at (-1.1, 0.1) {$\cK_0^o$};
		% w
		\node[above left, cColor] at (cos \wAngle, sin \wAngle) {$\w$};
		\draw [cColor, fill=cColor] (cos \wAngle, sin \wAngle) circle [radius=0.02];
		\draw[dotted, cColor] (0) -- (cos \wAngle, sin \wAngle);
		% gamma
		\draw[solid, cColor] (0) ([shift=(\wAngle:0.2cm)] 0, 0) arc (\wAngle:\wRefAngle:0.2cm);
		\node[right, cColor] at (0.18, 0.07) {$\gamma_0$};
		\end{tikzpicture}
		\label{fig:geometry-2D}
	}
	~
	\subfigure[Geometry in 3D]
	{
		\tdplotsetmaincoords{70}{10}
		\begin{tikzpicture}[tdplot_main_coords, scale = 3]%[x = (-30:3cm), y = (30:3cm), z = (90:3cm)]
		\def\R{1}
		\def\Gamma{21.6}
		
		% Solid cooridinate lines
		%	\draw (-\R,0,0) -- (\R,0,0);
		%	\draw (0,-\R,0) -- (0,\R,0);
		%	\draw (0,0,0) -- (0,0,\R);
		%	\draw (0,0,0) -- (0,0,-\R);
		% Origin
		\draw [black, fill=black] plot [mark=*, mark size=0.6] coordinates{(0, 0, 0)};
		% Solid x-y circle
		\draw plot [domain = 0:360, samples = 90, variable = \i]
		(\R*cos \i, \R*sin \i, 0) -- cycle;
		% Dotted circle
		\foreach \i in {0, 30,...,150}
		\draw [dotted] plot [domain = 0:360, samples = 60, variable = \j]
		(\R*cos \i*sin \j,\R*sin \i*sin \j, \R*cos \j);
		\foreach \j in {30, 60, ..., 150}
		\draw [dotted] plot [domain = 0:360, samples = 60, variable = \i]
		(\R*cos \i*sin \j,\R*sin \i*sin \j, \R*cos \j);
		% Draw subspace
		\def\Sx{1.3}         
		\def\Sy{1.7}
		\filldraw[draw=none,fill=gray!20, opacity=0.2] 
		(-\Sx,-\Sy,0) -- (-\Sx, \Sy, 0) -- (\Sx, \Sy, 0) -- (\Sx, -\Sy, 0) -- cycle;
		\node[black] at (1.1, 1.1, 0) {$\cS_0$};
		
		% Draw DRC
		\foreach \j in {1, 2, ..., \nPoint}
		{
			\def\angle{\dualAngle[\j-1]}
			\draw [dRgnColor, fill = dRgnColor, fill opacity = 0.5]
			plot [domain = 0:360, samples = 40, variable = \i] 
			(cos \angle * cos \Gamma - sin \angle * sin \Gamma * cos \i,
			sin \angle * cos \Gamma + cos \angle * sin \Gamma * cos \i,
			sin \Gamma * sin \i) -- cycle;
			\draw [dRgnColor, fill = dRgnColor, fill opacity = 0.5]
			plot [domain = 0:360, samples = 40, variable = \i] 
			(- cos \angle * cos \Gamma + sin \angle * sin \Gamma * cos \i,
			- sin \angle * cos \Gamma - cos \angle * sin \Gamma * cos \i,
			sin \Gamma * sin \i) -- cycle;
		}
		% Draw PRC
		\foreach \j in {68.4, 111.6}
		{
			\draw [pRgnColor,very thick] plot [domain = 0:360, samples = 60, variable = \i]
			(\R*cos \i*sin \j,\R*sin \i*sin \j, \R*cos \j);
		}
		\foreach \j in {72, 75.6, 79.2, 82.8, 86.4, 93.6, 97.2, 100.8, 104.4, 108}
		{
			\draw [pRgnColor,dashed,very thick] plot [domain = 0:360, samples = 60, variable = \i]
			(\R*cos \i*sin \j,\R*sin \i*sin \j, \R*cos \j);
		}
		% primal points
		\foreach \j in {1, 2, ..., \nPoint}
		{
			\def\angle{\primalAngle[\j-1]}
			% primal points
			\draw [pPntColor, fill=pPntColor] plot [mark=*, mark size=0.6] coordinates{(cos \angle, sin \angle, 0)};
			\draw [pPntColor, fill=pPntColor] plot [mark=*, mark size=0.6] coordinates{(-cos \angle, -sin \angle, 0)};
			%			\draw [dotted, pPntColor] (cos \angle, sin \angle, 0) -- (-cos \angle, -sin \angle, 0);
		}
		% dual points
		\foreach \j in {1, 2, ..., \nPoint}
		{
			\def\angle{\dualAngle[\j-1]}
			\def\radius{\dualRadius[\j-1]}
			\draw [dPntColor, fill=dPntColor] plot [mark=*, mark size=0.6] coordinates{(\radius * cos \angle, \radius * sin \angle, 0)};
			\draw [dPntColor, fill=dPntColor] plot [mark=*, mark size=0.6] coordinates{(-1*\radius * cos \angle, -1*\radius * sin \angle, 0)};
		}
		\foreach \j in {1, 2, ..., \nPoint}
		{
			\draw[solid, pPntColor] (cos \primalAngle[\j-1], sin \primalAngle[\j-1], 0) -- (cos \primalAngle[\j], sin \primalAngle[\j], 0); 
			\draw[solid, pPntColor] (-cos \primalAngle[\j-1], -sin \primalAngle[\j-1], 0) -- (-cos \primalAngle[\j], -sin \primalAngle[\j], 0); 
		}
		% K^o
		\foreach \j in {1, 2, ..., \nPoint}
		{
			\draw[solid, dPntColor] (\dualRadius[\j-1] * cos \dualAngle[\j-1], \dualRadius[\j-1] * sin \dualAngle[\j-1], 0) -- (\dualRadius[\j] * cos \dualAngle[\j], \dualRadius[\j] * sin \dualAngle[\j], 0); 
			\draw[solid, dPntColor] (-1 * \dualRadius[\j-1] * cos \dualAngle[\j-1], -1 * \dualRadius[\j-1] * sin \dualAngle[\j-1], 0) -- (-1 * \dualRadius[\j] * cos \dualAngle[\j], -1 * \dualRadius[\j] * sin \dualAngle[\j], 0); 
		}
%		% outlier points
%		\foreach \j/\i in {30/30, 20/120, 70/300, 75/200, 120/35, 170/330, 150/120, 160/45}
%		{
%			\draw [oPntColor, fill=oPntColor] plot [mark=*, mark size=0.6] coordinates{(cos \i*sin \j,sin \i*sin \j, cos \j)};
%			\draw [dashed, oPntColor] (0,0,0) -- (cos \i*sin \j,sin \i*sin \j, cos \j);
%		}
		% gamma
		% line
		\draw[dashed, cColor] (0,0,0) -- (cos \dualAngle[3], sin \dualAngle[3], 0);
		\draw[dashed, cColor] (0,0,0) -- (cos \Gamma * cos \dualAngle[3], cos \Gamma * sin \dualAngle[3], sin \Gamma);
		% node
		\draw [cColor, fill=cColor] plot [mark=*, mark size=0.6] coordinates{(cos \dualAngle[3], sin \dualAngle[3], 0)};
		\draw [cColor, fill=cColor] plot [mark=*, mark size=0.6] coordinates{(cos \Gamma * cos \dualAngle[3], cos \Gamma * sin \dualAngle[3], sin \Gamma)};
		% arc
		\draw [cColor] plot [domain = 0:\Gamma, samples = 2, variable = \j]
		(0.3* cos \dualAngle[3] * cos \j, 0.3 * sin \dualAngle[3] * cos \j, 0.3* sin \j);
		% gamma
		\node[left, cColor] at (0.3* cos \dualAngle[3] * cos \Gamma, 0.3 * sin \dualAngle[3] * cos \Gamma, 0.3* sin \Gamma) {$\gamma_0$};
		\end{tikzpicture}
		\label{fig:geometry-3D}
	}
	\caption{Illustration of the geometry of subspace-sparse recovery. Dictionary atoms are $\cA_0:=\{\a_j\}_{j=1}^5$ (drawn in blue) that lie on the unit circle (drawn in black) of a two-dimensional subspace $\cS_0$. Left: illustration of definitions for characterizing $\cA_0$, where the red dots are the dual points. Right: illustration of the geometry of PRC and the DRC, see text for details.  }
\end{figure*}
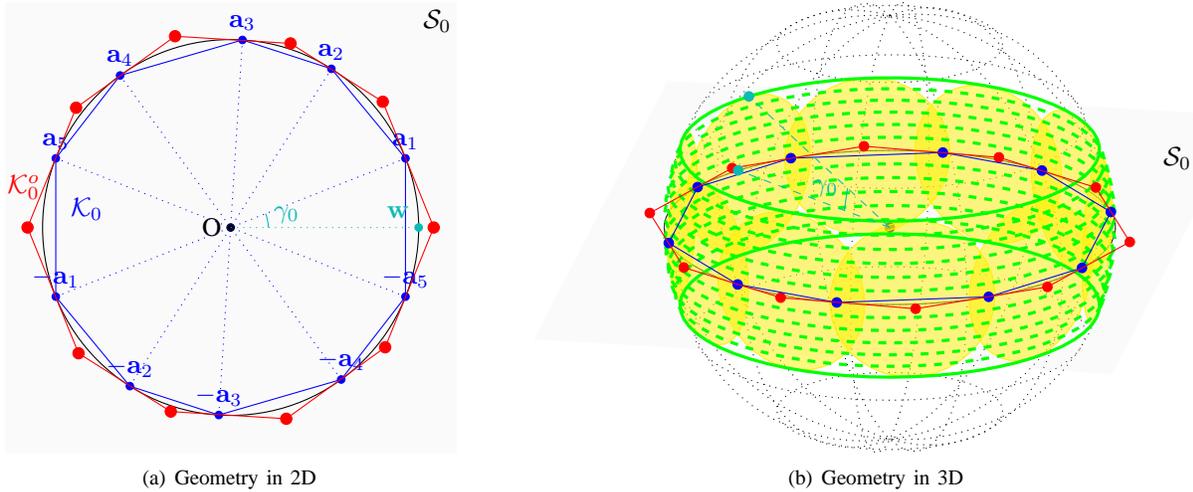

The following result shows that the set $\cK_0^o$ is bounded in terms of the covering radius $\gamma_0$. The intuitive justification is that if $\gamma_0$ is small, then the points $\cA_0$ are dense on the unit sphere, so the polar set $\cK_0^o$ should be smaller. 
\begin{lemma}
	Given $\cA_0$, assume that $\|\a_i\|_2 = 1, \forall i \in \cJ_0$. 
	%If $\v \in \cK_0^o$, then $\|\v\|_2 \le 1/\cos \gamma_0$, and the equality can be attained by some $\v$.
	It has $\max \{ \|\v\|_2: \v \in \cK_0^o \} = 1/\cos \gamma_0$.
	\label{thm:bound-polar}
\end{lemma}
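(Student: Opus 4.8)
The plan is to show that both sides of the claimed identity are governed by a single min--max quantity over unit directions in $\cS_0$. Define
\[
m := \min_{\w \in \cS_0 \cap \Sp^{D-1}} \; \max_{i \in \cJ_0} |\langle \a_i, \w \rangle|.
\]
I would then establish separately that $\cos\gamma_0 = m$ and that $\max\{ \|\v\|_2 : \v \in \cK_0^o\} = 1/m$, from which the lemma is immediate.

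First, to identify $\cos\gamma_0$ with $m$, I would unwind the definition of the covering radius applied to the symmetrized set $\pm\cA_0$, noting that $\spann(\pm\cA_0) = \cS_0$. For a fixed unit vector $\w \in \cS_0$, the relation $s(-\a_i, \w) = \pi - s(\a_i, \w)$ together with the fact that $\cos^{-1}$ is decreasing shows that the nearest symmetrized atom satisfies
\[
s(\pm\cA_0, \w) = \min_{i \in \cJ_0} \cos^{-1}|\langle \a_i, \w \rangle| = \cos^{-1}\!\Big(\max_{i \in \cJ_0} |\langle \a_i, \w\rangle|\Big),
\]
where the assumption $\|\a_i\|_2 = 1$ (and $\|\w\|_2 = 1$) lets me drop the normalizations inside the inner products. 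Taking the maximum over $\w$ and again using that $\cos^{-1}$ is decreasing converts the outer maximum into an inner minimum, giving $\gamma_0 = \cos^{-1}(m)$, i.e. $\cos\gamma_0 = m$.

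Second, for the polar set I would use a radial parametrization. Recall that $\cK_0^o = \{\v \in \cS_0 : \max_i |\langle \v, \a_i\rangle| \le 1\}$. Writing a nonzero $\v \in \cS_0$ as $\v = r\w$ with $r = \|\v\|_2$ and $\w$ a unit vector, the defining constraint becomes $r \cdot \max_i |\langle \a_i, \w\rangle| \le 1$, so the largest admissible norm in direction $\w$ is exactly $1/\max_i|\langle\a_i,\w\rangle|$. Maximizing over $\w$ then yields $\max\{\|\v\|_2 : \v \in \cK_0^o\} = 1/\min_\w \max_i|\langle\a_i,\w\rangle| = 1/m$, and combining with $\cos\gamma_0 = m$ finishes the argument.

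The only point requiring care, and the one I would treat as the main obstacle, is to verify that $m > 0$, so that $1/m$ is finite and the supremum over $\cK_0^o$ is genuinely attained. This holds because $\w \mapsto \max_i |\langle \a_i, \w\rangle|$ is continuous and strictly positive on the compact unit sphere $\cS_0 \cap \Sp^{D-1}$: if it vanished at some $\w$, then $\w$ would be orthogonal to every $\a_i$, contradicting $\w \in \cS_0 = \spann(\cA_0)$. Hence the infimum is a positive minimum, $\gamma_0 < \pi/2$, and every reciprocal above is well defined.
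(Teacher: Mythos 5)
Your proof is correct and takes essentially the same route as the paper, which simply rewrites the claim as $\max_{\|\A_0^\transpose\v\|_\infty\le 1}\|\v\|_2 = 1/\min_{\|\v\|_2=1}\|\A_0^\transpose\v\|_\infty$ (with $\v$ ranging over $\cS_0$) and declares it evident. Your min--max quantity $m$ is exactly that right-hand-side minimum, and your radial parametrization and positivity check just supply the details the paper leaves implicit.
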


The following result shows that the dual set $\cD_0$ is finite. Essentially, the dual set is composed of the vertices of the polar set $\cK_0^o$.

\begin{lemma}
	Given any $\cA_0$, the set $\cD_0$ is finite. Specifically, 
	\begin{equation}
		\card (\cD_0) \le 2^{d_0} \cdot \binom{s_0}{d_0},
	\end{equation}
	in which $s_0 = \card (\cA_0)$, $d_0 = \dim (\cS_0)$.
	\label{thm:dual-finite}
\end{lemma}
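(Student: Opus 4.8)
The plan is to exploit the fact that $\cK_0^o$ is a polytope, so that its extreme points are exactly its vertices and can be enumerated by the standard active-constraint characterization. First I would observe that, by Definition~\ref{def:polar-set}, $\cK_0^o = \{\v \in \cS_0 : |\langle \v, \a_i\rangle| \le 1,\ \forall i \in \cJ_0\}$ is the intersection, taken inside the $d_0$-dimensional space $\cS_0$, of the $2 s_0$ closed half-spaces $\{\v \in \cS_0 : \langle \v, \a_i\rangle \le 1\}$ and $\{\v \in \cS_0 : \langle \v, -\a_i\rangle \le 1\}$ for $i \in \cJ_0$. By Lemma~\ref{thm:bound-polar}, $\cK_0^o$ is contained in a ball of radius $1/\cos\gamma_0 < \infty$, hence it is a bounded polyhedron, i.e., a polytope in $\cS_0$. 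Consequently its set of extreme points $\cD_0$ coincides with its (finite) set of vertices, and it remains only to bound their number.

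Second, I would invoke the standard vertex characterization: a point $\v \in \cK_0^o$ is a vertex if and only if the normals of the constraints active at $\v$ span all of $\cS_0$; equivalently, $\v$ is the unique solution in $\cS_0$ of some subsystem of $d_0$ active constraints whose normal vectors are linearly independent. The active constraints take the form $\langle \v, \a_i\rangle = 1$ or $\langle \v, \a_i\rangle = -1$, which I would write uniformly as $\langle \v, \a_i\rangle = \epsilon$ with $\epsilon \in \{+1,-1\}$ and normal vector $\pm\a_i$.

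Third comes the counting. The key observation is that the two constraints associated with a single index $i$ have parallel normals $\a_i$ and $-\a_i$ and cannot be active simultaneously (they would force $1 = -1$); hence any collection of $d_0$ active constraints with linearly independent normals must come from $d_0$ \emph{distinct} indices $i_1, \dots, i_{d_0} \in \cJ_0$ together with a sign $\epsilon_k \in \{+1,-1\}$ for each. There are $\binom{s_0}{d_0}$ ways to choose the indices and $2^{d_0}$ ways to choose the signs, giving at most $2^{d_0}\binom{s_0}{d_0}$ such subsystems, each of which determines at most one point of $\cS_0$. Assigning to each vertex one admissible linearly independent subsystem for which it is the unique solution defines an injection from $\cD_0$ into this collection, since two vertices sharing such a subsystem would have to coincide; this yields $\card(\cD_0) \le 2^{d_0}\binom{s_0}{d_0}$.

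The main obstacle I anticipate is the bookkeeping that produces the sharp factor $2^{d_0}$ rather than a crude $\binom{2 s_0}{d_0}$: one must argue that each $\pm$ pair sharing an index contributes only a single independent direction, so that a vertex is pinned down by $d_0$ distinct atoms each carrying an independent sign choice. A secondary point requiring care is the injectivity of the vertex-to-subsystem assignment at degenerate vertices, where more than $d_0$ constraints are active; this is handled by selecting, for each vertex, any one linearly independent $d_0$-subsystem and noting that distinct vertices, having distinct coordinates, cannot both be the unique solution of the same subsystem.
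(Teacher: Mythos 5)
Your proposal is correct and follows essentially the same route as the paper: identify the extreme points of the polytope $\cK_0^o$ with basic feasible solutions (vertices) determined by $d_0$ linearly independent active constraints drawn from the $2s_0$ inequalities $|\langle \v, \a_i\rangle| \le 1$, and count the possible subsystems. Your counting is in fact more careful than the paper's, which loosely states "at most $2^{s_0}\cdot\binom{s_0}{d_0}$ ways" in the proof body; your observation that the two constraints sharing an index have parallel normals and hence cannot both appear in a linearly independent subsystem is exactly what is needed to justify the sharper factor $2^{d_0}$ claimed in the lemma statement.
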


Moreover, all points in $\cK_0^o$ are convex combinations of these finitely many dual points in $\cD_0$. This is implied by the following stronger result.

\begin{lemma}[\cite{Brazitikos:14}]
	The set of the extreme points of a convex body $\cP$ is the smallest subset of $\cP$ with convex hull $\cP$.
	\label{thm:dual-convexhull}
\end{lemma}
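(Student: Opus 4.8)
The plan is to split the statement into two claims: an \emph{existence} claim that $\conv(\mathrm{ext}(\cP)) = \cP$, where $\mathrm{ext}(\cP)$ denotes the set of extreme points of $\cP$, and a \emph{minimality} claim that every $\cV \subseteq \cP$ with $\conv(\cV) = \cP$ must contain $\mathrm{ext}(\cP)$. Together these say exactly that $\mathrm{ext}(\cP)$ is the smallest subset of $\cP$ whose convex hull is $\cP$, which is the assertion of the lemma.

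For the existence claim I would prove Minkowski's theorem by induction on $\dim(\cP)$ (the dimension of $\spann(\cP)$). The base case $\dim(\cP) = 0$ is immediate, since then $\cP$ is a single point, which is its own unique extreme point. For the inductive step, take an arbitrary $\v \in \cP$. If $\v$ lies on the relative boundary of $\cP$, then by the supporting hyperplane theorem there is a hyperplane $H$ in $\spann(\cP)$ that supports $\cP$ at $\v$; the set $\cP \cap H$ is then a convex body of strictly smaller dimension, so by the inductive hypothesis $\v$ is a convex combination of extreme points of $\cP \cap H$, each of which is also an extreme point of $\cP$. If instead $\v$ lies in the relative interior, I would draw any line through $\v$ inside $\spann(\cP)$; by compactness of $\cP$ this line meets the relative boundary at two points $\v_1, \v_2$, each a convex combination of extreme points by the boundary case, and $\v$ is a convex combination of $\v_1$ and $\v_2$. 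This gives $\cP \subseteq \conv(\mathrm{ext}(\cP))$, and the reverse inclusion is trivial because $\mathrm{ext}(\cP) \subseteq \cP$ and $\cP$ is convex.

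For the minimality claim, suppose $\cV \subseteq \cP$ satisfies $\conv(\cV) = \cP$, and let $\v$ be any extreme point of $\cP$. Since $\v \in \cP = \conv(\cV)$, we may write $\v = \sum_i \lambda_i \w_i$ with distinct $\w_i \in \cV$, $\lambda_i > 0$, and $\sum_i \lambda_i = 1$. If at least two coefficients were positive, then grouping one term against the rest would express $\v$ as $\lambda_1 \w_1 + (1-\lambda_1)\u$ with $\u \in \cP$ a convex combination of the remaining $\w_i$, exhibiting $\v$ as a strict convex combination of two distinct points of $\cP$ and contradicting extremality. Hence exactly one coefficient equals $1$ and $\v = \w_i \in \cV$, so $\mathrm{ext}(\cP) \subseteq \cV$, as claimed.

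The main obstacle is the existence claim, and specifically two supporting details inside the induction: first, that a relative boundary point of a convex body admits a supporting hyperplane within $\spann(\cP)$; and second, that every extreme point of the lower-dimensional face $\cP \cap H$ is genuinely extreme in $\cP$. The latter needs a short argument: if a point of $\cP \cap H$ were a strict convex combination of two points of $\cP$, then because $H$ supports $\cP$ both of those points would be forced to lie in $H$, contradicting extremality within the face. By contrast, the minimality claim is an essentially immediate consequence of the definition of an extreme point once a convex representation of $\v$ is available, so little difficulty is expected there.
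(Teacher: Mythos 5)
The paper does not actually prove this lemma: it is imported wholesale from the convex-geometry reference \cite{Brazitikos:14} (it is the finite-dimensional Minkowski/Krein--Milman theorem together with the easy minimality observation), so there is no in-paper argument to compare yours against. Your self-contained proof is the standard one and is correct: the induction on dimension with supporting hyperplanes for the existence half, including the needed sub-claim that extreme points of the face $\cP \cap H$ are extreme in $\cP$ (which you justify correctly via the supporting functional forcing both endpoints of any strict convex combination into $H$), and the direct argument for minimality. Two cosmetic points. First, in the induction you should work with the affine hull and relative interior rather than $\spann(\cP)$, since the face $\cP \cap H$ need not contain the origin (in the paper's application $\cK_0^o$ is symmetric so this is harmless, but the induction passes through non-symmetric faces); likewise $\cP\cap H$ is a compact convex set of lower dimension rather than a ``convex body'' in the paper's sense, so the induction hypothesis should be phrased for compact convex sets. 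Second, in the minimality step, writing $\v = \lambda_1\w_1 + (1-\lambda_1)\u$ only contradicts extremality when $\w_1 \neq \u$; in the degenerate case $\w_1 = \u$ one gets $\v = \w_1 \in \cV$ directly, so the conclusion $\mathrm{ext}(\cP)\subseteq\cV$ still holds, but the sentence ``hence exactly one coefficient equals $1$'' skips that case. Neither issue affects the validity of the argument.
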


\section{Subspace-Sparse Recovery: \\Deterministic Result}
\label{sec:subspace-sparse-deterministic}

In this section, we discuss the theories of subspace-sparse recovery. We start by formally introducing and highlighting the two conditions, PRC and DRC, for guaranteeing the correctness of both OMP and BP for subspace-sparse recovery, then go into details the study of BP and OMP separately.

\begin{figure*}[t]
	\centering
	\begin{tikzpicture}
	\node[box] (1) at(7, 1) {$\forall \b \in \cS_0, \OMP(\cA, \b)$ is subspace-sparse};
	\node[box] (2) at(7, 2.25) {Equivalent condition: $\forall \b \in \cS_0 \setminus \{0\}, s(\cA_0, \{\pm\b\}) < s(\cA_c, \{\pm\b\})$};
	\node[box] (3) at(1, 3.5) {PRC: $\gamma_0 < s(\cA_c, \cS_0)$};
	\node[box] (4) at(7, 3.5) {DRC: $\gamma_0 < s(\cA_c, \cD_0)$};
	\node[box] (5) at(13, 3.5) {$\|\A_c ^\transpose \v\|_\infty < 1, \forall \v \in \cD_0$};
	\node[box] (6) at(7, 4.75) {Equivalent condition: $\forall \b \in \cS_0 \setminus \{0\}, p(\cA_0, \b) < p(\cA_c, \b)$};
	\node[box] (7) at(7, 6) {$\forall \b \in \cS_0, \BP(\cA, \b)$ is subspace-sparse};
	\draw[implies-implies, double equal sign distance] (1) -- (2);
	\draw[       -implies, double equal sign distance] (3) -- (2);
	\draw[       -implies, double equal sign distance] (3) -- (4);
	\draw[       -implies, double equal sign distance] (4) -- (5);
	\draw[       -implies, double equal sign distance] (5) -- (2);
	\draw[       -implies, double equal sign distance] (4) -- (2);
	\draw[       -implies, double equal sign distance] (4) -- (6);
	\draw[       -implies, double equal sign distance] (3) -- (6);
	\draw[       -implies, double equal sign distance] (5) -- (6);
	\draw[implies-implies, double equal sign distance] (6) -- (7);
	\end{tikzpicture}
	\caption{Summary of the results of subspace-sparse recovery with dictionary $\cA = \cA_0 \cup \cA_c$. Each box contains a proposition, and arrows denote implications. The topmost (resp., bottommost) box is the property of subspace-sparse recovery by BP (resp., OMP). Two major conditions for subspace-sparse recovery are the PRC and the DRC. }
	\label{fig:result-flowchart}
\end{figure*}
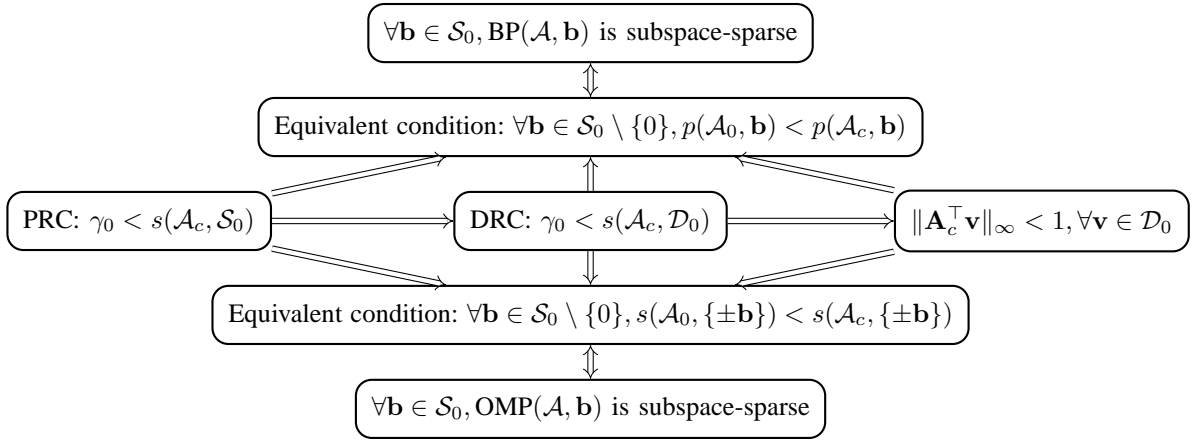

\subsection{Subspace-sparse recovery conditions}

Let $\BP(\cA, \b)$ and $\OMP(\cA, \b)$ be the (sets of) solutions given by the two algorithms. We present conditions under which the solutions $\BP(\cA, \b)$ and $\OMP(\cA, \b)$ are subspace-sparse for all the $\b$ in the subspace $\cS_0$. Concretely, we identify the following two conditions for our objective.

\begin{definition}
	A dictionary $\cA = \cA_0 \cup \cA_c$  is said to satisfy the principal subspace-sparse recovery condition (PRC) if
	\begin{equation}
		\gamma_0 < s(\cA_c, \cS_0),
		\label{eq:PRC}
	\end{equation} 
	in which $\gamma_0$ is the covering radius of $\pm \cA_0$ and $\cS_0$ is the span of $\cA_0$.
	It is said to satisfy the dual subspace-sparse recovery condition (DRC) if
	\begin{equation}
		\gamma_0 < s(\cA_c, \cD_0),
		\label{eq:DRC}
	\end{equation} 
	in which $\cD_0$ is the set of dual points of $\cA_0$.
\end{definition}

The results for subspace-sparse recovery are as follows.

\begin{theorem}
	If PRC is satisfied by a dictionary $\cA = \cA_0 \cup \cA_c$, then $\BP(\cA, \b)$ and $\OMP(\cA, \b)$ are both subspace-sparse for all $\b \in \cS_0$.
	\label{thm:PRC}
\end{theorem}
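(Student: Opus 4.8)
The plan is to treat OMP and BP separately, reducing each to an easily checkable consequence of the PRC. First I would record the two inequalities that the PRC packages. For any $\v\in\cS_0\setminus\{\0\}$, the definition of the covering radius gives $s(\cA_0,\{\pm\v\}) = s(\pm\cA_0,\v) \le \gamma_0$, while $\{\pm\v\}\subseteq\cS_0$ forces $s(\cA_c,\{\pm\v\})\ge s(\cA_c,\cS_0)$. Combined with \eqref{eq:PRC} these yield
\[
s(\cA_0,\{\pm\v\}) \le \gamma_0 < s(\cA_c,\cS_0) \le s(\cA_c,\{\pm\v\}), \quad \forall \v\in\cS_0\setminus\{\0\}.
\]
Since $\cos$ is decreasing on $[0,\pi/2]$ and the $\pm$ symmetrization keeps both angles in that range, this is exactly the statement that $\max_{j\in\cJ_0}|\langle\a_j,\v\rangle| > \max_{j\in\cJ_c}|\langle\a_j,\v\rangle|$ for every nonzero $\v\in\cS_0$.

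For OMP I would argue by induction that the residual $\v_k$ stays in $\cS_0$ and every selected atom lies in $\cA_0$. The base case holds since $\v_0=\b\in\cS_0$. If $\v_k\in\cS_0\setminus\{\0\}$, the displayed inequality shows the atom of maximal absolute correlation with $\v_k$ lies in $\cA_0$, so OMP selects from $\cJ_0$; the new residual, being the component of $\b\in\cS_0$ orthogonal to a subspace of $\cS_0$, remains in $\cS_0$. The process terminates at $\v_k=\0$ after at most $d_0$ steps, and the support of the returned representation is contained in $\cJ_0$.

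For BP I would build a dual certificate out of the dual points $\cD_0$. The key identity is that $p(\cA_0,\b)$ equals the gauge of $\cK_0=\conv(\pm\cA_0)$, which by polar duality equals the support function $\max_{\v\in\cK_0^o}\langle\v,\b\rangle$; since $\cK_0^o=\conv(\cD_0)$ by Lemma \ref{thm:dual-convexhull}, the maximum is attained at some $\v^*\in\cD_0$. I would then verify that $\v^*$ is feasible for the dual \eqref{eq:def-D} of the full problem: feasibility on $\cA_0$ is automatic from $\v^*\in\cK_0^o$, while strict feasibility on $\cA_c$, i.e. $|\langle\a_j,\v^*\rangle|<1$ for $j\in\cJ_c$, follows by combining the norm bound $\|\v^*\|_2\le 1/\cos\gamma_0$ of Lemma \ref{thm:bound-polar} with the angular separation $s(\cA_c,\cD_0) > \gamma_0$ (the DRC, implied by PRC because $\cD_0\subseteq\cS_0$). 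Weak duality together with $p(\cA,\b)\le p(\cA_0,\b)=\langle\v^*,\b\rangle$ shows $\v^*$ is dual-optimal and $p(\cA,\b)=p(\cA_0,\b)$. Complementary slackness then forces every primal-optimal $\x$ to vanish on indices $j$ with $|\langle\a_j,\v^*\rangle|<1$, in particular on all of $\cJ_c$, so every element of $\BP(\cA,\b)$ is subspace-sparse.

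The main obstacle is the BP half, and within it the step identifying the optimal dual variable with a dual point and certifying its strict feasibility on $\cA_c$. The gauge/support-function duality must be set up in the relative (in-subspace) sense so that the polar $\cK_0^o$ and its extreme points $\cD_0$ behave as claimed, and the strict inequality $|\langle\a_j,\v^*\rangle|<1$ must be extracted from two separate estimates—the diameter bound on $\cK_0^o$ and the angular gap in the DRC—whose product is exactly $1$, so I would carefully track which factor carries the strictness. Passing from the claim that \emph{some} optimal solution is subspace-sparse to the claim that \emph{all} of them are is what makes complementary slackness, rather than a mere exchange argument, the right tool here.
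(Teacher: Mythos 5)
Your proposal is correct, and the OMP half is essentially the paper's argument: you derive the chain $s(\cA_0,\{\pm\v\})\le\gamma_0<s(\cA_c,\cS_0)\le s(\cA_c,\{\pm\v\})$ and then run the same induction on the residual staying in $\cS_0$. The BP half, however, takes a genuinely different route. The paper's direct PRC proof never touches the dual points: it bounds $p(\cA_0,\b)\le\|\b\|_2/\cos\gamma_0$ (via Lemma \ref{thm:bound-polar} applied to the projected dual optimum) and $p(\cA_c,\b)\ge\|\b\|_2/\cos s(\cA_c,\cS_0)$ (via H\"older on $\b^\transpose\A_c\x^*$), and then invokes the equivalence of Theorem \ref{thm:equivalent-BP}, quoted from prior work. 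You instead route through the implication PRC $\Rightarrow$ DRC (valid, since $\cD_0\subseteq\cS_0$ gives $s(\cA_c,\cD_0)\ge s(\cA_c,\cS_0)$) and build an explicit dual certificate $\v^*\in\cD_0$ with $\|\A_c^\transpose\v^*\|_\infty<1$ — this is exactly the paper's DRC machinery (its lemma on $\|\A_c^\transpose\v\|_\infty$ and Theorem \ref{thm:BP-dual}) — but then, rather than citing the equivalence theorem, you finish with complementary slackness on the full program $P(\cA,\b)$, showing $\v^*$ is dual-optimal for $D(\cA,\b)$ and hence every primal optimizer vanishes where $|\langle\a_j,\v^*\rangle|<1$. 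What each approach buys: the paper's PRC proof is shorter and avoids the extreme-point/gauge machinery entirely, at the cost of leaning on an external equivalence result; yours is self-contained, handles the ``all optimal solutions'' issue directly through slackness rather than through the quoted equivalence, and in effect proves the stronger DRC theorem along the way (which the paper itself notes subsumes the PRC statement). Your attention to where the strict inequality enters — in the factor $\cos s(\cA_c,\cD_0)/\cos\gamma_0<1$ rather than in the norm bound, whose extremal value is exactly $1/\cos\gamma_0$ — is the right place to be careful, and your argument lands it correctly.
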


\begin{theorem}
	If DRC is satisfied by a dictionary $\cA = \cA_0 \cup \cA_c$, then $\BP(\cA, \b)$ and $\OMP(\cA, \b)$ are both subspace-sparse for all $\b \in \cS_0$.
	\label{thm:DRC} 
\end{theorem}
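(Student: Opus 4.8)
The plan is to prove the stronger DRC version by routing through the dual-certificate condition $\|\A_c^\transpose \v\|_\infty < 1$ for all $\v \in \cD_0$ (the box sitting between the DRC and the two equivalent conditions in Figure \ref{fig:result-flowchart}), and then to deduce the OMP and the BP conclusions separately from it. Since the DRC is implied by, but strictly weaker than, the PRC, I cannot simply invoke Theorem \ref{thm:PRC}; instead the dual points $\cD_0$ must be used to replace the infinitely many directions of $\cS_0$ by the finitely many extreme points of $\cK_0^o$.

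First I would show that the DRC implies $\|\A_c^\transpose \v\|_\infty < 1$ for every $\v \in \cD_0$. Fix $\v \in \cD_0 \subseteq \cK_0^o$ and $j \in \cJ_c$. Writing the inner product through the spherical distance, $|\langle \a_j, \v \rangle| = \|\v\|_2 \, |\cos s(\a_j, \v)|$; since $\cD_0$ is symmetric, the definition of $s(\cA_c, \cD_0)$ as an infimum over $\pm$ directions gives $|\cos s(\a_j, \v)| \le \cos s(\cA_c, \cD_0)$. Combining this with the norm bound $\|\v\|_2 \le 1/\cos\gamma_0$ from Lemma \ref{thm:bound-polar} and with the DRC $\gamma_0 < s(\cA_c, \cD_0)$ yields $|\langle \a_j, \v\rangle| \le \cos s(\cA_c, \cD_0)/\cos\gamma_0 < 1$. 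Because $\cK_0^o = \conv(\cD_0)$ by Lemma \ref{thm:dual-convexhull} and $\v \mapsto |\langle \a_j, \v\rangle|$ is convex, this bound extends to $|\langle \a_j, \v\rangle| < 1$ for all $\v \in \cK_0^o$ and all $j \in \cJ_c$.

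For the OMP conclusion I would establish the comparison $\max_{i \in \cJ_0} |\langle \a_i, \b\rangle| > \max_{j \in \cJ_c} |\langle \a_j, \b\rangle|$ for every $\b \in \cS_0 \setminus \{\0\}$, which is exactly the equivalent OMP condition rewritten through $s(\cdot,\cdot) = \cos^{-1}|\langle\cdot,\cdot\rangle|$. Setting $m_0(\b) := \max_{i\in\cJ_0}|\langle\a_i,\b\rangle|$, the vector $\v := \b/m_0(\b)$ satisfies $|\langle\a_i,\v\rangle| \le 1$ for all $i \in \cJ_0$, hence $\v \in \cK_0^o$; applying the extended bound from the previous step gives $\max_{j\in\cJ_c}|\langle\a_j,\b\rangle| < m_0(\b)$, as desired. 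Since $\b \in \cS_0$ and every selected atom lies in $\cJ_0$, the OMP residual stays in $\cS_0$ at every iteration, so by induction OMP only ever selects atoms of $\cA_0$ and its output is subspace-sparse.

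The BP conclusion is the delicate part and is where I expect the main obstacle. By LP duality the relative dual feasible set of $P(\cA_0, \b)$ is precisely $\cK_0^o$, so $p(\cA_0, \b) = \max_{\v \in \cK_0^o} \langle \v, \b\rangle$, and this linear functional attains its maximum at an extreme point $\v^* \in \cD_0$. If $\b \notin \spann(\cA_c)$ then $p(\cA_c, \b) = +\infty$ and there is nothing to prove; otherwise, taking an optimal $\cA_c$-representation $\x_c$ of $\b$ and using $\v^*$ as a certificate gives $p(\cA_0, \b) = \langle \v^*, \b\rangle \le \|\x_c\|_1 \|\A_c^\transpose \v^*\|_\infty < \|\x_c\|_1 = p(\cA_c, \b)$, which is the equivalent BP condition. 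Finally I would convert this into subspace-sparsity of $\BP(\cA, \b)$ by a splitting argument: given any optimal $\x = (\x_0, \x_c)$ with $\x_c \ne \0$, the point $\A_c \x_c = \b - \A_0 \x_0$ lies in $\cS_0$, so it admits a strictly cheaper $\cA_0$-representation, contradicting optimality of $\x$. The obstacles to watch are the careful bookkeeping of strict versus non-strict inequalities (so that $\cA_c$ atoms are never used rather than merely tied), the infeasibility convention $p(\cA_c,\b)=+\infty$, and the justification that the relative polar $\cK_0^o$ is exactly the dual feasible region.
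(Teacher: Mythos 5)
Your proposal is correct and follows essentially the same route as the paper: DRC $\Rightarrow$ $\|\A_c^\transpose \v\|_\infty < 1$ on $\cD_0$ via Lemma \ref{thm:bound-polar}, then OMP by rescaling $\b$ into $\cK_0^o$ and exploiting $\cK_0^o = \conv(\cD_0)$ (your $m_0(\b)$ is exactly the Minkowski functional $\|\b\|_{\cK_0^o}$ the paper uses), and BP by taking an extreme-point dual optimal solution in $\cD_0$ and invoking the equivalent condition of Theorem \ref{thm:equivalent-BP}. The only cosmetic differences are that you extend the dual bound to all of $\cK_0^o$ up front by convexity rather than inside the OMP argument, and you sketch the splitting proof of the BP equivalence that the paper cites from the literature.
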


As both theorems show, two major factors affect subspace-sparse recovery. The first is to have the atoms indexed by $\cJ_0$ to be well spread-out across the subspace $\cS_0$, as measured by the covering radius on the left hand side of \eqref{eq:PRC} and \eqref{eq:DRC}. The second factor is that the atoms in $\cA_{c}$ should not be too close to points in $\cS_0$ in the case of PRC or points in $\cD_0$ in the case of DRC. Furthermore, note that PRC requires atoms in $\cA_c$ to be away from \emph{all} points in the subspace $\cS_0$. The DRC, however, is a weaker requirement since it only needs atoms in $\cA_c$ to be away from $\cD_0$, a finite subset of $\cS_0$. Thus, Theorem \ref{thm:PRC} is implied by Theorem \ref{thm:DRC}.

Both PRC and DRC have clear geometric interpretations. Figure \ref{fig:geometry-3D} gives an illustration, in which we show the case of a two dimensional subspace $\cS_0$ in $\Re ^3$. Note that by our assumption, all the atoms of $\cA$ are on the unit sphere shown in the figure. The dictionary $\cA_0$ and the dual points $\cD_0$ are illustrated in blue and red, respectively, see also Figure \ref{fig:geometry-2D} for an illustration in the 2D plane of the subspace $\cS_0$. The two solid green circles have latitude $\pm \gamma_0$ on the unit sphere, they illustrate PRC: the PRC holds if and only if the atoms $\cA_c$ are such that they do not lie in the region enclosed by these two circles (i.e., they all have latitude larger than $\gamma_0$ or smaller than $-\gamma_0$). The DRC is illustrated by the yellow region which is composed of a union of the yellow circles in the space $\Sp ^2$. Each circle is centered at a normalized dual point (note the red dots illustrate the unnormalized dual points) with radius $\gamma_0$. It can be seen that the DRC holds if and only if no point from $\cA_c$ lies in the yellow region. This interpretation generalizes to any subspace dimension $d_0$ and ambient dimension $D$, in which case the PRC and DRC essentially give regions on the unit sphere $\Sp ^{D-1}$ for which the atoms in $\cA_c$ should not reside in. In section \ref{sec:subspace-sparse-randomized} we will revisit this geometric interpretation and analyze under a randomized model the parameters that affect the area of these regions.

The two deterministic results in Theorem \ref{thm:PRC} and \ref{thm:DRC}, alongside with some auxiliary results, are summarized in Figure \ref{fig:result-flowchart}. Each box contains a proposition, and the arrows denote implication relations. The topmost and the bottommost boxes are the properties of subspace-sparse recovery by BP and OMP that we are pursuing. Both of them are implied by the PRC and the DRC. In the following, we give proofs for Theorem \ref{thm:PRC} and \ref{thm:DRC} while at the same time discuss in more detail theories of subspace-sparse recovery by BP and OMP, respectively.

\subsection{Subspace-sparse recovery by BP}

We first establish an equivalent condition for subspace-sparse recovery from BP, then show that this condition is implied by PRC and DRC. See the upper half of Figure \ref{fig:result-flowchart} for an illustration.

\subsubsection{An equivalent condition} There is an equivalent condition for BP to give subspace-sparse solutions. The result appears in the context of subspace clustering \cite{Elhamifar:TPAMI13} and we rephrase the result here for our problem and omit the proof.
\begin{theorem}
	\cite{Elhamifar:TPAMI13}
	$\BP(\cA, \b)$ is subspace-sparse for all $\b \in \cS_0$ if and only if
	$p(\cA_0, \b) < p(\cA_c, \b)$ for all $\b \in \cS_0 \setminus \{0\}$.
	\label{thm:equivalent-BP}
\end{theorem}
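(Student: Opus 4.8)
The plan is to prove both implications by contradiction, in each case performing ``surgery'' on a candidate optimal solution. I would write any feasible $\x$ as $(\x_0, \x_c)$ according to the partition $\cJ = \cJ_0 \cup \cJ_c$, so that $\A \x = \A_0 \x_0 + \A_c \x_c$ and $\|\x\|_1 = \|\x_0\|_1 + \|\x_c\|_1$. Throughout I would use the trivial bound $p(\cA, \b) \le p(\cA_0, \b)$, which holds because every $\cA_0$-feasible vector extends by zeros to an $\cA$-feasible vector of the same $\ell_1$ norm, and I would freely use attainment of the BP optima, justified since each $P(\cdot, \b)$ is a feasible, bounded-below linear program.

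For the sufficiency direction (the value condition implies subspace-sparsity), I would fix $\b \in \cS_0$, take any $\x^* = (\x_0^*, \x_c^*) \in \BP(\cA, \b)$, and assume toward a contradiction that $\x_c^* \neq \0$. The key observation, and the step that makes the hypothesis usable, is that $\b' := \A_c \x_c^* = \b - \A_0 \x_0^*$ lies in $\cS_0$, since both $\b$ and $\A_0 \x_0^*$ do. If $\b' = \0$, then replacing $\x_c^*$ by $\0$ preserves feasibility while strictly lowering the $\ell_1$ norm, contradicting optimality; so I may assume $\b' \in \cS_0 \setminus \{\0\}$ and invoke the hypothesis $p(\cA_0, \b') < p(\cA_c, \b')$. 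Since $\x_c^*$ is feasible for $P(\cA_c, \b')$, we get $p(\cA_c, \b') \le \|\x_c^*\|_1$; choosing an optimal $\x_0'$ for $P(\cA_0, \b')$ then yields $\|\x_0'\|_1 = p(\cA_0, \b') < \|\x_c^*\|_1$. Now $\tilde\x := (\x_0^* + \x_0', \0)$ is feasible, because $\A_0(\x_0^* + \x_0') = \A_0 \x_0^* + \b' = \b$, and by the triangle inequality $\|\tilde\x\|_1 \le \|\x_0^*\|_1 + \|\x_0'\|_1 < \|\x^*\|_1$, contradicting optimality. Hence $\x_c^* = \0$ and $\x^*$ is subspace-sparse.

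For the necessity direction, I would fix $\b \in \cS_0 \setminus \{\0\}$ and first show $p(\cA, \b) = p(\cA_0, \b)$: the assumption that every element of $\BP(\cA, \b)$ is subspace-sparse forces an optimal $\x^*$ to have the form $(\x_0^*, \0)$, so $p(\cA_0, \b) \le \|\x_0^*\|_1 = p(\cA, \b)$, which combined with the trivial bound gives equality (in particular $p(\cA_0, \b) < \infty$). It then remains to upgrade this to the strict inequality $p(\cA_0, \b) < p(\cA_c, \b)$. If $\b \notin \spann(\cA_c)$ then $p(\cA_c, \b) = +\infty$ and we are done; otherwise, supposing $p(\cA_c, \b) \le p(\cA_0, \b) = p(\cA, \b)$, an optimal $\x_c^*$ for $P(\cA_c, \b)$ extends to the $\cA$-feasible vector $(\0, \x_c^*)$, whose $\ell_1$ norm equals $p(\cA_c, \b) \le p(\cA, \b)$; since no feasible vector can beat the optimum, this vector must itself be optimal. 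As $\b \neq \0$ gives $\x_c^* \neq \0$, it is a non-subspace-sparse member of $\BP(\cA, \b)$, contradicting the hypothesis. Thus $p(\cA_0, \b) < p(\cA_c, \b)$ for all $\b \in \cS_0 \setminus \{\0\}$.

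I expect the main obstacle to be the sufficiency direction, and specifically the recognition that the residual $\A_c \x_c^*$ automatically lands in $\cS_0$. This is exactly what lets the hypothesis, which is stated only for points of $\cS_0$, be applied to $\b'$; without it the exchange argument stalls, since a priori $\A_c \x_c^*$ need not lie in the subspace. The secondary care points are the degenerate case $\A_c \x_c^* = \0$, handled separately above, and the role of the \emph{strict} inequality, which is precisely what rules out the ties that would otherwise produce an optimal solution supported on $\cA_c$ in the necessity direction.
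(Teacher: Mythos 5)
Your proof is correct: both the exchange ("surgery") argument for sufficiency, with the key observation that the residual $\A_c \x_c^*$ lies in $\cS_0$, and the tie-breaking argument for necessity are sound, and the degenerate cases (infeasibility of $P(\cA_c,\b)$ and $\A_c\x_c^*=\0$) are handled properly. The paper itself omits the proof and defers to the cited reference \cite{Elhamifar:TPAMI13}; your argument is essentially the standard one given there, so there is nothing to flag.
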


In the equivalent condition, it is required that for any $\b \in \cS_0 \setminus \{0\}$, $p(\cA_0, \b)$, which is the objective value of BP for recovering $\b$ by dictionary $\cA_0$ (see \eqref{eq:def-P}), should be smaller than $p(\cA_c, \b)$, which is the objective value of recovering by dictionary $\cA_c$.

\subsubsection{The PRC result} We proceed to discuss how PRC guarantees subspace-sparse recovery by BP. As noted, the PRC implies the DRC, so the PRC result is trivially proved once we show proof for DRC. In the following, we present a direct proof that PRC implies the equivalent condition established in Theorem \ref{thm:equivalent-BP}, as it bears a clearer understanding of PRC for subspace-sparse recovery by BP.

In the equivalent condition, notice that $\b$ is an arbitrary point in $\cS_0$, so the LHS $p(\cA_0, \b)$ depends purely on the properties of $\cA_0$, while RHS $p(\cA_c, \b)$ depends on a relation between the atoms $\cA_c$ and the subspace $\cS_0$. This enlightens us to upper bound the former by characterization of $\cS_0$, and to lower bound the latter by the relation of $\cS_0$ and $\cA_c$.

\begin{theorem}
	If PRC: $\gamma_0 < s(\cA_c, \cS_0)$ holds then $\forall \b \in \cS_c \setminus \{0\}, p(\cA_0, \b) < p(\cA_c, \b)$.
\end{theorem}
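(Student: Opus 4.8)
The plan is to follow the guidance in the text: bound $p(\cA_0, \b)$ from above \emph{purely} in terms of $\cA_0$ (through the covering radius $\gamma_0$), bound $p(\cA_c, \b)$ from below in terms of the separation $s(\cA_c, \cS_0)$, and then chain the two estimates using the PRC. Throughout I would take $\b \in \cS_0 \setminus \{\0\}$ and, by the positive homogeneity $p(\cA, t\b) = |t|\, p(\cA, \b)$, normalize so that $\|\b\|_2 = 1$.

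For the upper bound I would pass to the dual $D(\cA_0, \b)$. Since $\b \in \cS_0 = \spann(\cA_0)$ the primal $P(\cA_0, \b)$ is feasible, so strong duality gives $p(\cA_0, \b) = d(\cA_0, \b) = \max\{\langle \omega, \b\rangle : \|\A_0^\transpose \omega\|_\infty \le 1\}$. The key observation is that only the component of $\omega$ lying in $\cS_0$ affects both the objective $\langle \omega, \b\rangle$ (because $\b \in \cS_0$) and the constraints $|\langle \a_i, \omega\rangle| \le 1$ (because each $\a_i \in \cS_0$); hence the maximization may be restricted to $\omega \in \cS_0$, where the feasible set is exactly the relative polar body $\cK_0^o = \{\v \in \cS_0 : |\langle \v, \a_i\rangle| \le 1,\ \forall i \in \cJ_0\}$. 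By Cauchy--Schwarz $\langle \omega, \b\rangle \le \|\omega\|_2 \|\b\|_2$, and Lemma \ref{thm:bound-polar} bounds $\|\omega\|_2 \le 1/\cos\gamma_0$ over $\cK_0^o$. This yields $p(\cA_0, \b) \le \|\b\|_2 / \cos\gamma_0$.

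For the lower bound I would argue directly on any feasible representation. If $\b \notin \spann(\cA_c)$ then $p(\cA_c, \b) = +\infty$ and the claim is immediate, so assume $\b = \A_c \x$. Expanding $\|\b\|_2^2 = \sum_{j \in \cJ_c} x_j \langle \a_j, \b\rangle$ and applying the triangle inequality gives $\|\b\|_2^2 \le \sum_j |x_j|\,|\langle \a_j, \b\rangle|$. Because $\cS_0$ is a subspace, both $\b$ and $-\b$ lie in $\cS_0$, so for each $\a_j \in \cA_c$ the two angles $s(\a_j, \b)$ and $s(\a_j, -\b) = \pi - s(\a_j, \b)$ are at least $s(\cA_c, \cS_0) \le \pi/2$; this forces $|\langle \a_j, \b\rangle| = \|\b\|_2\,|\cos s(\a_j,\b)| \le \|\b\|_2 \cos s(\cA_c, \cS_0)$. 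Substituting, dividing by $\|\b\|_2 \neq 0$, and taking the infimum over feasible $\x$ gives $p(\cA_c, \b) \ge \|\b\|_2 / \cos s(\cA_c, \cS_0)$.

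Finally, the PRC $\gamma_0 < s(\cA_c, \cS_0)$, with both quantities in $[0, \pi/2]$ where cosine is strictly decreasing, gives $\cos\gamma_0 > \cos s(\cA_c, \cS_0)$, so the two estimates combine into $p(\cA_0, \b) \le \|\b\|_2/\cos\gamma_0 < \|\b\|_2/\cos s(\cA_c, \cS_0) \le p(\cA_c, \b)$. I expect the main obstacle to be the upper bound: the delicate step is justifying the reduction of the dual feasible set to the relative polar body $\cK_0^o$ inside $\cS_0$, which is precisely what lets Lemma \ref{thm:bound-polar} enter and convert the covering radius into a norm bound. The lower bound is a routine inner-product estimate; the only point needing care there is the use of the symmetry $\cS_0 = -\cS_0$ to control the (possibly obtuse) angle $s(\a_j, \b)$ and thereby bound $|\langle \a_j, \b\rangle|$ by $\cos s(\cA_c, \cS_0)$.
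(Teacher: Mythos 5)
Your proposal is correct and follows essentially the same route as the paper's proof: strong duality plus the reduction of the dual variable to its $\cS_0$-component (placing it in $\cK_0^o$ so that Lemma~\ref{thm:bound-polar} gives $p(\cA_0,\b) \le \|\b\|_2/\cos\gamma_0$), and a H\"older-type inner-product estimate giving $p(\cA_c,\b) \ge \|\b\|_2/\cos s(\cA_c,\cS_0)$. Your treatment is in fact slightly more careful than the paper's in two spots --- explicitly justifying why the dual feasible set may be restricted to $\cK_0^o$, and using the symmetry $\cS_0 = -\cS_0$ to handle obtuse angles when bounding $|\langle \a_j,\b\rangle|$ --- but these are refinements, not a different argument.
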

\begin{proof}
	We bound the left and right hand sides of the objective inequality separately.
	
	First, notice that $p(\cA_0, \b) = d(\cA_0, \b) = \langle \omega, \b \rangle$ by strong duality, in which $\omega$ is dual optimal solution. Decompose $\omega$ into two orthogonal components $\omega = \omega^\perp + \omega ^\parallel$, in which $\omega ^\parallel \in \cS_0$, it has $\|\A_0 ^\transpose \omega^\parallel\|_2 = \|\A_0 ^\transpose \omega\|_2 \le 1$, where $\A_0$ is a matrix composed of atoms in $\cA_0$ as columns. Thus, by definition of the polar set, $\omega^\parallel \in \cK_0^o$. One can then use Lemma \ref{thm:dual-finite} and get
%	\begin{equation}
%		\begin{split}
%			p(\cA_0, \b) &= \langle \omega ^\parallel, \b \rangle\\
%			&\le \|\b\|_2 \|\omega ^\parallel\|_2 \le \|\b\|_2 / \cos \gamma_0.
%		\end{split}
%	\end{equation}
	\begin{equation}
		p(\cA_0, \b) = \langle \omega ^\parallel, \b \rangle\le \|\b\|_2 \|\omega ^\parallel\|_2 \le \|\b\|_2 / \cos \gamma_0.
		\label{eq:proof-PRC-BP-LHS}
	\end{equation}
	On the other hand, consider the optimization problem 
	\begin{equation}
		P(\cA_c, \b) = \arg\min_{\x} \|\x\|_1 \st \A_c \x = \b,
	\end{equation}
	where $\A_c$ is a matrix composed of atoms in $\cA_c$ as columns. If the problem is infeasible, then the objective of the above optimization $p(\cA_c, \b) = +\infty$, the conclusion follows trivially. Otherwise, take any $\x^* \in P(\cA_c, \b)$ to be the optimal solution, we have $\b = \A_c \x^*$. Left multiply by $\b ^\transpose$ and manipulate the right hand side we have the following:
	\begin{equation}
		\begin{split}
			\|\b\|_2^2 &= \b ^\transpose \A_c \x^* \le \|\A_c ^\transpose \b\|_\infty \|\x^*\|_1\\
			&= \|\A_c ^\transpose \frac{\b}{\|\b\|_2}\|_\infty \|\b\|_2 \cdot p(\cA_c, \b) \\
			&\le \cos s(\cA_c, \cS_0) \cdot \|\b\|_2 \cdot p(\cA_c, \b),
		\end{split}
		\label{eq:proof-PRC-BP-RHS}
	\end{equation}
	so $p(\cA_c, \b) \ge \|\b\|_2 / s(\cA_c, \cS_0)$.
	
	The conclusion thus follows by combining \eqref{eq:proof-PRC-BP-LHS} and \eqref{eq:proof-PRC-BP-RHS} and the condition of PRC.
\end{proof}

\subsubsection{The DRC result} To prove that DRC implies subspace-sparse recovery by BP, we need a statement that is weaker than DRC but is more convenient to work with, see the rightmost box of Figure \ref{fig:result-flowchart}. 

\begin{lemma}
	If DRC: $ \gamma_0 < s(\cA_c, \cD_0)$ holds then it has $\|\A_c ^\transpose \v\|_\infty < 1, \forall \v \in \cD_0$.
\end{lemma}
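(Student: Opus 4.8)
The plan is to reduce the $\ell_\infty$ bound to a one-atom, one-dual-point estimate and then take a maximum. Fix an arbitrary dual point $\v \in \cD_0$ and an arbitrary atom $\a_j \in \cA_c$; the quantity to control is a single coordinate $|\langle \a_j, \v\rangle|$, which I would factor as
\[
	|\langle \a_j, \v\rangle| = \|\v\|_2 \cdot \bigl|\langle \a_j, \tfrac{\v}{\|\v\|_2}\rangle\bigr|
\]
and bound the two factors separately. The magnitude $\|\v\|_2$ is handled immediately by Lemma \ref{thm:bound-polar}: since $\cD_0 \subseteq \cK_0^o$, every dual point satisfies $\|\v\|_2 \le 1/\cos\gamma_0$. (The DRC forces $\gamma_0 < \pi/2$, as noted below, so $\cos\gamma_0 > 0$ and this bound is finite; also $\v \neq \0$ because the origin is an interior, hence non-extreme, point of the symmetric body $\cK_0^o$, so the normalization is legitimate.) What then remains is to bound the normalized inner product by $\cos s(\cA_c, \cD_0)$, after which the DRC $\gamma_0 < s(\cA_c, \cD_0)$ closes the gap via $|\langle \a_j, \v\rangle| \le \cos s(\cA_c, \cD_0)/\cos\gamma_0 < 1$.

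The inner-product factor is where the main obstacle lies. The $\ell_\infty$ norm involves the \emph{absolute value} of inner products, whereas the spherical distance $s(\a_j, \v) = \cos^{-1}\langle \a_j, \v/\|\v\|_2\rangle$ only controls the \emph{signed} inner product and may be obtuse; a single constraint $\gamma_0 < s(\a_j, \v)$ with the signed inner product near $-1$ would not bound $|\langle \a_j, \v\rangle|$ at all. The resolution I would use is the central symmetry of $\cD_0$: because $\cK_0^o$ is a symmetric convex body, its extreme points occur in antipodal pairs, so $-\v \in \cD_0$ whenever $\v \in \cD_0$. Writing $c := \langle \a_j, \v/\|\v\|_2\rangle$, the definition of $s(\cA_c, \cD_0)$ as an infimum over all pairs yields both $s(\cA_c, \cD_0) \le \cos^{-1}(c)$ and $s(\cA_c, \cD_0) \le \cos^{-1}(-c)$; since $\cos^{-1}$ is decreasing this gives $s(\cA_c, \cD_0) \le \cos^{-1}(|c|) \le \pi/2$, and applying $\cos$ produces $|c| \le \cos s(\cA_c, \cD_0)$. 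This incidentally also shows $s(\cA_c, \cD_0) \le \pi/2$, which together with the DRC justifies $\gamma_0 < \pi/2$ used above.

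Combining the two factor bounds and invoking the DRC (with $\cos$ strictly decreasing, so $\cos s(\cA_c, \cD_0) < \cos\gamma_0$) gives the strict inequality $|\langle \a_j, \v\rangle| < 1$ for every $j \in \cJ_c$; taking the maximum over $j$ yields $\|\A_c^\transpose \v\|_\infty < 1$, and since $\v \in \cD_0$ was arbitrary the lemma follows. The degenerate case $\cA_c = \emptyset$ is vacuous under the stated conventions. I expect the only genuine subtlety to be the absolute-value/symmetry step; everything else is a direct composition of Lemma \ref{thm:bound-polar} with the definitions of the dual set and the spherical distance.
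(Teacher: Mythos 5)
Your proposal is correct and follows essentially the same route as the paper's proof: bound $\|\v\|_2 \le 1/\cos\gamma_0$ via the polar-set estimate (Lemma \ref{thm:bound-polar}) and bound the normalized inner products by $\cos s(\cA_c,\cD_0)$, then combine with the DRC. The only difference is that you explicitly justify the absolute-value step through the central symmetry $\cD_0 = -\cD_0$, a detail the paper's one-line computation leaves implicit; this is a worthwhile clarification but not a different argument.
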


\begin{proof}
	For any $\v \in \cD_0$, we know that $\v \in \cK_0^o$. Thus, we can use Lemma \ref{thm:dual-finite} to bound $\v$ as $\|\v\|_2 \le 1 / \cos \gamma_0$. Consequently,
	\begin{equation}
			\|\A_c ^ \transpose \v\|_\infty = \|\A_c ^ \transpose \frac{\v}{\|\v\|_2}\|_\infty \|\v\|_2
			 \le \frac{\cos s(\cA_c, \cD_0)}{ \cos \gamma_0} < 1.
	\end{equation}
\end{proof}

\begin{theorem}
	If $\|\A_c ^\transpose \v\|_\infty < 1, \forall \v \in \cD_0$ holds then $\forall \b \in \cS_0 \setminus \{0\}$, $p(\cA_0, \b) < p(\cA_c, \b)$.
	\label{thm:BP-dual}
\end{theorem}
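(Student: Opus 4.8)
The plan is to pin the left-hand side $p(\cA_0, \b)$ to a single dual point in $\cD_0$, and then to bound the right-hand side $p(\cA_c, \b)$ against that same point by a direct H\"older estimate, so that the hypothesis $\|\A_c^\transpose \v\|_\infty < 1$ kicks in immediately.

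First I would handle the left-hand side by passing to the dual of the Basis Pursuit program for $\cA_0$. Since $\b \in \cS_0 = \spann(\cA_0)$, the primal $P(\cA_0, \b)$ is feasible, so strong duality gives $p(\cA_0, \b) = d(\cA_0, \b) = \max_{\omega} \{ \langle \omega, \b\rangle : \|\A_0^\transpose \omega\|_\infty \le 1 \}$. The key observation is that both the objective (because $\b \in \cS_0$) and the constraint (because every atom of $\cA_0$ lies in $\cS_0$) depend only on the component of $\omega$ lying in $\cS_0$; hence the maximization can be restricted to $\omega \in \cS_0$, where the feasible set is exactly the relative polar body $\cK_0^o$. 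Thus $p(\cA_0, \b) = \max_{\omega \in \cK_0^o} \langle \omega, \b\rangle$. Because $\langle \cdot, \b\rangle$ is linear and $\cK_0^o = \conv(\cD_0)$ by Lemma \ref{thm:dual-convexhull}, the maximum of this linear functional over the compact convex body is attained at an extreme point, i.e. there is a dual point $\v^* \in \cD_0$ with $p(\cA_0, \b) = \langle \v^*, \b\rangle$.

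Next I would bound the right-hand side. If $P(\cA_c, \b)$ is infeasible then $p(\cA_c, \b) = +\infty$ and the claim is trivial, so assume feasibility and let $\x^*$ be an optimal solution, giving $\A_c \x^* = \b$ and $\|\x^*\|_1 = p(\cA_c, \b)$. Substituting $\b = \A_c \x^*$ and pairing with $\v^*$ yields, by H\"older's inequality,
\[
p(\cA_0, \b) = \langle \v^*, \b\rangle = \langle \A_c^\transpose \v^*, \x^* \rangle \le \|\A_c^\transpose \v^*\|_\infty \, \|\x^*\|_1 .
\]
The hypothesis supplies $\|\A_c^\transpose \v^*\|_\infty < 1$ for this particular $\v^* \in \cD_0$, and since $\b \ne 0$ we must have $\x^* \ne 0$, so $\|\x^*\|_1 = p(\cA_c, \b) > 0$. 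Combining these gives $p(\cA_0, \b) < \|\x^*\|_1 = p(\cA_c, \b)$, which is the desired strict inequality.

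The main obstacle I anticipate is the reduction carried out on the left-hand side: one must argue carefully that the dual feasible region collapses to the \emph{relative} polar body $\cK_0^o \subseteq \cS_0$ (rather than its full-space analogue in $\Re^D$), and that the linear objective is maximized at an extreme point of $\cK_0^o$, which is where Lemma \ref{thm:dual-convexhull} and compactness of the convex body enter. Once $p(\cA_0, \b)$ has been identified with a single dual point $\v^*$ to which the hypothesis applies, the right-hand side estimate is a one-line H\"older argument, and the strictness is inherited for free from the assumption $\b \ne 0$.
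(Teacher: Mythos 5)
Your proposal is correct and follows essentially the same route as the paper's proof: both reduce the dual of $P(\cA_0,\b)$ to the relative polar body $\cK_0^o$ via the decomposition $\omega = \omega^\parallel + \omega^\perp$, locate an optimal dual solution at an extreme point (a dual point in $\cD_0$), and then conclude by the H\"older pairing $\langle \v^*, \A_c \x^*\rangle \le \|\A_c^\transpose \v^*\|_\infty \|\x^*\|_1$. The only cosmetic difference is that you justify the extreme-point step via $\cK_0^o = \conv(\cD_0)$ and compactness while the paper cites linear-programming theory, and you make explicit the (correct) observation that $\|\x^*\|_1 > 0$ is needed for strictness.
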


\begin{proof}
	To prove the result, we need some basic results from linear programming. Consider the linear program:
	\begin{equation}
		\arg\max_w \langle \omega, \b \rangle \st \|\A_0 ^\transpose \omega\|_{\infty} \le 1, \omega \in \cS_0.
		\label{eq:Inlier-dual-constraint}
	\end{equation}
	Note that the feasible region of \eqref{eq:Inlier-dual-constraint} is $\cK_0^o$, and it is bounded because it is a convex body. By theories of linear programming (e.g., \cite{Nocedal:06}), there must have a solution to \eqref{eq:Inlier-dual-constraint} that is an extreme point of $\cK_0^o$. Thus, we can always find a solution of \eqref{eq:Inlier-dual-constraint} that is in the set of dual points $\cD_0$.
	
	Now let us consider the optimization problem $D(\cA_0, \b)$, rewritten below for convenience:
	\begin{equation}
		D(\cA_0, \b) := \arg\max_\omega \langle \omega, \b \rangle \st \|\A_0 ^\transpose \omega\|_{\infty} \le 1.
		\label{eq:Inlier-dual}
	\end{equation}
	
	Note that this program differs from \eqref{eq:Inlier-dual-constraint} only in the constraint. The claim is, despite of this change, there is still at least one optimal solution to \eqref{eq:Inlier-dual} that is in $\cD_0$. This follows from the fact that both $\b$ and the columns of $\A_0$ are in $\cS_0$, thus any solution $\omega$ to \eqref{eq:Inlier-dual} can be decomposed into two parts as $\omega = \omega ^\parallel + \omega ^\perp$, in which $\omega ^\parallel$ is a solution to \eqref{eq:Inlier-dual-constraint} and $\omega ^\perp$ is orthogonal to $\cS_0$.
	
	Prepared with the above discussion, we now go to the proof. The proof is trivial if $p(\cA_c, \b) = +\infty$, since $p(\cA_0, \b)$ always has feasible solutions and thus is finite. 
	
	Otherwise, take any $\x^* \in P(\cA_c, \b)$ to be a primal optimal solution. It has that $\b = \A_c \x^*$. On the other hand, we have shown that there exists an optimal dual solution $\omega^* \in D(\cA_0, \b)$ that is in $\cD_0$. Thus,
	\begin{equation}
		\begin{split}
			p(\cA_0, \b) &= d(\cA_0, \b) = \langle \omega^*, \b \rangle =  \langle \omega^*, \A_c \x^* \rangle \\
			&\le \|\A_c ^\transpose \omega^*\|_\infty \cdot \|\x^*\|_1 < p(\cA_c, \b), 
		\end{split}
	\end{equation}
	in which $\|\A_c ^\transpose \omega^*\|_\infty < 1$ by assumption, and $\|\x^*\|_1 = p(\cA_c, \b)$ since $\x^*$ is an optimal solution.
\end{proof}

\subsection{Subspace-sparse recovery by OMP}

The lower half of Figure \ref{fig:result-flowchart} summarizes the results for sparse recovery by OMP. The results surprisingly have a symmetric structure as that of BP.  First, we show an equivalent condition for subspace-sparse recovery by OMP. Then we show that this condition is implied by PRC and DRC.

\subsubsection{An equivalent condition}

\begin{theorem}
	$\forall \b \in \cS_0, \OMP(\cA, \b)$ is subspace-sparse if and only if $\forall \b \in \cS_0 \setminus \{0\}, s(\cA_0, \{\pm\b\}) < s(\cA_c, \{\pm\b\})$.
\end{theorem}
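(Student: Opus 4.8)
The plan is to reduce the geometric, spherical-distance statement to a purely inner-product comparison at each OMP selection step, and then exploit the fact that when $\b \in \cS_0$ the residuals generated by OMP never leave $\cS_0$ so long as only atoms of $\cA_0$ have been selected. This lets the universally quantified hypothesis be applied not just to the input $\b$ but to every intermediate residual.

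First I would rewrite both sides of the claimed condition. Because $s(\cdot, \cdot)$ is measured to the symmetric pair $\{\pm\b\}$ and $\cos^{-1}$ is strictly decreasing on $[-1,1]$, for any $\b \in \cS_0 \setminus \{\0\}$ one has $s(\cA_0, \{\pm\b\}) = \cos^{-1}\big(\max_{\a\in\cA_0}|\langle \a, \b/\|\b\|_2\rangle|\big)$, and likewise for $\cA_c$. Hence $s(\cA_0,\{\pm\b\}) < s(\cA_c,\{\pm\b\})$ is equivalent to the inner-product inequality $\max_{\a\in\cA_0}|\langle\a,\b\rangle| > \max_{\a\in\cA_c}|\langle\a,\b\rangle|$. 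This is precisely the statement that, faced with a residual equal to $\b$, the greedy step of OMP strictly prefers an atom of $\cA_0$ over every atom of $\cA_c$.

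Next I would record the key invariance: if $\b \in \cS_0$ and every atom selected by OMP up to step $k$ lies in $\cA_0$, then the residual $\v_k$, being $\b$ minus its projection onto a subspace of $\cS_0$, again lies in $\cS_0$. For the \emph{if} direction I would then induct on the OMP steps. Starting from $\v_0 = \b \in \cS_0$, whenever the current residual $\v_k$ is nonzero it lies in $\cS_0\setminus\{\0\}$, so the inner-product form of the hypothesis (with $\b$ replaced by $\v_k$) applies and shows that the maximum absolute correlation over $\cA$ is attained strictly inside $\cA_0$; thus OMP must append an atom of $\cA_0$ and $\v_{k+1}$ stays in $\cS_0$. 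Since $\v_k \in \spann(\cA_0)\setminus\{\0\}$ forces $\max_{\a\in\cA_0}|\langle\a,\v_k\rangle|>0$, and the selected atom (having nonzero correlation with a residual orthogonal to the current span) enlarges the span, the process terminates within $d_0$ steps using only atoms of $\cA_0$; hence every output in $\OMP(\cA,\b)$ is subspace-sparse.

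For the \emph{only if} direction I would argue by contrapositive, and here only the first step is needed. If the condition fails there is some $\b^* \in \cS_0\setminus\{\0\}$ with $\max_{\a\in\cA_0}|\langle\a,\b^*\rangle| \le \max_{\a\in\cA_c}|\langle\a,\b^*\rangle|$; running OMP on the admissible input $\b^*$, the first residual is $\b^*$ and some atom of $\cA_c$ attains the global maximum correlation, so selecting it is a legitimate greedy choice. Consequently $\OMP(\cA,\b^*)$ contains a solution whose first nonzero entry indexes an atom of $\cA_c$, which is not subspace-sparse. The delicate point throughout is the handling of ties together with the set-valued reading of $\OMP(\cA,\b)$: the strict inequality in the \emph{if} direction is exactly what excludes any tie across the $\cA_0/\cA_c$ boundary and so guarantees that \emph{every} admissible greedy path stays subspace-sparse, while the non-strict inequality in the \emph{only if} direction is exactly what permits a bad path to exist. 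I expect reconciling this tie-breaking bookkeeping with the precise meaning of subspace-sparsity for a set of solutions to be the main thing requiring care.
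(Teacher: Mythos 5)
Your proposal is correct and follows essentially the same route as the paper's proof: translating the spherical-distance condition into the inner-product comparison that OMP actually performs, using the invariance that the residual stays in $\cS_0$ while only atoms of $\cA_0$ are selected, inducting over the greedy steps for the ``if'' direction, and taking the contrapositive via the first selection step for the ``only if'' direction. Your version is in fact more careful than the paper's terse argument, particularly on termination and on the tie-breaking issue for the set-valued reading of $\OMP(\cA,\b)$.
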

\begin{proof}
	The ``only if'' part is straight forward because if $s(\cA_0, \{\pm\b\}) \ge s(\cA_c, \{\pm\b\})$, then this specific $\b$ will pick a point from $\cA_c$ in the first step of the $\OMP(\cA, \b)$.
	
	The other direction is also easily seen in an inductive way if we consider the procedure of the OMP algorithm. Specifically, for any given $\b \in \cS_0$, the first step of $\OMP(\cA, \b)$ chooses an entry from $\cA_0$, and this gives a residual that is again in $\cS_0$, which then guarantees that the next step of $\OMP(\cA, \b)$ also chooses an entry from $\cA_0$.
\end{proof}

Thus, the equivalent condition requires that for any point $\b \in \cS_0 \setminus \{0\}$, the closest point to either $\b$ or $-\b$ in the entire dictionary $\cA$ should in $\cA_0$.

\subsubsection{The PRC result} Similar to the discussion for BP, the term $s(\cA_0, \{\pm\b\})$ on the LHS of the equivalent condition depends on $\cA_0$ and can be upper bounded by the characterization $\gamma_0$, and the term $s(\cA_c, \{\pm\b\})$ depends on relation between $\cS_0$ and $\cA_c$ and can be bounded below.

\begin{theorem}
	If PRC: $ \gamma_0 < s(\cA_c, \cS_0)$ holds then $\forall \b \in \cS_0 \setminus \{0\}, s(\cA_0, \{\pm\b\}) < s(\cA_c, \{\pm\b\})$.
\end{theorem}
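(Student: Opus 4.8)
The plan is to bound the two sides of the target inequality separately and then chain them through the PRC. Both bounds follow directly from the relevant definitions, so the argument will be short; the only real care needed is the bookkeeping around the symmetrization $\pm\cA_0$.

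First I would upper-bound the left-hand side by the covering radius $\gamma_0$. Fix any $\b \in \cS_0 \setminus \{0\}$ and consider the unit vector $\b / \|\b\|_2$, which lies in $\spann(\pm\cA_0) \cap \Sp^{D-1} = \cS_0 \cap \Sp^{D-1}$. By the definition of the covering radius applied to $\pm\cA_0$, we obtain $s(\pm\cA_0, \b) \le \gamma_0$, using that the spherical distance is scale-invariant. The key observation is that this quantity coincides with $s(\cA_0, \{\pm\b\})$: since $s(-\a, \b) = s(\a, -\b)$ for every atom $\a$, taking the infimum of $s(\cdot, \b)$ over the symmetrized set $\pm\cA_0$ is the same as taking the infimum of $\min\{s(\a, \b), s(\a, -\b)\}$ over $\cA_0$, which is precisely $s(\cA_0, \{\pm\b\})$. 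Hence $s(\cA_0, \{\pm\b\}) \le \gamma_0$.

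Next I would lower-bound the right-hand side by $s(\cA_c, \cS_0)$. Since both $\b$ and $-\b$ belong to $\cS_0$, each pairwise spherical distance appearing in $s(\cA_c, \{\pm\b\})$ is one of the distances over which the infimum defining $s(\cA_c, \cS_0)$ is taken; by monotonicity of the infimum over a smaller index set, $s(\cA_c, \{\pm\b\}) \ge s(\cA_c, \cS_0)$. Combining the two bounds with the PRC hypothesis $\gamma_0 < s(\cA_c, \cS_0)$ yields the chain $s(\cA_0, \{\pm\b\}) \le \gamma_0 < s(\cA_c, \cS_0) \le s(\cA_c, \{\pm\b\})$, which is the desired strict inequality for every $\b \in \cS_0 \setminus \{0\}$.

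There is no serious obstacle here; the one place to be careful is confirming that $s(\cA_0, \{\pm\b\})$ genuinely equals the covering-radius quantity $s(\pm\cA_0, \b)$, i.e. that symmetrizing the atoms $\cA_0$ is equivalent to symmetrizing the target $\b$. This is exactly why the covering radius is defined on $\pm\cA_0$ rather than on $\cA_0$, and it is the step that makes the proof close cleanly.
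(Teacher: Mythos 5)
Your proof is correct and follows essentially the same route as the paper's: bound $s(\cA_0,\{\pm\b\})$ above by $\gamma_0$ via the definition of the covering radius of $\pm\cA_0$, bound $s(\cA_c,\{\pm\b\})$ below by $s(\cA_c,\cS_0)$ since $\{\pm\b\}\subseteq\cS_0$, and chain through the PRC. The only difference is that you spell out the symmetrization identity $s(\pm\cA_0,\b)=s(\cA_0,\{\pm\b\})$, which the paper leaves implicit.
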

\begin{proof}
	We prove this by bounding each side of the objective inequality separately.
	
	For the left hand side, notice $\gamma_0 := \gamma(\pm \cA_0)$, then by definition of covering radius, $\gamma_0 \ge s(\cA_0, \{\pm\b\})$.
	
	For the right hand side, we have $s(\cA_c, \{\pm\b\}) \ge s(\cA_c, \cS_0)$ by definition of the notation $s(\cdot, \cdot)$.
	
	The conclusion thus follows by concatenating the bounds for both sides above with the PRC. 
\end{proof}

\subsubsection{The DRC result} Finally, we prove the result for DRC, by showing that the statement in the rightmost box of Figure \ref{fig:result-flowchart} guarantees the equivalent condition for OMP.

\begin{theorem}
	If $\|\A_c ^\transpose \v\|_\infty < 1, \forall \v \in \cD_0$ holds then $\forall \b \in \cS_0 \setminus \{0\}, s(\cA_0, \{\pm\b\}) < s(\cA_c, \{\pm\b\})$.
	\label{thm:OMP-dual}
\end{theorem}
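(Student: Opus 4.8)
The plan is to reduce the spherical-distance inequality to a clean inner-product inequality, and then exploit the fact that the polar body $\cK_0^o$ is the convex hull of its extreme points $\cD_0$. This is the OMP analogue of the BP statement in Theorem \ref{thm:BP-dual}, but it admits a more direct route that avoids linear-programming duality.

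First I would reformulate the target inequality. Since every atom has unit norm, for any $\b \in \cS_0 \setminus \{\0\}$ the definition of spherical distance gives
\[
\cos s(\cA_0, \{\pm\b\}) = \max_{i \in \cJ_0} \Big|\big\langle \a_i, \tfrac{\b}{\|\b\|_2}\big\rangle\Big| = \frac{\|\A_0^\transpose \b\|_\infty}{\|\b\|_2},
\]
where the absolute value arises precisely from taking the better of $\b$ and $-\b$ in the second argument, and similarly $\cos s(\cA_c, \{\pm\b\}) = \|\A_c^\transpose \b\|_\infty / \|\b\|_2$. Because $\cos^{-1}$ is decreasing on $[0,\pi]$, the conclusion $s(\cA_0, \{\pm\b\}) < s(\cA_c, \{\pm\b\})$ is equivalent to the scale-free statement $\|\A_0^\transpose \b\|_\infty > \|\A_c^\transpose \b\|_\infty$. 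So it suffices to establish this for every $\b \in \cS_0 \setminus \{\0\}$.

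Next I would rescale $\b$ onto the boundary of the polar set. Set $\alpha := \|\A_0^\transpose \b\|_\infty$; this is strictly positive, since $\alpha = 0$ would force $\b \perp \cS_0$ and hence $\b = \0$. Then $\b/\alpha \in \cS_0$ satisfies $|\langle \b/\alpha, \a_i\rangle| \le 1$ for every $i \in \cJ_0$, so $\b/\alpha \in \cK_0^o$ by the definition of the relative polar set. (In other words, $\|\A_0^\transpose \cdot\|_\infty$ is exactly the gauge of $\cK_0^o$, and rescaling by $\alpha$ puts $\b$ on its boundary.) Now I invoke the extreme-point structure: by Lemma \ref{thm:dual-finite} the set $\cD_0$ is finite, and by Lemma \ref{thm:dual-convexhull} it is the set of extreme points of the convex body $\cK_0^o$, whose convex hull is $\cK_0^o$. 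Hence $\b/\alpha = \sum_k \lambda_k \v_k$ is a finite convex combination of points $\v_k \in \cD_0$. Convexity of $\|\cdot\|_\infty$ then gives
\[
\|\A_c^\transpose (\b/\alpha)\|_\infty \le \sum_k \lambda_k \, \|\A_c^\transpose \v_k\|_\infty < \sum_k \lambda_k = 1,
\]
using the hypothesis on each dual point, so $\|\A_c^\transpose \b\|_\infty < \alpha = \|\A_0^\transpose \b\|_\infty$, which is what we wanted.

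The two bounds are routine; the step deserving care is the strictness of the last inequality. Convexity of the $\ell_\infty$ norm alone yields only ``$\le$'', and upgrading to ``$<$'' relies on the fact that the hypothesis $\|\A_c^\transpose \v_k\|_\infty < 1$ is strict for \emph{each} dual point, combined with $\sum_k \lambda_k = 1$ over a nonempty index set, so that $\sum_k \lambda_k \big(1 - \|\A_c^\transpose \v_k\|_\infty\big) > 0$. The main conceptual obstacle, however, is the initial reformulation: recognizing that the OMP correlation quantities are exactly $\|\A_0^\transpose \b\|_\infty/\|\b\|_2$ and $\|\A_c^\transpose \b\|_\infty/\|\b\|_2$, and that $\|\A_0^\transpose \cdot\|_\infty$ is the gauge of $\cK_0^o$. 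Once this is in place, the passage through the extreme points $\cD_0$ is immediate and requires no further machinery.
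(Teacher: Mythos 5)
Your proof is correct and follows essentially the same route as the paper: both normalize $\b$ so that it lies on the boundary of $\cK_0^o$ (the paper via the Minkowski functional $\|\cdot\|_{\cK_0^o}$, you via explicit division by $\alpha=\|\A_0^\transpose\b\|_\infty$, which is the same gauge), write the normalized point as a convex combination of the dual points $\cD_0$, and use the strict hypothesis on each dual point to conclude $\|\A_c^\transpose\b\|_\infty < \|\A_0^\transpose\b\|_\infty$. The reduction of the spherical-distance inequality to this inner-product inequality is also exactly the paper's final step.
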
             

To prove this theorem, we use the result that the polar set $\cK_0^o$ induces a norm on the space $\cS_0$, by means of the so-called Minkowski functional.

\begin{definition}
	The Minkowski functional of a set $\cK$ is defined on $\spann(\cK)$ as
	\begin{equation}
		\|\v\|_\cK = \inf\{t>0: \frac{\v}{t} \in \cK\}.
	\end{equation}
\end{definition}
\begin{lemma}
	\cite{Vershynin:09}
	If $\cK$ is a symmetric convex body, then $\|\cdot\|_\cK$ is a norm on $span(\cK)$ with $\cK$ being the unit ball.
\end{lemma}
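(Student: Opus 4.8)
The plan is to verify directly that the Minkowski functional $\|\cdot\|_\cK$ satisfies the three defining axioms of a norm on $V := \spann(\cK)$ --- non-negativity together with $\|\v\|_\cK = 0 \Leftrightarrow \v = \0$, absolute homogeneity, and the triangle inequality --- and then to confirm that the closed unit ball it induces coincides with $\cK$. Before any axiom can be checked, I would first record two structural facts extracted from $\cK$ being a \emph{symmetric convex body}. First, the origin lies in the relative interior of $\cK$: by symmetry any relative-interior point $\y$ has its reflection $-\y$ also in the relative interior, and the midpoint $\0 = \tfrac12\y + \tfrac12(-\y)$ is therefore a relative-interior point; hence some relative ball $rB \cap V$ with $r>0$ is contained in $\cK$ (writing $B$ for the Euclidean unit ball). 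Second, $\cK$ is bounded, so $\cK \subseteq RB$ for some $R>0$. These two facts bracket the functional: for $\v \in V$ the point $\v/t$ enters the relative ball around $\0$ once $t \ge \|\v\|_2/r$, so the defining set $\{t>0 : \v/t \in \cK\}$ is nonempty and $\|\v\|_\cK < \infty$; and since $\v/t \in \cK \subseteq RB$ forces $t \ge \|\v\|_2/R$, we obtain $\|\v\|_\cK \ge \|\v\|_2/R$, which yields non-negativity and definiteness ($\|\v\|_\cK = 0 \Rightarrow \v = \0$, while $\|\0\|_\cK = 0$ is immediate).

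The key technical step, which I expect to do the real work, is a radial-monotonicity lemma: \emph{whenever $s > \|\v\|_\cK$, one already has $\v/s \in \cK$.} This follows from convexity together with $\0 \in \cK$: by definition of the infimum there is $s'$ with $\|\v\|_\cK \le s' < s$ and $\v/s' \in \cK$, and since $s'/s \in (0,1)$ the point $\v/s = (s'/s)(\v/s') + (1 - s'/s)\,\0$ is a convex combination of two points of $\cK$, hence lies in $\cK$. With this lemma in hand the remaining axioms are short. Absolute homogeneity splits into the case $\alpha>0$, handled by the substitution $t = \alpha s$ in the defining infimum, and the sign change, where symmetry $\cK = -\cK$ gives $-\v/t \in \cK \Leftrightarrow \v/t \in \cK$ and hence $\|-\v\|_\cK = \|\v\|_\cK$; combining the two yields $\|\alpha\v\|_\cK = |\alpha|\,\|\v\|_\cK$. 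For the triangle inequality, fix $s > \|\v\|_\cK$ and $t > \|\w\|_\cK$, so $\v/s, \w/t \in \cK$ by the lemma, and write $\frac{\v+\w}{s+t} = \frac{s}{s+t}\cdot\frac{\v}{s} + \frac{t}{s+t}\cdot\frac{\w}{t}$ as a convex combination of points of $\cK$; convexity places it in $\cK$, so $\|\v+\w\|_\cK \le s+t$, and infimizing over such $s,t$ gives $\|\v+\w\|_\cK \le \|\v\|_\cK + \|\w\|_\cK$.

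Finally I would identify the unit ball. The inclusion $\cK \subseteq \{\v : \|\v\|_\cK \le 1\}$ is immediate by taking $t = 1$ in the infimum. For the reverse, if $\|\v\|_\cK \le 1$ then every $s>1$ satisfies $s > \|\v\|_\cK$, so $\v/s \in \cK$ by the radial-monotonicity lemma; letting $s \to 1^+$ and using that $\cK$ is closed (being compact) gives $\v \in \cK$. Hence $\{\v \in V : \|\v\|_\cK \le 1\} = \cK$, which identifies $\cK$ as the unit ball and finishes the proof. The main obstacle is not any single computation but correctly apportioning the four defining properties of a symmetric convex body to the four claims --- symmetry for homogeneity, convexity (with $\0 \in \cK$) for the radial lemma and the triangle inequality, boundedness for definiteness, and closedness for the unit-ball identity --- and, in particular, noticing that $\0$ being a \emph{relative-interior} point of $\cK$ must be deduced from symmetry before the functional is even well defined and finite.
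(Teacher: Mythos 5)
The paper does not prove this lemma at all: it is invoked as a known fact with a citation to Vershynin's lecture notes \cite{Vershynin:09}, so there is no internal proof to compare your argument against. Your self-contained proof is correct and is essentially the standard textbook argument, with the details filled in properly. The two bracketing inclusions $rB \cap V \subseteq \cK \subseteq RB$ (the first deduced from symmetry via the relative-interior argument, the second from compactness) give finiteness of the functional and the implication $\|\v\|_\cK = 0 \Rightarrow \v = \0$; the radial-monotonicity lemma, which uses only $\0 \in \cK$ and convexity, correctly drives the triangle inequality and the reverse inclusion in the unit-ball identification; symmetry yields absolute homogeneity; and closedness of $\cK$ is exactly what justifies the limit $s \to 1^+$. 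One choice worth singling out as a strength rather than a gap: the paper applies this lemma to $\cK_0^o$, which is a convex body only \emph{relative to} the subspace $\cS_0$ (its interior in the ambient space $\Re^D$ is empty), so your decision to phrase the argument in terms of the relative interior of $\cK$ inside $V = \spann(\cK)$, and to deduce from symmetry that $\0$ is a relative-interior point before the functional is even known to be finite, is precisely what is needed for the lemma to be applicable in the setting where the paper uses it.
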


By this result, $\|\cdot\|_{\cK_0^o}$ is a norm on $\cS_0$ since $\cK_0^o$ is a symmetric convex body, see the discussion for Definition \ref{def:polar-set}.

\begin{proof}[Proof of Theorem \ref{thm:OMP-dual}]
	It suffices to prove the result for every $\b \in \cS_0 \setminus \{0\}$ that has a unit norm, by using any norm defined on $\cS_0$. Here the norm we use is the Minkowski functional $\|\cdot\|_{\cK_0^o}$, and we need to prove that $s(\cA_0, \{\pm\b\}) < s(\cA_c, \{\pm\b\})$ for all $\b\in \cS_0$ such that $\|\b\|_{\cK_0^o} = 1$.
	
	Since $\|\b\|_{\cK_0^o} = 1$, it has $\b \in \cK_0^o$, by Theorem \ref{thm:dual-convexhull} thus $\b$ could be written as a convex combination of the dual points, i.e. one can write $\b = \sum_i x_i \cdot \v_i $ in which $\v_i \in \cD_0, x_i \in [0, 1]$ for all $i$ and $\sum_i x_i= 1$. Thus, 
	\begin{multline}
	\|\A_c ^\transpose \b\|_\infty = \|\A_c ^ \transpose \sum_i \v_i \cdot x_i \|_\infty
	\le \sum_i \|\A_c ^ \transpose \v_i \cdot x_i \|_\infty\\
	< \sum_i x_i = 1 = \|\A_0 ^\transpose \b\|_\infty,
	\label{eq:proof-OMP-dual}
	\end{multline}
	in which the last equality follows from $\|\b\|_{\cK_0^o} = 1$. One then divide both sides of \eqref{eq:proof-OMP-dual} by $\|\b\|_2$ and take $\arccos$, and the conclusion can be easily seen.
\end{proof}

\section{Subspace-Sparse Recovery: \\Randomized Result}
\label{sec:subspace-sparse-randomized}

In this section, we discuss the properties of subspace-sparse recovery under a randomized model. The analysis is built upon the deterministic condition of DRC in Section \ref{sec:subspace-sparse-deterministic}. We show that under a certain randomized modeling of data, the DRC can be satisfied with certain probabilities. The roadmap of proof of the result is provided.

\subsection{Main result}
\begin{theorem}
	Let $\cA = \{ \a_j \in \Re^D, j \in \cJ \}$ be a dictionary such that $\cA_0$ contains $s_0$ points randomly and uniformly sampled on the unit sphere of some subspace $\cS_0$ with dimension $d_0 < D$, and $\cA_c$ contains points randomly and uniformly sampled on the unit sphere $\Sp^{D-1}$. Let $\rho_0 = s_0 / d_0$ be the ``density'' of points in $\cS_0$, let $\lambda = \card(\cA_c) / s_0$. Under the conditions that $2 \le d_0 < \sqrt{D/2}$ and $\rho_0 \ge 1$, the DRC is satisfied with probability 
	\begin{equation}
		p > 1 -  \frac{d_0\cdot2^{d_0}}{C(D,d_0)} \sqrt{\rho_0} \cdot e^{-C(D, d_0) \sqrt{\rho_0}} - \frac{\lambda d_0 (2e)^{d_0}}{{(\rho_0)} ^{k_0}} ,
		\label{eq:randomized}
	\end{equation} 
	in which $k_0 = \frac{D}{2d_0}-d_0$, $C(D, d_0)$ is increasing in $D$, decreasing in $d_0$ and lower bounded by $0.79 \sqrt{d_0} / 2.07^{d_0-1}$.
	\label{thm:randomized}
\end{theorem}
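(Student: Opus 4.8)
The plan is to reduce the DRC $\gamma_0 < s(\cA_c, \cD_0)$ to two essentially independent tail events, separated by a single threshold angle $\theta^* \in (0, \pi/2)$, and to bound the failure probability of each. Since the DRC holds whenever $\gamma_0 < \theta^* \le s(\cA_c, \cD_0)$, a union bound gives
\[
\Pr[\text{DRC fails}] \le \Pr[\gamma_0 \ge \theta^*] + \Pr[s(\cA_c, \cD_0) \le \theta^*].
\]
The first term depends only on $\cA_0$, hence on $(d_0, \rho_0)$, while the second couples $\cA_0$ (through the random dual directions $\cD_0$) with the independent outliers $\cA_c$. I would calibrate $\theta^*$ so that $\sin\theta^* \approx \rho_0^{-1/(2d_0)}$, the value that balances the two bounds; the precise dependence of $\theta^*$ on $D$ is exactly what will eventually make the constant $C(D,d_0)$ depend on $D$ even though $\gamma_0$ itself does not.

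To control $\Pr[\gamma_0 \ge \theta^*]$ I would run a net-plus-cap argument on the sphere $\Sp^{d_0-1}$ of $\cS_0$. If $\gamma_0 \ge \theta^*$ there is a witness direction at angular distance at least $\theta^*$ from every point of $\pm\cA_0$; replacing it by the nearest point of an $\epsilon$-net $\mathcal{N}$ of $\Sp^{d_0-1} \cap \cS_0$ and using the triangle inequality for the spherical metric, the witness may be taken from $\mathcal{N}$ at distance $\ge \theta^* - \epsilon$. As the atoms of $\cA_0$ are independent and uniform, and one atom avoids the two antipodal caps of radius $\theta^*-\epsilon$ about a fixed direction with probability $1 - 2\mu_{d_0}(\theta^*-\epsilon)$ (where $\mu_{d_0}$ is the normalized cap measure on $\Sp^{d_0-1}$), a union bound over $\mathcal{N}$ yields
\[
\Pr[\gamma_0 \ge \theta^*] \le |\mathcal{N}|\,\bigl(1 - 2\mu_{d_0}(\theta^*-\epsilon)\bigr)^{s_0} \le \left(\frac{C_0}{\epsilon}\right)^{d_0} e^{-2 s_0 \mu_{d_0}(\theta^*-\epsilon)}.
\]
Taking $\epsilon$ a fixed fraction of $\theta^*$, lower-bounding the cap measure by an estimate of $\int_0^\theta \sin^{d_0-2}t\,dt$, and substituting $s_0 = \rho_0 d_0$ would produce the first error term, with the net-size factor contributing the $2^{d_0}$ and with $C(D,d_0)$ absorbing the cap-measure constant (increasing in $D$ because a larger $D$ permits a larger admissible $\theta^*$, which makes covering easier).

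To control $\Pr[s(\cA_c, \cD_0) \le \theta^*]$ I would condition on $\cA_0$, which freezes the finitely many dual directions $\cD_0$. By Lemma~\ref{thm:dual-finite} we have the deterministic bound $\card(\cD_0) \le 2^{d_0}\binom{s_0}{d_0}$, and each $\a_j \in \cA_c$ is uniform on $\Sp^{D-1}$ and independent of $\cA_0$. Since $\Pr[s(\a_j, \v) \le \theta^*] = \mu_D(\theta^*)$ for a fixed direction $\v$, a union bound over the at most $\card(\cA_c)\cdot\card(\cD_0)$ atom--dual-point pairs gives
\[
\Pr[s(\cA_c, \cD_0) \le \theta^*] \le \card(\cA_c) \cdot 2^{d_0}\binom{s_0}{d_0} \cdot \mu_D(\theta^*).
\]
Substituting $\card(\cA_c) = \lambda s_0$, $\binom{s_0}{d_0} \le (e\rho_0)^{d_0}$, and a cap estimate $\mu_D(\theta^*) \le c\,(\sin\theta^*)^{D-1}$ with $\sin\theta^* \approx \rho_0^{-1/(2d_0)}$ collapses the powers of $\rho_0$ to $\rho_0^{-k_0}$ with $k_0 = \frac{D}{2d_0} - d_0$, delivering the second error term $\lambda d_0 (2e)^{d_0}/\rho_0^{k_0}$. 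Here $k_0 > 0$ is precisely the content of the hypothesis $d_0 < \sqrt{D/2}$, and $\rho_0 \ge 1$ is what keeps $\sin\theta^* \le 1$.

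The main obstacle is the simultaneous quantitative control of the spherical-cap measures in the two different dimensions $d_0$ and $D$, and in particular tracking how the single threshold $\theta^*$ enters both estimates so that the advertised constant $C(D,d_0)$ inherits the stated monotonicities and the explicit lower bound $0.79\sqrt{d_0}/2.07^{d_0-1}$. This demands sharp two-sided estimates of $\int_0^\theta \sin^{k}t\,dt$---a lower bound for the $d_0$-dimensional cap (to make covering likely) and an upper bound for the $D$-dimensional cap (to make outliers unlikely)---together with a careful choice of the net resolution $\epsilon$ and of $\theta^*$. The genuinely subtle point is the monotonicity of $C$ in $D$: although $\gamma_0$ is independent of $D$, the balanced threshold $\theta^*$ grows with $D$, and it is through $\theta^*$ that the ambient dimension is transferred into the covering-radius bound.
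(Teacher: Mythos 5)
Your proposal follows essentially the same route as the paper: a union bound over the two tail events separated by a threshold angle, a covering-number-plus-cap argument on $\Sp^{d_0-1}$ for $\Pr[\gamma_0 \ge \theta^*]$ (with the factor of $2$ from symmetrization), and a union bound over the $\card(\cA_c)\cdot\card(\cD_0)$ pairs using Lemma~\ref{thm:dual-finite} and the ambient cap estimate for the second term, balanced by the same choice of threshold. The only quantitative wrinkle is that the paper's calibration is $\sin^{d_0-1}\gamma^* = \rho_0^{-1/2}\,(v_D/v_{D-1})^{(d_0-1)/(D-1)}$ rather than $\sin\theta^*\approx\rho_0^{-1/(2d_0)}$, which is exactly the refinement your ``approximately'' anticipates and is what produces the stated $C(D,d_0)$ and the exponent $k_0$.
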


This theorem asserts that if the dictionary $\cA$ is generated under this random model and satisfies the condition on $d_0$, then both BP and OMP give subspace-sparse recovery for any point $\b \in \cS_0$ with a probability specified in the theorem. The condition that $d_0 \ne 1$ is an artifact introduced by the technique of the proof; one can easily see that if $d_0 = 1$ then subspace-sparse representation can be recovered with probability $1$\footnote{The proof is left as an exercise.}.

Notice that the condition $D > 2{d_0}^2$ requires $D$ to be large and $d_0$ to be small, and as long as the condition is satisfied, the guaranteed probability of success also increases as $D$ increases and as $d_0$ decreases (for large enough $\rho_0$). This conforms with the previous observations that the subspace-sparse recovery works better in cases of low dimensional subspace in high dimensional ambient space \cite{Soltanolkotabi:AS13}. Moreover, the probability is a decreasing function of $\lambda$, showing that subspace-sparse recovery becomes harder if more points are added to $\cA_c$. Finally, the probability goes to $100\%$ as the sample density $\rho_0$ goes to infinity, thus one can achieve arbitrary confidence in getting subspace-sparse recovery by increasing the number of samples to be large enough.

\subsection{Geometric interpretations}

We continue the discussion of the geometric interpretations of DRC in Section \ref{sec:subspace-sparse-deterministic} and analyze the factors that affect the geometry of the problem  under the randomized model in Theorem \ref{thm:randomized}.

We first introduce some definitions. Recall that we use the notation $\Sp^{p-1} = \{\v \in \Re^p: \|\v\|_2 = 1\}$ to denote the unit sphere. Denote $\sigma_{p-1}$ to be a uniform area measure on $\Sp ^{p-1}$. For a given $\w \in \Sp^{p-1}$ and a $\theta \in [0, \pi]$, the spherical cap is a subset of $\Sp^{p-1}$ which is defined as

\begin{equation}
	\Sp _\theta ^{p-1} (\w) = \{\v \in \Sp^{p-1}, s(\w, \v) \le \theta\}.
	\label{eq:def-spherical-cap}
\end{equation}

By this definition, each yellow circle in Figure \ref{fig:geometry-3D} is a spherical cap $\Sp _{\gamma_0} ^{D-1} (\w), \w \in \cD_0$, and the DRC requires that the points in $\cA_c$ do not lie in the union of these spherical caps. With a random sampling of points in $\cA_c$, the chance that DRC is satisfied is determined by 

\begin{equation}
	\frac{\sigma_{D-1} \big( \cup_{\w \in \cD_0} \Sp _{\gamma_0} ^{D-1} (\w) \big)  }{\sigma_{D-1} \big(\Sp ^{D-1}\big)},
	\label{eq:DRC-area-ratio}
\end{equation}
which is the area of the spherical caps relative to the area of $\Sp ^{D-1}$. Obviously, if the quantity in \eqref{eq:DRC-area-ratio} is smaller then the DRC is easier to be satisfied. 

Consider increasing $D$ with all other parameters fixed in the randomized model of Theorem \ref{thm:randomized}. Note that the number and the radius of the spherical caps $\Sp _{\gamma_0} ^{D-1} (\w), \w \in \cD_0$ are all statistically independent of $D$, so we consider $\card(\cD_0)$ and $\gamma_0$ as fixed. It is known that the area of a spherical cap relative to the entire sphere, i.e. $\sigma_{D-1} \big(\Sp _{\gamma_0} ^{D-1}  (\cdot)\big) / \sigma_{D-1} \big(\Sp ^{D-1}\big)$ becomes smaller for higher dimension $D$\footnote{This is known as the phenomenon of concentration of measure, see, e.g. \cite{Ball:97}. This can also be seen from Lemma \ref{thm:covering-radius-bound}, which shows that the radio of area is upper bounded by $\sin^{D-1} \gamma_0 \cdot v_{D-1}/v_D $, which goes to $0$ as $D$ increases to infinity.}. Thus, as $D$ increases, the yellow region given by DRC decreases, and DRC becomes easier to be satisfied.

Consider now that $D$ is fixed and $d_0$ is varied. Intuitively, given a fixed number of points, it is easier to ``cover'' a lower dimensional the unit sphere $\Sp ^{d_0-1}$. In other words, the covering radius $\gamma_0$ decreases as $d_0$ decreases. Thus, decreasing $d_0$ has the effect of shrinking the yellow spherical caps in Figure \ref{fig:geometry-3D}, making DRC easier to be satisfied. 

\subsection{Roadmap of proof}

We provide a roadmap of proof for Theorem \ref{thm:randomized}. This is achieved by providing probabilistic bounds on both sides of DRC separately. In the following, we start by presenting relevant geometric results.

\subsubsection{Preliminary geometric results}

Let $B^p(r) := \{ \v \in \Re^p: \|\v\|_2 \le r \}$ be a ball of radius $1$ in space $\Re ^p$. It is well known that its volume is computed in closed form, i.e., 
\begin{equation}\
\label{eq:v_p}
\vol(B^p(r)) = v_p \cdot r^p, \text{~where~} v_p = \pi ^{\frac{p}{2}} / \Gamma(\frac{p}{2} + 1)
\end{equation}
in which $\vol(\cdot)$ denotes the volume, and $\Gamma(\cdot)$ is the Gamma function.

Based on this, we can further estimate the area of the spherical cap defined in \eqref{eq:def-spherical-cap} by the following result.

\begin{lemma}
	For any $\theta \in [0, \pi/2]$ and any $p \ge 2$,
	\begin{equation}
	\frac{v_{p-1}}{p v_p}
	\sin ^{p-1} \theta
	\le \frac{\sigma_{p-1}(\Sp _\theta ^{p-1} (\w))}{\sigma_{p-1}(\Sp ^{p-1})} 
	\le \frac{v_{p-1}}{v_p}
	\sin ^{p-1} \theta,  
	\label{eq:spherical_cap}
	\end{equation}
	in which $v_p$ is defined in \eqref{eq:v_p}.
	\label{thm:spherical-cap}
\end{lemma}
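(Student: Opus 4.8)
The plan is to reduce the stated ratio inequalities to two absolute area bounds and then prove each by an elementary volume comparison. Since the area measure $\sigma_{p-1}$ is rotation invariant, I may assume the cap is centered at the north pole $\w = e_p$, so that $\cV_\theta := \Sp_\theta^{p-1}(e_p) = \{\v \in \Sp^{p-1} : v_p \ge \cos\theta\}$. I will use the elementary surface-area identity $\sigma_{p-1}(\Sp^{p-1}) = p\,v_p$, which follows by differentiating $\vol(B^p(r)) = v_p r^p$ in $r$ at $r = 1$. Dividing the claimed inequalities \eqref{eq:spherical_cap} through by $\sigma_{p-1}(\Sp^{p-1}) = p\,v_p$, it suffices to sandwich the cap area as
\[
	v_{p-1}\sin^{p-1}\theta \;\le\; \sigma_{p-1}(\cV_\theta) \;\le\; p\,v_{p-1}\sin^{p-1}\theta ,
\]
i.e. between the $(p-1)$-volume of a ball of radius $\sin\theta$ and $p$ times that volume.

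For the lower bound I would project the cap orthogonally onto the equatorial hyperplane $\{v_p = 0\}$. Writing $\v = (\v', v_p)$ with $\v' \in \Re^{p-1}$, the cap $\cV_\theta$ is exactly the graph $v_p = \sqrt{1 - \|\v'\|_2^2}$ over the disk $\{\v' : \|\v'\|_2 \le \sin\theta\} = B^{p-1}(\sin\theta)$, since $v_p \ge \cos\theta \ge 0$ on $[0,\pi/2]$. Computing the surface-area element of this graph gives $\sigma_{p-1}(\cV_\theta) = \int_{B^{p-1}(\sin\theta)} (1 - \|\v'\|_2^2)^{-1/2}\,d\v'$, and since the integrand is at least $1$ this is bounded below by $\vol(B^{p-1}(\sin\theta)) = v_{p-1}\sin^{p-1}\theta$. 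Geometrically this is just the statement that the cap projects \emph{onto} its shadow disk and orthogonal projection does not increase area.

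The upper bound is the substantive step. Here I would pass from the surface of the cap to the solid cone-cap $R := \{x \in \Re^p : \|x\|_2 \le 1,\ \langle x, e_p\rangle \ge \|x\|_2\cos\theta\}$, the union of all radii from the origin to $\cV_\theta$. Integrating in polar coordinates yields $\vol(R) = \frac{1}{p}\sigma_{p-1}(\cV_\theta)$, so it is enough to prove $\vol(R) \le v_{p-1}\sin^{p-1}\theta$. The key observation is that $R$ is trapped inside a cylinder: for $x = (x', x_p) \in R$ the ball constraint gives $\|x'\|_2 \le \sqrt{1 - x_p^2}$ and the cone constraint gives $\|x'\|_2 \le x_p\tan\theta$, and a one-line case split at the latitude $x_p = \cos\theta$ (where both bounds equal $\sin\theta$) shows $\|x'\|_2 \le \sin\theta$ throughout, while clearly $0 \le x_p \le 1$. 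Hence $R \subseteq B^{p-1}(\sin\theta) \times [0,1]$, so $\vol(R) \le v_{p-1}\sin^{p-1}\theta$, and therefore $\sigma_{p-1}(\cV_\theta) = p\,\vol(R) \le p\,v_{p-1}\sin^{p-1}\theta$.

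I expect the main obstacle to be finding the right intermediate object for the upper bound rather than any hard estimate. The direct surface-integral computation used for the lower bound only yields $\sigma_{p-1}(\cV_\theta) \le (\cos\theta)^{-1} v_{p-1}\sin^{p-1}\theta$, whose prefactor blows up as $\theta \to \pi/2$ and so cannot produce the clean constant $p$; replacing the surface integral by the solid cone-cap volume and enclosing it in a cylinder is exactly what strips off the offending $\cos\theta$. The remaining ingredients---the polar-coordinate identity $\vol(R) = \frac{1}{p}\sigma_{p-1}(\cV_\theta)$, the identity $\sigma_{p-1}(\Sp^{p-1}) = p\,v_p$, and the case analysis bounding $\|x'\|_2$ on $R$---are routine, and the hypothesis $\theta \le \pi/2$ enters precisely to keep $\cos\theta \ge 0$, so that the cap is a single graph and the cone lies in the half-space $\{x_p \ge 0\}$.
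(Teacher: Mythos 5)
Your proof is correct, and the substantive half of it --- the upper bound --- is the same argument as the paper's: both replace the cap by the solid cone-cap $R$ (whose volume is $\tfrac{1}{p}\sigma_{p-1}(\cV_\theta)$ by polar coordinates, equivalently whose volume ratio to $B^p(1)$ equals the area ratio in the statement) and enclose $R$ in the cylinder $B^{p-1}(\sin\theta)\times[0,1]$. Where you diverge is the lower bound. The paper stays with solid volumes and inscribes a bicone inside $R$ --- the double cone with apexes at the origin and at $\w$ over the latitude-$\theta$ disk of radius $\sin\theta$, whose total volume is exactly $\tfrac{1}{p}v_{p-1}\sin^{p-1}\theta$ by the base-times-height-over-$p$ formula. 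You instead bound the surface measure directly, writing the cap as a graph over $B^{p-1}(\sin\theta)$ and noting that orthogonal projection does not increase area, so $\sigma_{p-1}(\cV_\theta)\ge v_{p-1}\sin^{p-1}\theta$; combined with $\sigma_{p-1}(\Sp^{p-1})=p\,v_p$ this gives the same constant. The two lower bounds are equally elementary and equally tight; yours has the small aesthetic advantage of not needing a second auxiliary solid, while the paper's keeps both bounds within a single volume-comparison framework and avoids computing a surface-area element. Your observation that the graph integrand $(1-\|\v'\|_2^2)^{-1/2}$ only yields an upper bound with a $(\cos\theta)^{-1}$ prefactor, which degenerates as $\theta\to\pi/2$, correctly identifies why the passage to the solid cone-cap is needed for the upper bound.
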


Equipped with this result, one can give a probabilistic lower bound on the RHS of DRC as follows.

\subsubsection{A lower bound on RHS of DRC}

Notice that according to the probabilistic model in Theorem \ref{thm:randomized}, an arbitrary point $\v \in \cD_0$ and an arbitrary point in $\w \in \cA_c$ are independent. Moreover, the point $\w$ is uniformly distributed on the unit sphere, so the effect of the angle $s(\w, \v)$ is as if holding $\v$ fixed and letting $\w$ as uniformly distributed on $\Sp^{D-1}$ at random. By using upper bound on the area of spherical cap in \eqref{eq:spherical_cap}, one can get for any $\gamma^* \in [0, \pi/2]$ that $P(s(\v, \w) > \gamma^*) \ge 1 - \frac{v_{D-1}}{v_D} \sin^{D-1}\gamma^*$. One can then apply union bound on all pairs of points $\cD_0 \times \cA_c$. Notice $\card(\cD_0) \le \binom{s_0}{d_0} \cdot 2^{d_0}$ by Lemma \ref{thm:dual-finite} and $\card(\cA_c) = \lambda \cdot s_0$, we get

\begin{equation}
P(s(\cD_0, \cA_c) > \gamma^*) \ge 1 - \lambda s_0 \cdot \binom{s_0}{d_0} 2^{d_0} \cdot \frac{v_{D-1}}{v_D} \sin ^{D-1}\gamma^*.
\label{eq:bound-on-right}
\end{equation}

We are left to give an upper bound on the LHS of DRC. Essentially, we need to give a probabilistic bound on the covering radius.

\subsubsection{An upper bound on covering radius}

Given the unit sphere $\Sp ^{p-1}$ and a positive integer $M$, we consider the problem that if there are $M$ points independently and uniformly drawn from the sphere $\Sp ^{p-1}$ at random, how well-spread out they are in terms of covering radius. Intuitively, as more points are sampled, the unit sphere is expected to be better covered by the samples and the covering radius is expected to be smaller. In the following, we give a rigorous statement of this intuition and proofs are delayed to appendix. Our proof draws inspiration from the work \cite{Heckel:arxiv13}. The idea is simple: assume that there is a set of circles of radius $\epsilon$ on $\Sp ^{p-1}$ that can cover the entire unit sphere (i.e., an $\epsilon$-covering as defined below), if the $M$ sample points are distributed on $\Sp ^{p-1}$ in a way that every small circle contains at least one sample point, then the covering radius can be bounded by $2 \times \epsilon$. Before discussing how this is realized, we first introduce two definitions.

\begin{definition}
	A set $\cV \subseteq \Sp^{p-1}$ is called an $\epsilon$-covering of $\Sp^{p-1}$ if the covering radius of $\cV$ is no more than $\epsilon$. Given $\epsilon > 0$, the covering number of $\Sp^{p-1}$, denoted by $\mathcal{C}(\Sp^{p-1}, \epsilon)$ is the cardinality of the smallest $\epsilon$-covering of $\Sp^{p-1}$.
\end{definition}

First, it is desirable to find an $\epsilon$-covering of $\Sp^{p-1}$ with as small cardinality as possible. 
\begin{lemma}
	The covering number of $\Sp^{p-1}, p \ge 2$ is bounded by 
	\[
	\mathcal{C}(\Sp^{p-1}, \epsilon) \le \frac{p}{\frac{v_{p-1}}{v_p}\sin^{p-1}\frac{\epsilon}{2}}, \forall \epsilon \le \frac{\pi}{4}.
	\]
	\label{thm:covering-number}
\end{lemma}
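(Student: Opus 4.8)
The plan is to bound the covering number by a volumetric/packing argument that turns the problem of covering the sphere into one of packing disjoint spherical caps. The key idea is the standard duality between covering and packing: if I take a maximal set of points on $\Sp^{p-1}$ that are pairwise separated by spherical distance more than $\epsilon$ (a maximal $\epsilon$-separated set, i.e. a maximal packing), then by maximality no further point can be added, so every point of $\Sp^{p-1}$ lies within distance $\epsilon$ of some chosen point. Hence such a maximal $\epsilon$-separated set is automatically an $\epsilon$-covering, and its cardinality upper bounds $\mathcal{C}(\Sp^{p-1},\epsilon)$.

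First I would let $\cV = \{\w_1,\dots,\w_N\}$ be a maximal set of points on $\Sp^{p-1}$ with pairwise spherical distance strictly greater than $\epsilon$. By the maximality argument above, $\cV$ is an $\epsilon$-covering, so $\mathcal{C}(\Sp^{p-1},\epsilon) \le N$. To bound $N$, I would observe that the open spherical caps $\Sp^{p-1}_{\epsilon/2}(\w_i)$ of radius $\epsilon/2$ centered at the $\w_i$ are pairwise disjoint: if a point lay in two of them, the triangle inequality for the metric $s(\cdot,\cdot)$ on $\Sp^{p-1}$ would force $s(\w_i,\w_j) \le \epsilon$, contradicting separation. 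Since these $N$ disjoint caps all sit inside the whole sphere, summing their areas gives
\[
N \cdot \sigma_{p-1}\big(\Sp^{p-1}_{\epsilon/2}(\w)\big) \le \sigma_{p-1}\big(\Sp^{p-1}\big),
\]
so $N \le \sigma_{p-1}(\Sp^{p-1}) / \sigma_{p-1}(\Sp^{p-1}_{\epsilon/2}(\w))$.

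The final step is to invoke the lower bound on the area of a spherical cap from Lemma \ref{thm:spherical-cap}: for the cap of angular radius $\epsilon/2$ (which requires $\epsilon/2 \le \pi/2$, guaranteed here by the hypothesis $\epsilon \le \pi/4$), the left inequality of \eqref{eq:spherical_cap} gives
\[
\frac{\sigma_{p-1}(\Sp^{p-1}_{\epsilon/2}(\w))}{\sigma_{p-1}(\Sp^{p-1})} \ge \frac{v_{p-1}}{p\, v_p}\sin^{p-1}\frac{\epsilon}{2}.
\]
Substituting this into the bound on $N$ yields exactly $\mathcal{C}(\Sp^{p-1},\epsilon) \le p \big/ \big(\tfrac{v_{p-1}}{v_p}\sin^{p-1}\tfrac{\epsilon}{2}\big)$, as claimed. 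I expect the main subtlety — rather than a genuine obstacle — to be the bookkeeping at the packing/covering step: one must be careful that the relevant separation uses the metric $s(\cdot,\cdot)$ and that the factor of $\tfrac12$ in the cap radius is what makes disjointness hold, and one should confirm the $\epsilon\le\pi/4$ hypothesis is exactly what lets the half-radius caps fall in the regime $[0,\pi/2]$ where Lemma \ref{thm:spherical-cap} applies. Everything else is a direct volume comparison.
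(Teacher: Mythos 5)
Your proposal is correct and follows essentially the same route as the paper: the paper's greedy construction produces exactly a maximal $\epsilon$-separated set, which is shown to be an $\epsilon$-covering by maximality, and its cardinality is then bounded by the disjointness of the radius-$\epsilon/2$ caps together with the lower bound in Lemma \ref{thm:spherical-cap}. The only cosmetic difference is that you phrase the construction as ``take a maximal packing'' while the paper builds it point by point; the volume-comparison step and the use of the cap-area lower bound are identical.
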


Given this, we further lower bound the probability that every circle in the $\epsilon$-covering contains at least one sample point, and the bound on covering radius can be obtained.
\begin{theorem}
	Let $\cP \subseteq \Sp^{p-1}, p \ge 2$ be a set of $K$ points that are drawn independently and uniformly at random on $\Sp^{p-1}$. Then for any $\gamma^* \le \pi/2$, it has $\gamma(\pm \cP) < \gamma^*$ with probability at least $1 - \frac{p v_p}{v_{p-1}}\frac{1}{\sin ^{p-1} \frac{\gamma^*}{4}} \exp (-K \frac{2 v_{p-1}}{p v_p}\sin ^{p-1} \frac{\gamma^*}{2})$
	\label{thm:covering-radius-bound}
\end{theorem}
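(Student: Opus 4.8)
The plan is to prove the bound by a covering argument on the sphere, following the sketch given before the statement. I set $\epsilon := \gamma^*/2$, which satisfies $\epsilon \le \pi/4$ since $\gamma^* \le \pi/2$. By Lemma \ref{thm:covering-number} there is an $\epsilon$-covering $\cV = \{\u_1, \dots, \u_N\} \subseteq \Sp^{p-1}$ with $N = \card(\cV) \le \frac{p v_p}{v_{p-1} \sin^{p-1}\frac{\epsilon}{2}}$. I define the ``good'' event $E$ to be that every spherical cap $\Sp^{p-1}_\epsilon(\u_i)$, $i=1,\dots,N$, contains at least one point of the symmetrized set $\pm\cP$. On $E$ I claim $\gamma(\pm\cP) \le 2\epsilon = \gamma^*$: for an arbitrary $\w \in \Sp^{p-1}$, the covering property gives a center $\u_i$ with $s(\w,\u_i) \le \epsilon$, and on $E$ the cap around $\u_i$ contains some $\mathbf{q} \in \pm\cP$ with $s(\u_i, \mathbf{q}) \le \epsilon$; since $s(\cdot,\cdot)$ is a metric on $\Sp^{p-1}$, the triangle inequality yields $s(\w, \mathbf{q}) \le 2\epsilon$, so $\mathbf{q}$ witnesses $s(\pm\cP, \w) \le 2\epsilon$. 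Taking the supremum over $\w$ gives the claim (note $E$ already forces $\pm\cP$ to span $\Re^p$, so the maximum is genuinely over the full sphere).

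It then remains to bound $P(E^c)$. Fix one center $\u_i$. Because $\epsilon < \pi/2$, the cap $\Sp^{p-1}_\epsilon(\u_i)$ and its antipode $\Sp^{p-1}_\epsilon(-\u_i)$ are disjoint, each of normalized area $q_\epsilon := \sigma_{p-1}(\Sp^{p-1}_\epsilon(\u_i))/\sigma_{p-1}(\Sp^{p-1}) \ge \frac{v_{p-1}}{p v_p}\sin^{p-1}\epsilon$ by the lower bound of Lemma \ref{thm:spherical-cap}. A single sample $\mathbf{q}_k$ of $\cP$ contributes a point of $\pm\cP$ to the cap exactly when $\mathbf{q}_k \in \Sp^{p-1}_\epsilon(\u_i) \cup \Sp^{p-1}_\epsilon(-\u_i)$, an event of probability $2q_\epsilon$. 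As the $K$ samples are independent, $P(\Sp^{p-1}_\epsilon(\u_i) \cap \pm\cP = \emptyset) = (1-2q_\epsilon)^K \le e^{-2 K q_\epsilon} \le \exp(-\frac{2K v_{p-1}}{p v_p}\sin^{p-1}\epsilon)$. A union bound over the $N$ centers, followed by substitution of the covering-number bound and $\epsilon = \gamma^*/2$, yields
\[
P(E^c) \le N \exp\Big(-\tfrac{2K v_{p-1}}{p v_p}\sin^{p-1}\epsilon\Big) \le \frac{p v_p}{v_{p-1}}\frac{1}{\sin^{p-1}\frac{\gamma^*}{4}} \exp\Big(-K\tfrac{2 v_{p-1}}{p v_p}\sin^{p-1}\tfrac{\gamma^*}{2}\Big),
\]
so that $P(\gamma(\pm\cP) \le \gamma^*) \ge P(E)$ is at least the stated quantity.

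The crucial and only non-routine point is the factor of $2$ in the exponent, which comes entirely from the symmetrization: each sample of $\cP$ gets two disjoint chances to land in a given cap, directly or via its antipode, and the two target caps are disjoint precisely because $\epsilon \le \pi/4 < \pi/2$. Tracking this carefully is the main thing to get right; the area estimate and the union bound are routine once Lemmas \ref{thm:spherical-cap} and \ref{thm:covering-number} are in hand, and the hypothesis $\epsilon \le \pi/4$ needed to invoke the latter holds automatically from $\gamma^* \le \pi/2$. One minor technicality is that the argument gives $\gamma(\pm\cP) \le \gamma^*$ rather than the strict inequality stated; this is harmless, since Theorem \ref{thm:randomized} combines it only with a strict lower bound on $s(\cA_c, \cD_0)$, but if strictness is wanted one may rerun the argument with caps of radius slightly below $\epsilon$, or observe that a sample falling at spherical distance exactly $\gamma^*$ from a test point is a probability-zero event.
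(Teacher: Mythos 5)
Your proof is correct and follows essentially the same route as the paper's: the same $\epsilon=\gamma^*/2$ covering construction via Lemma \ref{thm:covering-number}, the same triangle-inequality argument bounding the covering radius by $2\epsilon$, and the same union bound with the lower cap-area estimate from Lemma \ref{thm:spherical-cap}. You are in fact slightly more careful than the paper at two points it glosses over --- justifying the factor of $2$ in the exponent by the disjointness of the cap and its antipode (which does require $\epsilon<\pi/2$), and noting the non-strict versus strict inequality issue.
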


With this result, the LHS of DRC is upper bounded by the following.

\begin{multline}
	P(\gamma_0 < \gamma^*) \ge 1 -  \frac{d_0 \cdot v_{d_0}}{v_{(d_0-1)}} \cdot\frac{1}{\sin^{(d_0-1)}\frac{\gamma^*}{4}} \\ \cdot \exp (-s_0 \frac{2 v_{(d_0-1)}}{v_{d_0}}\sin^{(d_0-1)} \frac{\gamma^*}{2}),
	\label{eq:bound-on-left}
\end{multline}
for every $\gamma^* \in (0, \frac{\pi}{2}]$.

\subsubsection{Final proof} One can see that by combining \eqref{eq:bound-on-right} and \eqref{eq:bound-on-left}, we can get a probability that $\gamma_0 < s(\cD_0,\cA_c)$ in terms of the parameter $\gamma^* \in (0, \frac{\pi}{2}]$. The result in \eqref{eq:randomized} is subsequently acquired by taking a specific value of $\gamma^*$. The details are deferred to appendix.

\section{Applications}
\label{sec:applications}

In this section, we apply the theoretical results in the previous sections to the analysis of the traditional sparse recovery. In this process, we also establish the relation between the PRC/DRC and the mutual coherent condition in sparse recovery. Moreover, we also discuss the application of our results to the analysis of sparse representation based recognition.

\subsection{Sparse Recovery}
\label{sec:sparse-recovery}

In sparse recovery, the task is to reconstruct an $s_0$-sparse signal $\x$ (i.e. $\x$ has at most $s_0$ nonzero entries) from the observation $\b = \A \x$ for some dictionary $\cA$. In order to analyze the problem by the subspace-sparse representation results, we take the set $\cJ_0$ to be the $s_0$ columns corresponding to the nonzero entries of $\x$ and get a partition of $\cA$ into $\cA_0 \cup \cA_c$. If $\cA_0$ has the property that its atoms are linearly independent, then $\x$ is the unique subspace-sparse solution. In this case, subspace-sparse recovery and subspace-sparse recovery are equivalent, in the sense that if one guarantees finding subspace-sparse representation, then correct sparse recovery can be achieved. Consequently, by using our PRC and DRC results, we can have the following result.

\begin{theorem}
	Given a dictionary $\cA$, any $s_0$-sparse vector $\x$ can be recovered from the observation $\b = \A \x$ by BP and OMP if for any partition of $\cA$ into $\cA_0$ and $\cA_c$ where $\card(\cA_0) = s_0$, it has that atoms in $\cA_0$ are linearly independent and that PRC (respectively, DRC) holds.
	\label{thm:sparse-recovery}
\end{theorem}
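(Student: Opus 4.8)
The plan is to reduce the statement to the subspace-sparse recovery guarantees already proved in Theorems~\ref{thm:PRC} and~\ref{thm:DRC}, using the linear independence hypothesis to pass from a subspace-sparse solution to the exact vector $\x$. First I would fix an arbitrary $s_0$-sparse vector $\x$ and choose an index set $\cJ_0 \subseteq \cJ$ with $\card(\cJ_0) = s_0$ that contains the support of $\x$; if $\x$ has strictly fewer than $s_0$ nonzero entries we simply pad the support with arbitrary extra indices. This choice induces a partition $\cA = \cA_0 \cup \cA_c$ with $\card(\cA_0) = s_0$, and since every atom contributing to $\b = \A\x$ is indexed by $\cJ_0$, we have $\b \in \cS_0 := \spann(\cA_0)$.

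Next I would specialize the hypothesis to this particular partition. By assumption the atoms of $\cA_0$ are linearly independent and the PRC (resp.\ the DRC) holds, so Theorem~\ref{thm:PRC} (resp.\ Theorem~\ref{thm:DRC}) applies directly: every vector in $\BP(\cA, \b)$ and in $\OMP(\cA, \b)$ is subspace-sparse, meaning its nonzero entries are confined to $\cJ_0$.

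The final and most delicate step is to upgrade subspace-sparsity to exact recovery. Let $\hat\x$ be any output of BP or OMP. Both $\hat\x$ and $\x$ are supported on $\cJ_0$ and satisfy $\A\hat\x = \b = \A\x$; writing $\A_0$ for the submatrix of columns indexed by $\cJ_0$ and restricting accordingly gives $\A_0(\hat\x|_{\cJ_0} - \x|_{\cJ_0}) = \0$. Because the columns of $\A_0$ are linearly independent, the kernel of $\A_0$ is trivial, forcing $\hat\x|_{\cJ_0} = \x|_{\cJ_0}$; since both vectors vanish off $\cJ_0$ this yields $\hat\x = \x$. As this holds for every element of the (possibly set-valued) BP/OMP output, the recovery is exact.

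I do not expect a substantive obstacle, since the result is essentially a specialization of the subspace-sparse theorems to the case of an independent inlier set. The one point demanding care is the uniqueness argument of the last paragraph: linear independence of $\cA_0$ is precisely what promotes the qualitative statement ``uses only atoms spanning $\cS_0$'' to the quantitative statement ``equals the unique sparse representation $\x$,'' and without it the subspace-sparse conclusion alone would not pin down $\hat\x$. I would also double-check that the universally quantified hypothesis is invoked in the correct direction---it supplies, for the specific support set $\cJ_0$ we selected, exactly the two properties (independence and PRC/DRC) required, so no quantifier mismatch occurs.
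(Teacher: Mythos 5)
Your proposal is correct and follows essentially the same route as the paper, which proves this theorem only by the brief remark preceding it: fix the support of $\x$, pad it to a set $\cJ_0$ of size $s_0$, invoke Theorem~\ref{thm:PRC} (resp.~\ref{thm:DRC}) for that partition to get subspace-sparsity, and use linear independence of $\cA_0$ to conclude that the subspace-sparse solution is unique and hence equals $\x$. Your write-up actually makes explicit the two details the paper leaves implicit (the padding when $\x$ has fewer than $s_0$ nonzeros, and the trivial-kernel argument for uniqueness), so there is nothing to correct.
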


This result serves as a new condition for guaranteeing reconstruction of sparse signals. Its geometric interpretation is the same as that of PRC and DRC for the subspace-sparse recovery, i.e., for any $s_0$ atoms of the dictionary, they should be well distributed in their span, while all other atoms should be sufficiently away from this span (by PRC) or from a subset of the span (by DRC).

For the purpose of checking the conditions of the theorem, if any $s_0$ atoms in $\cA$ are linearly independent, then subsequent checking of the PRC and DRC is easy, as explained below. First, the dual points $\cD_0$ can be written out explicitly:

\begin{lemma}
	For $\cA_0$ which has $s_0$ linearly independent atoms, the set of dual points, $\cD_0$, contains exactly $2^{s_0}$ points specified by $\{\A_0 (\A_0^\transpose \A_0) ^{-1} \cdot \u, \u \in U_{s_0}\}$, where $U_{s_0} := \{ [u_1, \cdots, u_{s_0}], u_i = \pm 1, i = 1, \cdots, s_0 \}$.
	\label{thm:sparse-recovery-dualpoint}
\end{lemma}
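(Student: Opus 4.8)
The plan is to exploit the linear independence of $\cA_0$ to obtain an explicit coordinate description of the polar set $\cK_0^o$ and then read off its extreme points directly. Since the $s_0$ atoms in $\cA_0$ are linearly independent, we have $d_0 = \dim(\cS_0) = s_0$, the matrix $\A_0 \in \Re^{D \times s_0}$ has full column rank, the Gram matrix $\A_0^\transpose \A_0$ is invertible, and $\cS_0 = \spann(\cA_0)$ coincides with the column space of $\A_0$. First I would rewrite the relative polar set, using the description noted after Definition \ref{def:polar-set}, as $\cK_0^o = \{\v \in \cS_0 : \|\A_0^\transpose \v\|_\infty \le 1\}$.

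Next I would pass to coordinates. Every $\v \in \cS_0$ can be written uniquely as $\v = \A_0 \mathbf{c}$ for some $\mathbf{c} \in \Re^{s_0}$, and setting $\y := \A_0^\transpose \v = (\A_0^\transpose \A_0)\mathbf{c}$ gives a linear bijection between $\cS_0$ and $\Re^{s_0}$ whose inverse is $\v = \A_0 (\A_0^\transpose\A_0)^{-1}\y$. Under this change of variables the constraint $\|\A_0^\transpose \v\|_\infty \le 1$ becomes exactly $\|\y\|_\infty \le 1$, so $\cK_0^o$ is the image of the cube $[-1,1]^{s_0}$ under the injective linear map $M := \A_0(\A_0^\transpose\A_0)^{-1}$. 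The extreme points of the cube $[-1,1]^{s_0}$ are precisely the sign vectors in $U_{s_0}$, of which there are $2^{s_0}$.

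It then remains to show that $M$ carries the extreme points of the cube bijectively onto the extreme points of $\cK_0^o$, which is where the only real care is needed. For one direction, if $\u \in U_{s_0}$ and $M\u = (1-\lambda)\v_1 + \lambda\v_2$ with $\v_i = M\y_i \in \cK_0^o$ and $\lambda \in (0,1)$, then injectivity of $M$ forces $\u = (1-\lambda)\y_1 + \lambda\y_2$; since $\u$ is a vertex of the cube this yields $\y_1 = \y_2 = \u$, so $M\u$ is extreme in $\cK_0^o$. For the converse, any extreme point of $\cK_0^o$ equals $M\y$ for some $\y \in [-1,1]^{s_0}$, and the same injectivity argument shows that $\y$ must itself be extreme in the cube, hence $\y \in U_{s_0}$. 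Distinctness of the $2^{s_0}$ resulting points is immediate from injectivity of $M$, so $\cD_0 = \{\A_0(\A_0^\transpose\A_0)^{-1}\u : \u \in U_{s_0}\}$ has exactly $2^{s_0}$ elements.

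As a consistency check, this matches the bound $2^{d_0}\binom{s_0}{d_0}$ of Lemma \ref{thm:dual-finite} specialized to $d_0 = s_0$, where the binomial coefficient equals $1$. The main obstacle is thus not computational but conceptual: rigorously justifying that an injective linear map preserves the extreme-point structure in both directions. Everything else reduces to bookkeeping around the relative polar set and the Gram-matrix change of coordinates.
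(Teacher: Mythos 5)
Your proof is correct and rests on the same core mechanism as the paper's: the change of coordinates $\v = \A_0(\A_0^\transpose\A_0)^{-1}\y$ identifying $\cK_0^o$ with the image of the cube $[-1,1]^{s_0}$ under an injective linear map, and the injectivity argument showing that vertices of the cube map to extreme points of $\cK_0^o$ (the paper carries this out entrywise, using that $\pm 1$ is extreme in $[-1,1]$, which is the same argument you invoke for the cube's vertices). The one place you diverge is in ruling out \emph{additional} dual points: the paper simply cites the counting bound of Lemma \ref{thm:dual-finite}, which with $d_0 = s_0$ gives $\card(\cD_0) \le 2^{s_0}$, and concludes that the $2^{s_0}$ exhibited points exhaust $\cD_0$; you instead prove the converse inclusion directly, showing that any extreme point of $\cK_0^o$ pulls back under the injective map to an extreme point of the cube. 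Your route is slightly more self-contained, since it does not lean on the linear-programming basic-feasible-solution count behind Lemma \ref{thm:dual-finite}, and it makes explicit the general fact that an injective linear map induces a bijection of extreme points between a convex body and its image; the paper's route is shorter given that the counting lemma is already in hand. Both are complete.
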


The proof is in the appendix. With the dual points, one can then compute $s(\cA_c, \cS_0)$ and $s(\cA_c, \cD_0)$ on the RHS of PRC and DRC. Moreover, the covering radius $\gamma_0$ can also be computed by the relation in Lemma \ref{thm:bound-polar}, i.e.
\begin{multline}
	\cos \gamma_0 =  1 / \max\{ \|\v\|_2: \v \in \cK_0^o \} \\
	 =1 / \max\{ \|\v\|_2: \v \in \cD_0 \},
\end{multline}
where the last equality follows from the fact that $\cD_0$ is the set of extreme points of $\cK_0^o$. Thus, all terms in PRC and DRC can be computed. 

At the end of this section, we point out that the result of Theorem \ref{thm:sparse-recovery} can be compared with traditional sparse recovery results. Specifically, we compare it with the result that uses mutual coherence, $\mu(\cA)$, which is defined as the largest absolute inner product between atoms of $\cA$. It is known that $\mu(\cA) < \frac{1}{2{s_0}-1}$ is a sufficient condition for OMP and BP \cite{DonohoElad:PNAS03,Tropp:TIT04} to recover $s_0$-sparse signals. We show that this is a stronger requirement than that of Theorem \ref{thm:sparse-recovery}.

\begin{theorem}
	If a dictionary $\cA$ satisfies $\mu(\cA) < \frac{1}{2{s_0}-1}$, then for any partition of $\cA$ into $\cA_0$ and $\cA_c$ where $\card(\cA_0) = s_0$, it has that the atoms in $\cA_0$ are linearly independent and that PRC and DRC hold.
	\label{thm:sparse-recovery-compare}
\end{theorem}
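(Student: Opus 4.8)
The plan is to reduce all three claims---linear independence of $\cA_0$, the PRC, and the DRC---to a single spectral estimate on the Gram matrix $G := \A_0^\transpose \A_0$. The diagonal entries of $G$ equal $1$ (atoms are normalized) and every off-diagonal entry is an inner product of two distinct atoms, hence bounded in magnitude by $\mu(\cA)$. Gershgorin's circle theorem then gives $\lambda_{\min}(G) \ge 1 - (s_0-1)\mu(\cA)$. Since $\mu(\cA) < \frac{1}{2s_0-1} < \frac{1}{s_0-1}$, this bound is strictly positive, so $G$ is invertible and the atoms in $\cA_0$ are linearly independent. This also legitimizes using Lemma \ref{thm:sparse-recovery-dualpoint} (the explicit dual points) and the projection $P_{\cS_0} = \A_0 G^{-1}\A_0^\transpose$.

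Next I would bound the two sides of the PRC. For the left side, combining Lemma \ref{thm:bound-polar} with the fact that $\cD_0$ is the set of extreme points of $\cK_0^o$ gives $\cos\gamma_0 = 1/\max_{\v\in\cD_0}\|\v\|_2$. Writing a dual point as $\v = \A_0 G^{-1}\u$ with $\u \in U_{s_0}$, a direct computation yields $\|\v\|_2^2 = \u^\transpose G^{-1}\u \le s_0/\lambda_{\min}(G)$, since $\|\u\|_2^2 = s_0$; hence $\cos\gamma_0 \ge \sqrt{\lambda_{\min}(G)/s_0}$. For the right side, using $\cos s(\cA_c,\cS_0) = \max_{j\in\cJ_c}\|P_{\cS_0}\a_j\|_2$ and setting $\c = \A_0^\transpose\a_j$ (whose entries are inner products of distinct atoms, so $\|\c\|_2^2 \le s_0\,\mu(\cA)^2$), I obtain $\|P_{\cS_0}\a_j\|_2^2 = \c^\transpose G^{-1}\c \le s_0\,\mu(\cA)^2/\lambda_{\min}(G)$, so that $\cos s(\cA_c,\cS_0) \le \mu(\cA)\sqrt{s_0/\lambda_{\min}(G)}$.

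Finally I would combine the two estimates. Both $\gamma_0$ and $s(\cA_c,\cS_0)$ lie in $[0,\pi/2]$, where cosine is strictly decreasing, so the PRC $\gamma_0 < s(\cA_c,\cS_0)$ is equivalent to $\cos\gamma_0 > \cos s(\cA_c,\cS_0)$, and it suffices that my lower bound exceed my upper bound, i.e. $\sqrt{\lambda_{\min}(G)/s_0} > \mu(\cA)\sqrt{s_0/\lambda_{\min}(G)}$. Squaring and clearing denominators, this reads $\lambda_{\min}(G) > s_0\,\mu(\cA)$; inserting $\lambda_{\min}(G)\ge 1-(s_0-1)\mu(\cA)$, it suffices that $1-(s_0-1)\mu(\cA) > s_0\,\mu(\cA)$, which rearranges to precisely $\mu(\cA) < \frac{1}{2s_0-1}$. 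Thus the PRC holds, and since the PRC implies the DRC (Theorems \ref{thm:PRC} and \ref{thm:DRC}), the DRC holds as well.

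The main obstacle, and the step worth checking carefully, is the bookkeeping that makes the threshold come out to exactly $\frac{1}{2s_0-1}$ rather than something looser: the factor $s_0$ appears once in the covering-radius bound and once in the separation bound, while the same Gershgorin estimate $1-(s_0-1)\mu(\cA)$ is reused on both sides, so the two copies of $s_0$ must combine with the $(s_0-1)$ to yield the single $(2s_0-1)$. I would verify that the cosine-to-angle translation is valid (both angles in $[0,\pi/2]$, so cosines are nonnegative and squaring preserves the inequality) and confirm the degenerate case $s_0 = 1$, where $G=[1]$, $\gamma_0 = 0$, and the conclusion is immediate.
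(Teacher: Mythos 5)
Your proof is correct and its overall skeleton matches the paper's: both arguments run Gershgorin on the Gram matrix $\A_0^\transpose \A_0$ to get $\lambda_{\min} \ge 1-(s_0-1)\mu(\cA)$, both use the explicit dual points of Lemma \ref{thm:sparse-recovery-dualpoint} to lower-bound $\cos\gamma_0$ by $\sqrt{\lambda_{\min}/s_0}$, and both then verify the PRC by comparing cosines, landing on the same threshold $\mu(\cA)<\tfrac{1}{2s_0-1}$. The one step where you genuinely diverge is the upper bound on $\cos s(\cA_c,\cS_0)$: the paper bounds it by $\mu(\cA)/\cos\gamma_0$ via an $\ell_1$/LP-duality argument (for unit $\v\in\cS_0$, write $\v=\A_0\x^*$ with $\x^*$ the minimum-$\ell_1$ representation, bound $\|\x^*\|_1\le 1/\cos\gamma_0$ by strong duality, and use $\|\A_c^\transpose\A_0\|_\infty\le\mu(\cA)$), whereas you bound $\|P_{\cS_0}\a_j\|_2^2=(\A_0^\transpose\a_j)^\transpose(\A_0^\transpose\A_0)^{-1}(\A_0^\transpose\a_j)\le s_0\mu(\cA)^2/\lambda_{\min}$ by a direct spectral estimate on the projection. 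Your route is more elementary (pure linear algebra, no optimization duality) and reuses the same $\lambda_{\min}$ quantity on both sides, which makes the bookkeeping behind the $(2s_0-1)$ threshold more transparent; the paper's route has the small advantage of producing the intermediate inequality $\cos s(\cA_c,\cS_0)\le\mu(\cA)/\cos\gamma_0$, which is stated in terms of the geometric quantity $\gamma_0$ itself rather than the eigenvalue surrogate. After inserting Gershgorin, the two bounds coincide, so neither is sharper in the end. A minor bonus of your write-up is that you actually derive the linear independence of $\cA_0$ from the Gershgorin bound, whereas the paper merely cites it as well established.
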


The proof is in the appendix. This result shows that the PRC/DRC conditions in Theorem \ref{thm:sparse-recovery} are implied by the condition of mutual coherence. While the mutual coherence condition requires all atoms of $\cA$ to be incoherent from each other, the PRC and DRC provide more detailed requirements, in terms of the distribution of points $\cA_0$ as well as the relation of $\cA_0$ and $\cA_c$.

\subsection{Sparse Classification}
\label{sec:sparse-classification}

We can use the deterministic and randomized results for subspace-sparse recovery for the analysis of the sparse representation based classification (SRC) method. Assume that we are given a dictionary $\cA := \{ \a_j, j \in \cJ\}$ which contains data from a union of $n$ subspaces, i.e., there exist a partition of $\cJ$ into $\cJ_1, \cdots, \cJ_n$, such that any two different set $\cJ_i$ and $\cJ_j$ do not intersect and that $\cup_i \cJ_i = \cJ$, and that $\cA_i := \{ \a_j, j \in \cJ_i \}$ contains points from a low dimensional subspace $\cS_i$. Following the notational tradition, we assume that the $i$-th group has $s_i$ points in subspace of dimension $d_i$, and the geometric quantities of $\gamma_i, \cK_i, \cK_i^o$ and $\cD_i$ can all be defined.

The task in the classification is that given this dictionary $\cA$ where we have an explicit knowledge of the partition $\{ \cJ_i \}_{i=1}^n$, we want to find the membership of any other point $\b$ that lies in the union of subspaces $\cup_{i=1}^n \cS_i$ determined by which specific subspace it belongs to\footnote{We assume that any two subspaces intersect only at the origin, so that such membership is unique.}. In the work of \cite{Wright:PAMI09}, the authors proposed the SRC which finds a sparse representation of $\b$ as in \eqref{eq:L0} by BP or OMP\footnote{While it is proposed to use BP in \cite{Wright:PAMI09}, the idea can be easily extended to using OMP. We study both of them.}. Ideally, the coefficient vector $\x$ for representing $\b$ is subspace-sparse, i.e. is such that the nonzero entries of $\x$ are all in the set $\cJ_i$ in which $i$ is the index of the subspace that $\b$ belongs to, so the query $\b$ can be correctly classified. Other techniques are proposed for SRC to robustify the method so that one can classify a point when the representation $\x$ has nonzero coefficients in two or more groups. however, we analyze here the conditions for guaranteeing subspace-sparse recovery, which is sufficient for SRC to give the correct class label.

First, our result of PRC in Theorem \ref{thm:PRC} and DRC in Theorem \ref{thm:DRC} can be easily applied here for analyzing when a correct classification can be guaranteed. Here, we use the DRC result, and formulate the following theorem.

\begin{theorem}
	Given $\cA = \cup_{i=1}^n \cA_i$, assume $\|\a_j\|_2=1, \forall \a_j \in \cA$, subspace classification by BP and OMP succeeds for any point $\b \in \cup_{i=1}^n \cS_i$ if
	\begin{equation}
	\gamma_i < s(\cD_i, \cA \backslash \cA_i), \forall i = 1, \cdots, n,
	\end{equation}
	in which $\gamma_i$ is the covering radius of $\pm \cA_i$, $\cD_i$ is the set of dual points of $\cA_i$, the backslash in $\cA \backslash \cA_i$ denotes the set different.
	\label{thm:sparse-classification-deterministic}
\end{theorem}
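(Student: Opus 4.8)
The plan is to reduce this multi-subspace classification statement to $n$ separate instances of the single-subspace DRC result already proved in Theorem \ref{thm:DRC}, one per subspace. The nontrivial work has all been done in Theorem \ref{thm:DRC}; what remains is to organize the reduction and check that subspace-sparse recovery is exactly what SRC needs for a correct label.

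First I would fix an arbitrary query point $\b \in \cup_{i=1}^n \cS_i$, discarding the degenerate case $\b = \0$ so that I may assume $\b \ne \0$. Because distinct subspaces are assumed to intersect only at the origin, there is a \emph{unique} index $i$ with $\b \in \cS_i$, and this $i$ is by definition the true class label of $\b$. Next I would recast the problem as a subspace-sparse recovery instance relative to this single subspace: set $\cA_0 := \cA_i$ and $\cA_c := \cA \setminus \cA_i = \cup_{j \ne i} \cA_j$, so that $\cA = \cA_0 \cup \cA_c$ and $\spann(\cA_0) = \cS_i$. With these identifications, $\gamma_0 = \gamma_i$ is the covering radius of $\pm\cA_i$ and $\cD_0 = \cD_i$ is the set of dual points of $\cA_i$. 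Using that $s(\cdot, \cdot)$ is symmetric on sets (both operands are reduced by an infimum over all pairs), the theorem hypothesis $\gamma_i < s(\cD_i, \cA \setminus \cA_i)$ is then \emph{verbatim} the DRC $\gamma_0 < s(\cA_c, \cD_0)$ for this partition.

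Applying Theorem \ref{thm:DRC} to this partition yields that $\BP(\cA, \b)$ and $\OMP(\cA, \b)$ are subspace-sparse for every $\b \in \cS_i$, i.e., their nonzero entries are supported entirely within $\cJ_i$. Since $\b \ne \0$, the recovered representation is not identically zero, so at least one nonzero entry lies in $\cJ_i$ and none lies outside it. Hence SRC, which reads off the label from the group carrying the nonzero coefficients, returns $i$, matching the true label. Since $\b$ was an arbitrary point of $\cup_{i=1}^n \cS_i$ and the DRC hypothesis is assumed to hold for \emph{every} $i$ (quantifier $\forall i$), the argument applies no matter which subspace $\b$ falls in, and the conclusion holds for all query points.

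The main obstacle here is bookkeeping rather than genuine mathematical difficulty. Concretely, one must (i) confirm that the ``$\forall i$'' in the hypothesis is exactly what lets the reduction go through uniformly over all possible landing subspaces, and (ii) verify that ``subspace-sparse with respect to $\cA_i$'' translates into a correct classification, which is where the trivial-intersection assumption is essential: it guarantees the true subspace, and hence the correct label, is unique, so that a representation supported in $\cJ_i$ is unambiguously decoded as class $i$. Once these two points are in place, the substantive claim—that the DRC forces the representation into the correct subspace—is inherited directly from Theorem \ref{thm:DRC} with no further estimation required.
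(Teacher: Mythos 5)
Your proposal is correct and follows exactly the paper's route: the paper's own proof is the one-line remark that the theorem is ``a trivial application of the result in Theorem \ref{thm:DRC} to all subspaces $i = 1, \cdots, n$,'' which is precisely the reduction you carry out (set $\cA_0 := \cA_i$, $\cA_c := \cA \setminus \cA_i$, observe the hypothesis is the DRC for this partition, and invoke Theorem \ref{thm:DRC}). Your write-up simply makes explicit the bookkeeping---uniqueness of the containing subspace via the trivial-intersection assumption and the translation from subspace-sparseness to a correct label---that the paper leaves implicit.
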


This theorem asserts that we need the dictionary to have well-distributed points in each of the subspaces so that $\gamma_i$ is small. Also, the dual points $\cD_i$ which are in subspace $\cS_i$ need to be not too close to points in all other subspaces.

We can also formulate a randomized result.

\begin{theorem}
	Suppose there are $n$ subspaces $\cS_i$ with dimensions $d_i$ chosen independently and uniformly at random in $\Re ^D$. Suppose that $s_i$ points are sampled independently and uniformly at random on each of the $n$ subspaces. Let $\rho_i := s_i / d_i$ and $p_i := s_i / \sum_j s_j$ be the density of points and proportion of point in subspace $i$, respectively. Then any $\b \in \cup_{i=1}^n \cS_i$ can be correctly classified by BP and OMP if $2 \le d_i < \sqrt{D/2}$ and $\rho_i \ge 1$, $i = 1, \cdots, n$, with probability
	\begin{equation}
	p > 1 - \sum_i \Big( \frac{d_i\cdot 2^{d_i}}{C(D,d_i)} \sqrt{\rho_i} \cdot e^{-C(D, d_i) \sqrt{\rho_i}} + \frac{d_i (2e)^{d_i}}{p_i (\rho_i) ^{k_i}} \Big) ,
	\label{eq:sparse-classification-randomized}
	\end{equation}
	where $k_i = \frac{D}{2 d_i} -d_i$, and $C(D, d_i)$ is a constant as before.
	\label{thm:sparse-classification-randomized}
\end{theorem}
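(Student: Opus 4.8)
The plan is to reduce the randomized classification guarantee to $n$ applications of the single-subspace analysis behind Theorem~\ref{thm:randomized}, glued together by the deterministic criterion of Theorem~\ref{thm:sparse-classification-deterministic} and a union bound. First I would invoke Theorem~\ref{thm:sparse-classification-deterministic}: it suffices to show that the event $\mathcal{E} := \bigcap_{i=1}^n \{\gamma_i < s(\cD_i, \cA\setminus\cA_i)\}$ occurs with the claimed probability. For a fixed index $i$, I identify $\cA_i$ with the inlier set $\cA_0$ and $\cA\setminus\cA_i$ with the outlier set $\cA_c$ of Section~\ref{sec:subspace-sparse-randomized}; then $\{\gamma_i < s(\cD_i,\cA\setminus\cA_i)\}$ is exactly the DRC event for this configuration, under the substitutions $d_0 \to d_i$, $s_0 \to s_i$, and $\rho_0 \to \rho_i$.

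The crucial observation that lets me reuse the single-subspace bounds verbatim is distributional, and this is the main obstacle to treat carefully. In Theorem~\ref{thm:randomized} the outliers $\cA_c$ were assumed uniform on $\Sp^{D-1}$, whereas here the outliers $\cA\setminus\cA_i$ lie on the other subspaces $\cS_j$, $j\ne i$. However, since each $\cS_j$ is drawn uniformly at random and a point is then drawn uniformly on its unit sphere, the marginal law of each such point is rotation-invariant on $\Re^D$, hence uniform on $\Sp^{D-1}$. Moreover, because the subspaces and the samples within them are chosen independently, each outlier point is independent of $\cA_i$, and therefore of the dual set $\cD_i$, which is a deterministic function of $\cA_i$. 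Consequently, conditioning on $\cA_i$ fixes $\cD_i$ while leaving every outlier point uniform on $\Sp^{D-1}$, so the per-pair tail estimate $P(s(\v,\w) > \gamma^*) \ge 1 - \frac{v_{D-1}}{v_D}\sin^{D-1}\gamma^*$ used to derive \eqref{eq:bound-on-right} applies unchanged (it is scale-invariant, so the non-unit norm of $\v\in\cD_i$ is immaterial).

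With this in hand, I would transcribe the two one-sided bounds for subspace $i$. The covering-radius bound \eqref{eq:bound-on-left} depends only on the $d_i$-dimensional geometry of $\pm\cA_i$ and, by rotation invariance, holds for any realization of $\cS_i$, giving $P(\gamma_i < \gamma^*)$ in terms of $s_i,d_i$. The outlier bound \eqref{eq:bound-on-right}, combined via the union bound over the $\card(\cD_i)\le\binom{s_i}{d_i}2^{d_i}$ dual points (Lemma~\ref{thm:dual-finite}) and the $\card(\cA\setminus\cA_i)=\sum_{j\ne i}s_j$ outliers, yields $P(s(\cD_i,\cA\setminus\cA_i)>\gamma^*)$. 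Choosing the same distinguished value of $\gamma^*$ as in the proof of Theorem~\ref{thm:randomized} and combining the two bounds produces, for each $i$, a failure probability bounded by $\frac{d_i 2^{d_i}}{C(D,d_i)}\sqrt{\rho_i}\,e^{-C(D,d_i)\sqrt{\rho_i}} + \frac{\lambda_i d_i(2e)^{d_i}}{(\rho_i)^{k_i}}$, exactly as in \eqref{eq:randomized}, where now $\lambda_i = \card(\cA\setminus\cA_i)/s_i = (\sum_j s_j)/s_i - 1 = 1/p_i - 1$.

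Finally I would assemble the pieces. Since $\lambda_i = 1/p_i - 1 < 1/p_i$, the second term is bounded above by $\frac{d_i(2e)^{d_i}}{p_i(\rho_i)^{k_i}}$, matching the summand in \eqref{eq:sparse-classification-randomized}. A union bound over the $n$ failure events $\{\gamma_i \ge s(\cD_i,\cA\setminus\cA_i)\}$ then lower-bounds $P(\mathcal{E})$ by $1$ minus the sum over $i$ of these per-subspace failure probabilities, which is precisely the stated bound; the hypotheses $2\le d_i<\sqrt{D/2}$ and $\rho_i\ge 1$ are exactly what is required to license the application of Theorem~\ref{thm:randomized} to each subspace. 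The only genuinely non-mechanical step is the distributional reduction of the second paragraph; the remainder is bookkeeping around bounds already established.
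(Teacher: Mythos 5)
Your proposal is correct and follows essentially the same route as the paper: reduce to the deterministic DRC-type criterion of Theorem~\ref{thm:sparse-classification-deterministic}, bound each per-subspace event by reusing the two one-sided bounds from the proof of Theorem~\ref{thm:randomized} with $\lambda_i = 1/p_i - 1 < 1/p_i$, and finish with a union bound over the $n$ subspaces. Your second paragraph in fact spells out the distributional justification (marginal uniformity of the outliers on $\Sp^{D-1}$ and their independence from $\cD_i$) more carefully than the paper does, but the argument is the same.
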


This result shows that classification based on subspace-spares recovery is expected to work if subspace dimension is small and ambient dimension is large, and there should be enough number of samples in each subspace.

\section{Related Works and Future Directions}

\subsection{Related works and comparison}
Prior to this work, there has been studies of subspace sparse recovery by BP \cite{Elhamifar:ICASSP10,Soltanolkotabi:AS13} and by OMP \cite{Dyer:JMLR13} in the context of subspace clustering. In this section, we compare our results with these works by trying to reformulate or applying their results to the analysis of the subspace-sparse recovery problem considered in this work.

Theorem 1 in \cite{Elhamifar:ICASSP10} gives a sufficient condition for the correctness of subspace clustering by BP. While the condition it gives is in terms of a dictionary composed of several subspaces, we can apply it to our problem by taking points from one specific subspace as $\cA_0$, and all points from all other subspaces as $\cA_c$. The result is that subspace-sparse recovery by BP can be achieved for all $\b \in \cS_0$ if the following is true:
\begin{equation}
	\max_{\tilde{\A}_0 \in \cW_0} \sigma_{d_0} (\tilde{\A}_0) / \sqrt{d_0} > \cos s(\cS_0, \spann(\cA_c)),
	\label{eq:prior-Ehsan}
\end{equation}
where $\cW_0$ is the set of all full column rank submatrices $\tilde{\A}_0$ of $\A_0$. The LHS of the condition \eqref{eq:prior-Ehsan} is not well interpretable, and it is later observed by \cite{Soltanolkotabi:AS13} that the LHS can be bounded as $\max_{\tilde{\A}_0 \in \cW_0} \sigma_{d_0} (\tilde{\A}_0) / \sqrt{d_0} \le \cos \gamma_0$\footnote{\cite{Soltanolkotabi:AS13} shows that the LHS $\le r(\cK_0)$, where $r(\cdot)$ is the inradius. To get to the claim, we then use the fact that $r(\cK_0) = \cos \gamma_0$, which is a trivial consequence of Lemma 7.3 in \cite{Soltanolkotabi:AS13} and Lemma \ref{thm:bound-polar} in this paper.}. For the RHS of \eqref{eq:prior-Ehsan}, one can easily get $\cos s(\cS_0, \spann(\cA_c)) \ge \cos s(\cS_0, \cA_c)$. Thus, the condition \eqref{eq:prior-Ehsan} is more restrictive than both PRC and DRC. Actually, the condition \eqref{eq:prior-Ehsan} may be too restrictive in most cases, since the RHS will be equal to $1$ (while the LHS is at most $1$) unless $\spann(\cA_c)$ intersects with the subspace $\cS_0$ only at the origin. 

The deterministic analysis in \cite{Soltanolkotabi:AS13} considers a slightly different problem than that of this paper. Concretely, it considers the subspace-sparse recovery of a \emph{specific} $\b \in \cS_0$ rather than for \emph{all} points in $\cS_0$. It asserts that if\footnote{We have used the fact that $r(\cK_0) = \cos \gamma_0$, see the previous footnote.}
\begin{equation}
	\gamma_0 < s(\cA_c, \{\pm \v\}),
	\label{eq:prior-Mahdi}
\end{equation}
then BP gives subspace-sparse solution for $\b$. In the formula, $\v$ is the so-called ``dual point'' (we will see that this \emph{``dual point''} is related to our definition of the set of the dual point in Definition \ref{def:dual-point}), which is any solution to the program in \eqref{eq:Inlier-dual-constraint}. Notice that $\v$ is in $\cS_0$ by this definition.

To compare this with our result, we apply it to all possible $\b$'s that are in subspace $\cS_0$, and get the condition
\begin{equation}
	\gamma_0 < s(\cA_c, \pm \cV_0),
	\label{eq:prior-Mahdi-reformulated}
\end{equation}
in which $\cV_0 = \{ \v: \text{dual point of }\b, \forall \b \in \cS_0 \}$. Thus, equation \eqref{eq:prior-Mahdi-reformulated} is a condition for subspace-sparse recovery for all $\b \in \cS_0$, and is now comparable to PRC and DRC. However, the structure of $\cV_0$ is unknown; the best one can do is to take it to be $\cS_0$ since the only knowledge about $\v$ is that each of them is in $\cS_0$. By doing this, the condition \eqref{eq:prior-Mahdi-reformulated} becomes the PRC. To further refine this result, one needs to investigate the structure of the set $\cV_0$. It is shown in the proof to Theorem \ref{thm:BP-dual} that for any $\b$, $\v$ can be taken as a point in the set $\cD_0$. Thus, the set of dual points $\cD_0$ as defined in Definition \ref{def:dual-point} is composed of all ``dual points'' $\v$ as defined in the work of \cite{Soltanolkotabi:AS13}. The contribution of our work on the basis of \cite{Soltanolkotabi:AS13} can be viewed as specifying the structure of the set $\cV_0$ in \eqref{eq:prior-Mahdi-reformulated}.

The above two works are for analysis of BP. In \cite{Dyer:JMLR13}, the authors give a deterministic condition for guaranteeing correct subspace-sparse recovery by OMP. Their condition can be formatted to our notation as
\begin{multline}
	\cos s(\cA_0, \cA_c) < \cos \gamma_0 \\- \frac{2}{\sqrt[4]{12}}\sqrt{1-\cos^2\gamma_0} \cos s(\cS_0, \spann(\cA_c)),
	\label{eq:prior-Dyer}
\end{multline}
and if this condition holds, then OMP can achieve subspace-sparse solution for any $\b \in \cS_0$. The LHS of \eqref{eq:prior-Dyer} characterizes the spherical distance between the points in $\cA_0$ and points in $\cA_c$, and satisfies our intuition that this distance should be large for the purpose of subspace sparse recovery. On the RHS, the term $\cos s(\cS_0, \spann(\cA_c))$ is the same as that on the RHS of \eqref{eq:prior-Ehsan}, and we have argued that this term becomes $1$ unless $\cS_0$ and $\spann(\cA_c)$ have trivial intersection, making the RHS large and the condition difficult to be satisfied. Moreover, it is shown recently that \eqref{eq:prior-Dyer} is implied by PRC \cite{Chong:arxiv15-SSCOMP}. Thus, this condition is more restrictive than PRC and DRC.

\subsection{Future directions and existing works}

The analysis of this paper assumes that the atoms of the dictionary are noise-free. A natural follow-up question is the robustness of the result to corruptions on the dictionary $\cA$ and on the signal $\b$. In the context of subspace clustering by BP, this problem has already been investigated. Specifically, in the works of \cite{Wang-Xu:ICML13} and \cite{Soltanolkotabi:AS14} the authors show that with different modifications on BP, subspace-sparse recovery based clustering is still provably correct. Although this is not a direct study of the subspace-sparse recovery of this paper, it shows evidence that the BP or its variant is likely to be robust to noise. More recently, the work of \cite{Elhamifar:ArXiv14} introduces the idea of approximate subspace-sparse solutions, and shows that under certain conditions, the solution is approximately subspace-sparse. This gives another promising direction to extend the analysis of this paper to noisy case. On the other hand, the performance of subspace-sparse recovery by OMP has not been studied to the best of our knowledge. However, there are results in the study of traditional sparse recovery that show the robustness of OMP to noise \cite{Tropp:TIT04,Donoho:TIT06}. This also shows the possibility of extending OMP for subspace-sparse recovery in noisy cases.

\section{Conclusion}
\label{sec:conclusion}
In this work, we have studied the properties of OMP and BP algorithms for the task of subspace-sparse recovery and have identified the PRC and DRC as two sufficient conditions for guaranteeing subspace-sparse recovery. These two conditions reveal that the dictionary atoms within the subspace need to be well-distributed, and atoms outside of the subspace need to be not too close to the subspace (by PRC) or to the set of dual points in the subspace (by DRC). We further show that with a random modeling of the dictionary, the DRC is expected to hold if subspace dimension is low and ambient dimension is high. We have applied our results to the analyses of traditional sparse recovery as well as in sparse representation based classification. Especially, we have shown that our result not only provides guarantees for the correctness the sparse recovery problem, but the condition is relaxed than that given by mutual coherent.

\newpage
\appendices
\section{Proof of lemmas in section \ref{sec:background}}

\subsection{Proof of Lemma \ref{thm:bound-polar}}
\begin{lemma*}
	Assume that $\|\a_i\|_2 = 1, \forall i \in \cJ_0$. 
	%If $\v \in \cK_0^o$, then $\|\v\|_2 \le 1/\cos \gamma_0$, and the equality can be attained by some $\v$.
	It has $\max \{ \|\v\|_2: \v \in \cK_0^o \} = 1/\cos \gamma_0$.
\end{lemma*}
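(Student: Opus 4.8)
The plan is to parametrize the polar set $\cK_0^o$ radially and reduce the computation of its largest-norm point to a min-max over the unit sphere of $\cS_0$. Recall $\cK_0^o = \{\v \in \cS_0 : |\langle\v, \a_i\rangle| \le 1, \forall i \in \cJ_0\}$. Writing a nonzero $\v \in \cS_0$ as $\v = r\u$ with $\u \in \Sp^{D-1}\cap\cS_0$ and $r = \|\v\|_2 > 0$, the membership constraint becomes $r\,|\langle\u, \a_i\rangle| \le 1$ for every $i$, i.e. $r \le 1/\max_{i}|\langle\u, \a_i\rangle|$. Since $\u \in \cS_0 = \spann(\cA_0)$, the vector $\u$ cannot be orthogonal to all atoms, so $\max_i|\langle\u,\a_i\rangle| > 0$; hence the largest admissible radius in direction $\u$ is exactly $1/\max_i|\langle\u, \a_i\rangle|$. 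Taking the supremum over directions,
\[
\max\{\|\v\|_2 : \v \in \cK_0^o\} = \frac{1}{\min_{\u \in \Sp^{D-1}\cap\cS_0}\ \max_{i\in\cJ_0}|\langle\u, \a_i\rangle|},
\]
where the inner minimum is attained and positive because $\u \mapsto \max_i|\langle\u,\a_i\rangle|$ is continuous and strictly positive on the compact set $\Sp^{D-1}\cap\cS_0$.

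It remains to identify the denominator with $\cos\gamma_0$. Using unit-norm atoms, for any unit $\w\in\cS_0$ the spherical distance satisfies $s(\a_i,\w)=\cos^{-1}\langle\a_i,\w\rangle$, so, symmetrizing over $\pm\cA_0$ and using that $\cos^{-1}$ is decreasing,
\[
s(\pm\cA_0, \w) = \min_{i}\cos^{-1}|\langle\a_i,\w\rangle| = \cos^{-1}\Big(\max_i|\langle\a_i,\w\rangle|\Big).
\]
Taking the maximum over $\w$ in the definition of the covering radius, and again invoking monotonicity of $\cos^{-1}$,
\[
\gamma_0 = \gamma(\pm\cA_0) = \max_{\w}\ \cos^{-1}\Big(\max_i|\langle\a_i,\w\rangle|\Big) = \cos^{-1}\Big(\min_{\w}\max_i|\langle\a_i,\w\rangle|\Big),
\]
so that $\cos\gamma_0 = \min_{\w\in\Sp^{D-1}\cap\cS_0}\max_i|\langle\a_i,\w\rangle|$. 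This is precisely the denominator obtained above, and combining the two displays yields $\max\{\|\v\|_2 : \v\in\cK_0^o\} = 1/\cos\gamma_0$.

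I expect no serious obstacle here: the argument is a chain of elementary reductions, and the only points requiring care are (i) checking that no unit direction in $\cS_0$ is orthogonal to all atoms (guaranteed by $\cS_0 = \spann(\cA_0)$), which keeps every radius finite and the min-max strictly positive, and (ii) justifying the passage of $\max_\w$ through $\cos^{-1}$, which is just monotonicity of the arccosine together with compactness of $\Sp^{D-1}\cap\cS_0$ to guarantee the extrema are attained. Both the radial reduction of the polar set and the covering-radius identity rest on the same observation — that $\max_i|\langle\,\cdot\,,\a_i\rangle|$ is the support function of $\cK_0 = \conv(\pm\cA_0)$ — so the heart of the proof is recognizing that this single quantity governs both sides of the claimed equality.
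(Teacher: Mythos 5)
Your proof is correct and follows essentially the same route as the paper, which simply rewrites the claim as $\max_{\|\A_0^\transpose \v\|_\infty \le 1}\|\v\|_2 = 1/\min_{\|\v\|_2=1}\|\A_0^\transpose\v\|_\infty$ (over $\cS_0$) and declares it "easily seen." Your radial parametrization and the identification $\cos\gamma_0=\min_{\w}\max_i|\langle\a_i,\w\rangle|$ are exactly the details the paper leaves implicit.
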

\begin{proof}
	% By definition of polar set, if $\v \in \cK_0^o$, then $\|\v\|_2 \cos s(\v, \pm\cA_0) \le 1$. On the other hand, notice $\gamma_0 = \gamma(\pm\cA_0)$, by definition of covering radius, $\gamma_0 \ge s(\pm\cA_0, \v)$. Combining the two parts gives the conclusion.
	By the definitions of $\cK_0^o$ and $\gamma_0$, the conclusion of the lemma can be written as
	\begin{equation}
	\max_{\|\A_0^\transpose \v\|_\infty \le 1} \|\v\|_2 = 1 / \min_{\|\v\|_2 = 1} \|\A_0^\transpose \v\|_\infty,
	\end{equation}
	which can be easily seen as true.
\end{proof}

% you can choose not to have a title for an appendix
% if you want by leaving the argument blank
\subsection{Proof of Lemma \ref{thm:dual-finite}}
This lemma is a particular case of a well-known result in linear programming.
\begin{lemma*}
	The set $\cD_0$ is finite. Specifically, 
	\begin{equation}
	\card (\cD_0) \le 2^{d_0} \cdot \binom{s_0}{d_0},
	\end{equation}
	in which $s_0 = \card (\cA_0)$.
\end{lemma*}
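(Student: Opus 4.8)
The plan is to realize $\cK_0^o$ as a polytope living in the $d_0$-dimensional subspace $\cS_0$ and to count its vertices via the standard characterization of extreme points of a polyhedron. Working directly inside $\cS_0$ (a real vector space of dimension $d_0$), recall from Definition \ref{def:polar-set} that
\[
\cK_0^o = \{\v \in \cS_0 : |\langle \v, \a_i\rangle| \le 1,\ \forall i \in \cJ_0\},
\]
so $\cK_0^o$ is cut out by the $2 s_0$ affine inequalities $\langle \v, \a_i\rangle \le 1$ and $\langle \v, \a_i\rangle \ge -1$, $i \in \cJ_0$. Hence $\cK_0^o$ is a polyhedron (in fact bounded, by Lemma \ref{thm:bound-polar}, though boundedness is not essential to the count), and by Definition \ref{def:dual-point} the dual set $\cD_0$ is precisely its set of extreme points, i.e. its vertices.

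Next I would invoke the fundamental fact from the theory of polyhedra: a point $\v$ of a polyhedron in a $d_0$-dimensional space is a vertex if and only if the normals of the constraints that are \emph{active} (tight) at $\v$ span the whole space; in particular at least $d_0$ linearly independent active constraints are required, and $\v$ is then the unique solution of the corresponding system of $d_0$ linear equalities. Each active constraint at $\v$ has the form $\langle \v, \a_i\rangle = u_i$ with $u_i \in \{+1,-1\}$ and normal $\pm \a_i$. Since the two hyperplanes associated with a single atom $\a_i$ are parallel, at most one of them can occur in any linearly independent active system; thus every vertex is pinned down by selecting $d_0$ distinct atoms with linearly independent normals together with a choice of sign $u_i$ for each.

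Finally I would count the candidate vertices. There are $\binom{s_0}{d_0}$ ways to choose the $d_0$ atoms and $2^{d_0}$ ways to assign the signs, and when the chosen $\a_i$ are linearly independent they form a basis of $\cS_0$, so the system $\langle \v, \a_i\rangle = u_i$ has a unique solution $\v \in \cS_0$. As every vertex arises from at least one such $(\,\text{subset},\ \text{sign vector}\,)$ choice, this gives
\[
\card(\cD_0) \le 2^{d_0}\binom{s_0}{d_0},
\]
which is finite. The only subtlety — and the main (mild) obstacle — is the bookkeeping around \emph{degenerate} vertices, at which more than $d_0$ constraints are simultaneously active: such a vertex is counted once for each valid $d_0$-element active subsystem, so it is over-counted rather than missed. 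This is exactly why the bound is an inequality rather than an equality, and degeneracy therefore causes no loss.
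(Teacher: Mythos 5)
Your proof is correct and follows essentially the same route as the paper's: identify $\cD_0$ with the vertices (basic feasible solutions) of the polyhedron $\cK_0^o$ cut out by the $2s_0$ constraints $|\langle \v,\a_i\rangle|\le 1$ in the $d_0$-dimensional space $\cS_0$, and count the systems of $d_0$ linearly independent active constraints. If anything, your bookkeeping is slightly more careful than the paper's — you make explicit why parallel constraint pairs force the count to be $2^{d_0}\binom{s_0}{d_0}$ rather than $\binom{2s_0}{d_0}$, and why degeneracy only causes over-counting.
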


\begin{proof}
	Consider a linear program with variable $\v$, constraint $\v \in \cK_0^o$, and arbitrary objective. Since the dual points $\cD_0$ are the extreme points of $\cK_0^o$, they are the same as the basic feasible solutions of the linear program \cite{Nocedal:06}. Assume that the index set $\cJ_0$ contains $s_0$ elements. Each basic feasible solution is determined by $d_0$ linearly independent constraints from the $2\cdot s_0$ constraints of $\|\A_0^\transpose \v\|_\infty \le 1$. Obviously, there are at most $2^{s_0} \cdot \binom{s_0}{d_0}$ ways to choose such set of constraints. .
\end{proof}

\section{Proofs for section \ref{sec:subspace-sparse-randomized}}

\subsection{Proof for Lemma \ref{thm:spherical-cap}}
\begin{lemma*}
	For any $\theta \in [0, \pi/2]$ and any $p \ge 2$,
	\begin{equation}
	\frac{v_{p-1}}{p v_p}
	\sin ^{p-1} \theta
	\le \frac{\sigma_{p-1}(\Sp _\theta ^{p-1} (\w))}{\sigma_{p-1}(\Sp ^{p-1})} 
	\le \frac{v_{p-1}}{v_p}
	\sin ^{p-1} \theta,  
	\end{equation}
	in which $v_p$ is defined in \eqref{eq:v_p}.
\end{lemma*}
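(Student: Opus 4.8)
The plan is to convert the surface-area ratio into a statement about ordinary volumes, and then to sandwich a solid sector between a cylinder and a projected ball. Fix the pole $\w$ and set $S := \{\x \in B^p(1) : \x \neq \0,\ \x/\|\x\|_2 \in \Sp_\theta^{p-1}(\w)\}$, the solid ``ice-cream cone'' spanned by the cap. Integrating in polar coordinates gives $\vol(S) = \big(\int_0^1 r^{p-1}\,dr\big)\,\sigma_{p-1}(\Sp_\theta^{p-1}(\w)) = \frac{1}{p}\,\sigma_{p-1}(\Sp_\theta^{p-1}(\w))$, and the same computation with $\theta = \pi$ (where the sector is all of $B^p(1)$) yields $\sigma_{p-1}(\Sp^{p-1}) = p\,v_p$. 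Using these identities together with $\vol(B^{p-1}(\sin\theta)) = v_{p-1}\sin^{p-1}\theta$, the lemma reduces to the two clean statements $\vol(S) \le v_{p-1}\sin^{p-1}\theta$ (the upper bound) and $\sigma_{p-1}(\Sp_\theta^{p-1}(\w)) \ge v_{p-1}\sin^{p-1}\theta$ (the lower bound).

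For the upper bound I would show that $S$ sits inside a cylinder over the base ball. Writing each $\x$ in orthogonal coordinates $(\y,t)$ with $t = \langle \x,\w\rangle$ and $\y \in \w^\perp$, membership $\x \in S$ forces $t \in [0,1]$ and, from the angle bound together with $\|\x\|_2 \le 1$, it forces $\|\y\|_2 \le \|\x\|_2 \sin\theta \le \sin\theta$. Hence $S \subseteq B^{p-1}(\sin\theta) \times [0,1]$, a cylinder of volume $v_{p-1}\sin^{p-1}\theta$, so $\vol(S) \le v_{p-1}\sin^{p-1}\theta$, which is the right-hand inequality. Here the hypothesis $\theta \le \pi/2$ is exactly what keeps the cap in the half-space $t \ge 0$, so that the cylinder has height $1$.

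For the lower bound I would use the orthogonal projection $P$ onto the hyperplane $\w^\perp$. Since $\theta \le \pi/2$, the map $\phi \mapsto \sin\phi$ is monotone on $[0,\theta]$, so $P$ carries $\Sp_\theta^{p-1}(\w)$ bijectively onto $B^{p-1}(\sin\theta)$; moreover, at a cap point of polar angle $\phi$ the surface normal is radial and makes angle $\phi$ with $\w$, so $P$ scales $(p-1)$-dimensional area by the factor $\cos\phi$. The change-of-variables formula then reads $v_{p-1}\sin^{p-1}\theta = \vol(B^{p-1}(\sin\theta)) = \int_{\Sp_\theta^{p-1}(\w)} \cos\phi\, dA$, and since $0 \le \cos\phi \le 1$ on the cap, the right side is at most $\sigma_{p-1}(\Sp_\theta^{p-1}(\w))$, which is precisely the left-hand inequality.

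I expect the main obstacle to be making the area-distortion factor $\cos\phi$ rigorous, i.e. verifying that orthogonal projection of the tangent hyperplane of $\Sp^{p-1}$ onto $\w^\perp$ has Jacobian $\cos\phi$ and that the projection is injective on the cap exactly because $\theta \le \pi/2$; everything else (the polar-coordinate identities and the cylinder containment) is elementary. As a sanity check one can confirm the constants in the cases $p=2$ and $p=3$, where the cap fractions $\theta/\pi$ and $(1-\cos\theta)/2$ satisfy the asserted bounds, with equality attained only in degenerate limits.
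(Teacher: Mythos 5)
Your proposal is correct, and it splits into one half that matches the paper and one half that does not. The upper bound is essentially the paper's argument verbatim: convert the cap's area fraction to the volume fraction of the solid sector via polar coordinates, then enclose the sector in the cylinder $B^{p-1}(\sin\theta)\times[0,1]$ of volume $v_{p-1}\sin^{p-1}\theta$ (the paper's green dotted cylinder in Figure 3). For the lower bound the routes diverge. The paper stays in the world of solid volumes: it inscribes in the sector the intersection of two cones (one with apex at the origin, one with apex at $\w$, sharing the base disk $B^{p-1}(\sin\theta)$ at height $\cos\theta$), a bipyramid of total height $1$ whose volume is $v_{p-1}\sin^{p-1}\theta/p$ by the elementary cone-volume formula. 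You instead bound the surface measure of the cap directly, by projecting it orthogonally onto $\w^\perp$ and observing that the Jacobian $\cos\phi$ is at most $1$, so $\sigma_{p-1}(\Sp_\theta^{p-1}(\w))\ge\vol(B^{p-1}(\sin\theta))$. Both give exactly the same constant $v_{p-1}\sin^{p-1}\theta/p$ after normalizing by $\sigma_{p-1}(\Sp^{p-1})=pv_p$, and both use $\theta\le\pi/2$ in the same essential way (to keep the cap in a half-space, resp.\ to make the projection injective). The paper's version is more elementary, needing only convexity of the ball and the base-times-height-over-$p$ formula; yours requires the smooth area change-of-variables formula and a check that the degenerate Jacobian at $\phi=\pi/2$ is harmless, but it has the mild advantage of bounding the cap's area directly without passing back through the sector, and the identity $\int_{\mathrm{cap}}\cos\phi\,dA=v_{p-1}\sin^{p-1}\theta$ is exact rather than an inscription, which makes the slack in the lower bound easier to see.
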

\begin{proof}
	The idea is similar to that in \cite{Tkocz:12}. We first prove the upper bound. See Figure \ref{fig:spherical-cap} for an illustration, in which we project $\Re^p$ into any two-dimensional space that contains the origin and $\w$. The potion of the area of the spherical cap over the entire $\Sp ^{p-1}$ is the same as the potion of the volume of the red dashed cone intersecting with $B^p(r)$ over the volume of $B^p(r)$. Also note that the part of the red cone in the $B^p(r)$ lie completely in the green dotted cylinder. Thus,
	\begin{multline}
	\frac{\sigma_{p-1}(\Sp _\theta ^{p-1} (\w))}{\sigma_{p-1}(\Sp ^{p-1})} =
	\frac{\vol(\text{Cone} \cap B^p(1))}{\vol(B^p(1))} \le \frac{\vol(\text{Cylinder})}{\vol(B^p(1))} \\
	= \frac{\sin ^{p-1} \theta \cdot v_{p-1} \cdot 1}{1^p \cdot v_p}
	= \sin^{p-1} \theta \frac{v_{p-1}}{v_p},
	\end{multline}
	this proves the upper bound.
	
	For the lower bound, consider again the part of the red cone in the $B^p(r)$, its volume is bounded below by the intersection of the red and the cyan cones. It is known that the volume of a $p$-dimensional cone (i.e. a cone with a $p-1$ dimensional base) is the product of the $p-1$ dimensional area of its base and its height divided by $p$. Thus, one can see that the volume of the intersection of the two cones is $v_{p-1} \sin^{p-1}\theta \cdot 1 / p$. The conclusion thus follows from this discussion.
\end{proof}
		
\begin{figure}[t]
	\centering
	\begin{tikzpicture}[scale = 2]
	\coordinate (0) at (0,0);
	
	% draw cap
	\coordinate (x) at (1,0);
	\coordinate (x1) at (1.3,1.3);
	\coordinate (x2) at (1.3,-1.3);
	\coordinate (y1) at (0.46,1.3);
	\coordinate (y2) at (0.46,-1.3);
	
	%	\draw[blue, fill = white] (0) circle [radius = 1]; % circle
	\draw[solid, black] (0) ([shift=(-120:1cm)] 0, 0) arc (-120:120:1cm);
	
	\node[left, blue] at (0) {O}; % origin
	\draw [blue, fill=blue] (0) circle [radius=0.01];
	
	\node[right, blue] at (x) {$\w$}; % w
	\draw [blue, fill=blue] (x) circle [radius=0.01];
	
	\draw[solid, blue] (0) -- (x); 
	\draw[dashed, red] (0) -- (x1); % cone 1
	\draw[dashed, red] (0) -- (x2);
	\draw[dashed, cyan] (x) -- (y1); % cone 2
	\draw[dashed, cyan] (x) -- (y2);
	
	\draw[solid, blue] (0) ([shift=(0:0.2cm)] 0, 0) arc (0:45:0.2cm);
	\node[above right, blue] at (0.2, 0) {$\theta$};
	
	% draw auxilary cylinder
	\coordinate (bl) at (0, -0.707); % bottom left
	\coordinate (al) at (0, 0.707);
	\coordinate (br) at (1, -0.707);
	\coordinate (ar) at (1, 0.707); % above right
	
	\draw[dotted, green] (bl) -- (al);
	\draw[dotted, green] (bl) -- (br);
	\draw[dotted, green] (al) -- (ar);
	\draw[dotted, green] (br) -- (ar);
	
	\end{tikzpicture}
	\caption{Illustration for proving bounds for area of spherical cap.}
	\label{fig:spherical-cap}
\end{figure}
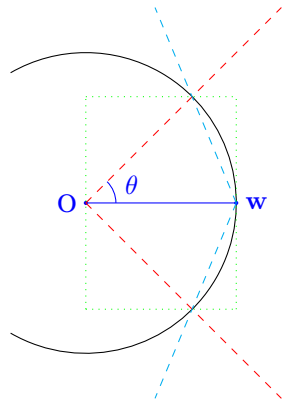

\subsection{Proof for Lemma \ref{thm:covering-number}}

\begin{lemma*}
	The covering number of $\Sp^{p-1}, p \ge 2$ is bounded by 
	\[
	\mathcal{C}(\Sp^{p-1}, \epsilon) \le \frac{p}{\frac{v_{p-1}}{v_p}\sin^{p-1}\frac{\epsilon}{2}}, \forall \epsilon \le \frac{\pi}{4}.
	\]
\end{lemma*}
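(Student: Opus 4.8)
The plan is to use the standard packing--covering duality: build an explicit $\epsilon$-covering out of a maximal separated set, then bound its cardinality by an area argument. First I would let $\cV \subseteq \Sp^{p-1}$ be a maximal set of points whose pairwise spherical distances are all at least $\epsilon$; such a set exists by a greedy/Zorn argument since $\Sp^{p-1}$ is compact. By maximality, no further point can be adjoined without violating $\epsilon$-separation, so every $\w \in \Sp^{p-1}$ lies within spherical distance $\epsilon$ of some point of $\cV$. Hence $\gamma(\cV) \le \epsilon$, i.e. $\cV$ is an $\epsilon$-covering, and since the covering number is the size of the \emph{smallest} such covering it suffices to bound $\card(\cV)$ from above.

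Next I would count $\cV$ by a disjointness argument on spherical caps. Around each $\v \in \cV$ place the cap $\Sp_{\epsilon/2}^{p-1}(\v)$. If two such caps shared a point $\u$, then the triangle inequality for the metric $s(\cdot,\cdot)$ on $\Sp^{p-1}$ would give $s(\v_1,\v_2) \le s(\v_1,\u) + s(\u,\v_2) < \epsilon$, contradicting $\epsilon$-separation; thus the open caps are pairwise disjoint. As they all sit inside $\Sp^{p-1}$, summing areas yields
\begin{equation}
	\card(\cV) \cdot \sigma_{p-1}\big(\Sp_{\epsilon/2}^{p-1}(\v)\big) \le \sigma_{p-1}\big(\Sp^{p-1}\big).
\end{equation}
I would then invoke the \emph{lower} bound of Lemma \ref{thm:spherical-cap} with $\theta = \epsilon/2$, namely $\sigma_{p-1}(\Sp_{\epsilon/2}^{p-1}(\v))/\sigma_{p-1}(\Sp^{p-1}) \ge \frac{v_{p-1}}{p v_p}\sin^{p-1}\frac{\epsilon}{2}$, which rearranges to exactly the claimed $\card(\cV) \le p/\big(\frac{v_{p-1}}{v_p}\sin^{p-1}\frac{\epsilon}{2}\big)$.

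The hypothesis $\epsilon \le \pi/4$ enters only to make Lemma \ref{thm:spherical-cap} applicable: it forces the half-angle $\epsilon/2 \le \pi/8 \le \pi/2$, so the caps are genuine (contained in a hemisphere) and the area lower bound is valid. I do not expect a serious obstacle. The two points needing care are that the maximal packing covers the \emph{full} sphere rather than a proper subsphere (so the paper's relative covering radius agrees with covering $\Sp^{p-1}$ itself), which holds because coverage within radius $\epsilon < \pi/2$ prevents $\cV$ from lying in any proper subspace and hence forces $\spann(\cV) = \Re^p$; and the bookkeeping of strict-versus-nonstrict inequalities in the disjointness step, which is routine since cap boundaries have $\sigma_{p-1}$-measure zero.
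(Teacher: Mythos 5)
Your proposal is correct and follows essentially the same route as the paper: a greedy/maximal $\epsilon$-separated set serves as the covering, and its cardinality is bounded by packing disjoint caps of radius $\epsilon/2$ and invoking the lower bound of Lemma \ref{thm:spherical-cap}. The extra care you take about strict inequalities and about $\spann(\cV)$ being all of $\Re^p$ is fine but not needed beyond what the paper already does.
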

\begin{proof}
	A standard way of bounding covering number is to construct a specific $\epsilon$-covering $\cV$. Concretely, initialize $\cV$ as empty. In the first step, add an arbitrary point in $\Sp^{p-1}$ into $\cV$. In the following steps, find any point $\w$ in $\Sp^{p-1}$ which satisfy $s(\w, \cV) > \epsilon$ and add this $\w$ into $\cV$. The procedure is terminated when no such point exists.
	
	It is easy to see that this procedure must terminate in finite number of iterations. In fact, we will provide an upper bound on the number of iterations.
	
	Before that, we first point out that the $\cV$ constructed in this way is an $\epsilon$-covering of $\Sp^{p-1}$, or equivalently, $\gamma(\cV) \le \epsilon$. Otherwise, there would be a $\w$ such that $s(\w, \cV) > \epsilon$, and by the procedure above, this $\y$ should be added to $\cV$. Thus, we can bound the covering number $\mathcal{C}(\Sp^{p-1}, \epsilon)$ by the cardinality of $\cV$ that we constructed above.
	
	We now give a bound on $\card(\cV)$. Imagine that centered at each point in $\cV$ we draw a ball (in the space of $(\Sp^{p-1}, s(\cdot, \cdot))$) with radius $\epsilon / 2$. Then by the construction of $\cV$, any two points in $\cV$ are at least $\epsilon$ away, so the balls do not intersect with each other. Notice that as shown by \eqref{eq:spherical_cap}, we can bound the area measure of these balls, i.e., for any $\w \in \cV$, 
	\[
	\frac{\sigma_{p-1}(\Sp_{\epsilon/2} ^{p-1}(\w))}{\sigma_{p-1}(\Sp ^{p-1})} \ge \frac{v_{p-1}}{p v_p} \sin ^{p-1}\frac{\epsilon}{2},
	\]	
	the result thus follows by that
	\[
	\mathcal{C}(\Sp^{p-1}, \epsilon) \le \card(\cV) \le \frac{\sigma_{p-1}(\Sp ^{p-1})}{\sigma_{p-1}(\Sp_{\epsilon/2} ^{p-1}(\w))}
	\]
\end{proof}

\subsection{Proof for Theorem \ref{thm:covering-radius-bound}}

\begin{theorem*}
	Let $\cP \subseteq \Sp^{p-1}, p \ge 2$ be a set of $K$ points that are drawn independently and uniformly at random on $\Sp^{p-1}$. Then for any $\gamma^* \le \pi/2$, it has $\gamma(\pm \cP) < \gamma^*$ with probability at least $1 - \frac{p v_p}{v_{p-1}}\frac{1}{\sin ^{p-1} \frac{\gamma^*}{4}} \exp (-K \frac{2 v_{p-1}}{p v_p}\sin ^{p-1} \frac{\gamma^*}{2})$
\end{theorem*}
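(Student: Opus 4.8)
The plan is to carry out the $\epsilon$-net argument sketched before the statement: cover $\Sp^{p-1}$ by finitely many spherical caps of angular radius $\epsilon := \gamma^*/2$, and argue that if each such cap captures at least one of the \emph{symmetrized} sample points $\pm\cP$, then every point of the sphere is within $2\epsilon = \gamma^*$ of some point of $\pm\cP$, forcing $\gamma(\pm\cP) < \gamma^*$. The probability bound then follows by estimating, for a single fixed cap, the chance that all $K$ samples miss it, and taking a union bound over the caps of the net.

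Concretely, I would first invoke Lemma~\ref{thm:covering-number} with radius $\epsilon = \gamma^*/2$ (legitimate since $\gamma^* \le \pi/2$ forces $\epsilon \le \pi/4$) to obtain an $\epsilon$-covering $\cV \subseteq \Sp^{p-1}$ with
\[
\card(\cV) \le \frac{p v_p}{v_{p-1}} \frac{1}{\sin^{p-1}(\gamma^*/4)},
\]
which is exactly the prefactor in the claimed bound. By the defining property of an $\epsilon$-covering, every $\x \in \Sp^{p-1}$ has some $\w \in \cV$ with $s(\x, \w) \le \epsilon$. I would then introduce the ``good event'' $E$ that for each $\w \in \cV$ the cap $\Sp_\epsilon^{p-1}(\w)$ contains a point of $\pm\cP$, and verify the deterministic implication: on $E$, for arbitrary $\x \in \Sp^{p-1}$ pick $\w \in \cV$ with $s(\x,\w)\le\epsilon$ and $\pm\a \in \pm\cP$ with $s(\w, \pm\a) \le \epsilon$; the triangle inequality for the metric $s(\cdot,\cdot)$ on $\Sp^{p-1}$ yields $s(\x, \pm\a) \le 2\epsilon = \gamma^*$, so $\gamma(\pm\cP) \le \gamma^*$.

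For the probabilistic step, the crucial observation is that a single uniform sample $\a$ contributes both $\a$ and $-\a$ to $\pm\cP$, so $\pm\a$ meets $\Sp_\epsilon^{p-1}(\w)$ precisely when $\a$ lands in $\Sp_\epsilon^{p-1}(\w) \cup \Sp_\epsilon^{p-1}(-\w)$. These two antipodal caps are disjoint for $\epsilon < \pi/2$, so by the lower bound of Lemma~\ref{thm:spherical-cap} this event has probability at least
\[
q := \frac{2 v_{p-1}}{p v_p}\sin^{p-1}(\gamma^*/2),
\]
the factor $2$ being exactly what the symmetrization $\pm\cP$ buys, and exactly what appears in the exponent of the target bound. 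Independence of the $K$ samples then gives that all of them miss a fixed cap with probability at most $(1-q)^K \le e^{-Kq}$, and a union bound over the $\card(\cV)$ caps yields $P(E^c) \le \card(\cV)\, e^{-Kq}$, which is the asserted complementary probability.

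Rather than any deep obstacle, the points to watch are bookkeeping and strictness. The bookkeeping is choosing $\epsilon = \gamma^*/2$ so that the $\sin^{p-1}(\gamma^*/4)$ from the covering number and the $\sin^{p-1}(\gamma^*/2)$ from the cap measure align with the statement, and confirming that both lemmas apply on the relevant range. The strict inequality $\gamma(\pm\cP) < \gamma^*$ is then harmless: since the samples come from a continuous distribution, almost surely no sample lies exactly on a cap boundary, so on $E$ one in fact has $s(\w,\pm\a) < \epsilon$ and hence $s(\x,\pm\a) < 2\epsilon = \gamma^*$, which upgrades $\gamma(\pm\cP) \le \gamma^*$ to the strict bound without altering any of the area estimates.
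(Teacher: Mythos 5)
Your proposal is correct and follows essentially the same route as the paper's proof: the same $\epsilon$-net with $\epsilon=\gamma^*/2$ from Lemma~\ref{thm:covering-number}, the same per-cap miss probability via the lower bound of Lemma~\ref{thm:spherical-cap} with the factor $2$ from symmetrization, and the same union bound with $(1-q)^K\le e^{-Kq}$. If anything you are slightly more careful than the paper, since you justify the factor $2$ by the disjointness of the antipodal caps and address the strictness of $\gamma(\pm\cP)<\gamma^*$, which the paper's proof leaves implicit.
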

\begin{proof}
	Let $\epsilon = \gamma^* / 2$, and let $\cV$ be any $\epsilon$-covering of $\Sp^{p-1}$ such that $\card(\cV) = \mathcal{C}(\Sp^{p-1}, \epsilon)$. Centered at each point of $\cV$ draw a ball with radius of $\epsilon$, then the union of these balls covers the entire sphere. The idea of the proof is that if each of the balls contain at least one point in the set $\pm \cV$, then the covering radius $\gamma(\pm \cV)$ is bounded by $2\epsilon$. This is because that for any $\w \in \Sp^{p-1}$, it lies in at least one of the balls, and when this ball contains at least one point in $\pm \cV$, then the distance $s(\w, \pm \cV)$ is bounded above by $2 \epsilon$. Concretely, denote $M := \card(\cV)$ and let $B_1, \cdots, B_M$ be the balls illustrated above, then
	\[\begin{split}
	P(\gamma > 2\epsilon)
	&\le P(\exists i\in\{1, \cdots, M\} \text{ s.t. } B_i \cap \pm \cP = \emptyset)\\
	&\le \sum_{i=1} ^M P(B_i \cap \pm \cP = \emptyset)\\
	& = \sum_{i=1} ^M (1 - 2\frac{\sigma_{p-1}(B_i)}{\sigma_{p-1}(\Sp^{p-1})}) ^K,\\
	\end{split}\]
	where the factor of $2$ appears in the last line because we are using symmetrized points $\pm \cP$. Notice that each $B_i$ is a spherical cap of radius $\epsilon$, we can use the result of \eqref{eq:spherical_cap} to give a bound on it. We get
	\begin{multline}
	P(\gamma > 2\epsilon)
	\le \sum_{i=1} ^M (1 - \frac{2 v_{p-1}}{p v_p} \sin ^{p-1} \epsilon) ^K\\
	\le M \exp( -K \frac{2 v_{p-1}}{p v_p} \sin ^{p-1} \epsilon),
	\end{multline}
	in which $M$ can be further bounded by result of Lemma \ref{thm:covering-number}, so
	\[
	P(\gamma > 2\epsilon) \le \frac{p}{\frac{v_{p-1}}{v_p}\sin^{p-1}\frac{\epsilon}{2}} \exp( -K \frac{2 v_{p-1}}{p v_p} \sin ^{p-1} \epsilon).
	\]
	This proves the theorem.
\end{proof}

\subsection{Proof for Theorem \ref{thm:randomized}}

In this section, we finish what is undiscussed in the roadmap of proof for Theorem  \ref{thm:randomized} and this will complete the proof.

\begin{proof}

The proof is by giving probabilistic bounds on both sides of DRC separately and then apply the well known union bound to combine the results. In this proof we write $d:= d_0$, $s := s_0$ and $k := k_0$ to simplify notations.

For any $\gamma^* \le \frac{\pi}{2}$, the LHS and RHS of DRC are bounded in \eqref{eq:bound-on-left} and \eqref{eq:bound-on-right}, respectively. By applying union bound we get

\begin{multline}
P(\text{DSC is satisfied}) = P(\gamma_0 < s(\cD_0, \cA_c))\\
\ge 1 - \boxed{\frac{d v_d}{v_{d-1}} \cdot\frac{1}{\sin^{d-1}\frac{\gamma^*}{4}}}\cdot \exp (-\boxed{s \frac{2 v_{d-1}}{d v_d}\sin^{d-1} \frac{\gamma^*}{2}}) \\ \boxed{-\lambda s \cdot \binom{s}{d} 2^d \cdot \frac{v_{D-1}}{v_D} \sin ^{D-1}\gamma^*}.
\label{eq:prop-bound-DSC-1}
\end{multline}

Now, we take a special value of $\gamma^*$ as
\begin{equation}
\sin^{D-1}\gamma^* = (\frac{s}{d})^{-0.5\frac{D-1}{d-1}} \frac{v_D}{v_{D-1}},
\label{eq:proof-randomized-gamma0}
\end{equation}
or equivalently, 
\begin{equation}
\sin^{d-1}\gamma^* = (\frac{s}{d})^{-0.5} \big( \frac{v_D}{v_{D-1}} \big) ^\frac{d-1}{D-1},
\end{equation}
and we will argue that such a $\gamma \le \frac{\pi}{2}$ exists at the end of this proof. 

Define the following for later use:
\begin{equation}
C(D, d) = \frac{1}{2^{d-2}} \frac{v_{d-1}}{v_d} \big( \frac{v_D}{v_{D-1}} \big) ^\frac{d-1}{D-1}.
\end{equation}

For easier presentation, we take three boxed parts from the RHS of \eqref{eq:prop-bound-DSC-1} and provide bounds for them separately, and then combine them to get the final result.

For the first part, we compute
\begin{multline}
\frac{d v_d}{v_{d-1}} \cdot\frac{1}{\sin^{d-1}\frac{\gamma^*}{4}} \le d \cdot 2^{d-1} \frac{v_d}{v_{d-1}} \cdot\frac{2^{d-1}}{\sin^{d-1}\gamma^*}\\
= d \cdot 2^{d-1} \frac{v_d}{v_{d-1}} \cdot \sqrt{ \frac{s}{d} } 2^{d-1} \big( \frac{v_{D-1}}{v_D} \big)^\frac{d-1}{D-1} = \frac{d \cdot 2^d}{C(D,d)} \sqrt{ \frac{s}{d} }.
\end{multline}
in which we have used the result that $\sin(2x) \le 2\sin(x)$ for any $x \in [0, \pi]$.

For the second part,
\begin{equation}
\begin{split}
&s \frac{2 v_{d-1}}{d v_d}\sin^{d-1} \frac{\gamma^*}{2} \ge s \frac{2 v_{d-1}}{d v_d} \frac{\sin^{d-1} \gamma^*}{2^{d-1}} \\
= &s \frac{2 v_{d-1}}{d v_d} \frac{1}{2^{d-1}} \sqrt{ \frac{d}{s} } \big( \frac{v_D}{v_{D-1}} \big) ^\frac{d-1}{D-1} \\
= &\frac{2 v_{d-1}}{d v_d} \frac{d}{2^{d-1}} \sqrt{ \frac{s}{d} } \big( \frac{v_D}{v_{D-1}} \big) ^\frac{d-1}{D-1} = C(D,d) \sqrt{ \frac{s}{d} }.
\end{split}
\end{equation}

For the third part, use the fact that $\binom{s}{d} \le (\frac{e s}{d})^d$, we have

\begin{equation}
\begin{split}
&\lambda s \cdot \binom{s}{d} 2^d \cdot \frac{v_{D-1}}{v_D} \sin ^{D-1}\gamma^* \\
\le &\lambda s \cdot \big( \frac{2e s}{d} \big)^d \cdot \big( \frac{s}{d}\big) ^{-0.5\frac{D-1}{d-1}}\\
= &\lambda d (2e)^d \big( \frac{s}{d}\big) ^{(d+1)-0.5\frac{D-1}{d-1}} \le \lambda d (2e)^d \big( \frac{s}{d}\big) ^{-k}.
\end{split}
\end{equation}

Combining the above three parts into  \eqref{eq:prop-bound-DSC-1} we get

\begin{multline}
P(\text{DSC is satisfied})
\ge 1 - \lambda d (2e)^d \big( \frac{s}{d}\big) ^{-k} \\ - \frac{d \cdot 2^d}{C(D,d)} \sqrt{ \frac{s}{d} } \exp \big(C(D,d) \sqrt{ \frac{s}{d} } \big),
\label{eq:prop-bound-DSC-2}
\end{multline}
which is the conclusion in \eqref{eq:randomized}.

For the rest part of the proof, we will be needing the following result:

\begin{equation}
\frac{v_{p-1}}{v_p} \in \Big[ \frac{p+1}{\sqrt{2\pi(p+2)}}, \sqrt{\frac{p+1}{2\pi}} \Big],
\label{eq:proof-randomized-ratio-v}
\end{equation}
which is acquired by combining the calculation formula of $v_p$ in \eqref{eq:v_p} and the following result \cite{Foucart:13}:
\begin{equation}
\frac{p}{\sqrt{p+1}} \le \sqrt{2} \frac{\Gamma (\frac{p+1}{2})}{\Gamma (\frac{p}{2})} \le \sqrt{p}.
\end{equation}

We now show that the $\gamma^*$ in \eqref{eq:proof-randomized-gamma0} is well-defined. It boils down to showing that the RHS of \eqref{eq:proof-randomized-gamma0} is less than $1$. Note the first factor is less than or equal to one since $s \ge d$. The second factor can be upper bounded by \eqref{eq:proof-randomized-ratio-v}, i.e.

\begin{equation}
\frac{v_D}{v_{D-1}} \le \frac{\sqrt{2\pi(D+2)}}{D+1},
\end{equation}
in which the RHS is a decreasing function in $D$ and is less than $1$ when $D = 7$. As it is required in the theorem that $D > 2 d^2 \ge 8$, we can conclude that the RHS of  \eqref{eq:proof-randomized-gamma0} is less than one.

In the rest part of this proof, we show the properties $C(D, d)$ as a function of $D$ and $d$. First, we show that $C(D, d)$ is increasing in $D$. Compute that

\begin{equation}
\begin{split}
\frac{C(D,d)}{C(D-1,d)} &= \big(\frac{v_D}{v_{D-1}} \big) ^{\frac{d-1}{D-1}}
\cdot \big(\frac{v_{D-2}}{v_{D-1}} \big) ^{\frac{d-1}{D-2}}\\
&\ge \big(  \sqrt{2\pi} \frac{\sqrt{D+2}}{D+1} \big) ^{\frac{d-1}{D-1}} \cdot
\big(  \sqrt{\frac{D}{2\pi}} \big) ^{\frac{d-1}{D-2}}\\
&= \sqrt{\frac{(D+2)D}{(D+1)^2}} ^{\frac{d-1}{D-1}} \cdot \sqrt{\frac{D}{2\pi}} ^{\frac{d-1}{(D-1)(D-2)}}\\
&= \Bigg( \big( \frac{(D+2)D}{(D+1)^2} \big) ^{D-2} \cdot \frac{D}{2\pi} \Bigg)^{\frac{d-1}{2(D-1)(D-2)}}\\
&>1,
\end{split}
\end{equation}
where we have used the result \eqref{eq:proof-randomized-ratio-v}, and the last inequality comes from the following observations: Let $f(D) = \big( \frac{(D+2)D}{(D+1)^2} \big) ^{D-2} \cdot \frac{D}{2\pi}$. One can compute that $f(7) > 1$, and $f$ is an increasing function of $D$ by calculus. Thus, $C(D,d) > C(D-1, d)$ if $D \ge 7$.

Similarly, for showing that $C(D, d)$ is decreasing in $d$, we compute the ratio

\begin{equation}
\begin{split}
&\frac{C(D,d)}{C(D,d-1)} = \frac{1}{2} \frac{v_{d-1}}{v_{d}} \frac{v_{d-1}}{v_{d-2}} \sqrt[(D-1)]{\frac{v_D}{v_{D-1}}}\\
\le& \frac{1}{2} \sqrt{\frac{1}{2\pi}} \sqrt{d+1} \sqrt{2\pi} \frac{\sqrt{d+1}}{d} \sqrt[(D-1)]{\sqrt{2\pi}\frac{\sqrt{D+2}}{D+1}}\\
=& \frac{d+1}{2d} \sqrt[(D-1)]{\sqrt{2\pi}\frac{\sqrt{D+2}}{D+1}} < 1,
\end{split}
\end{equation}
in which we have used the result \eqref{eq:proof-randomized-ratio-v}, and in the last step we use the fact that $\frac{d+1}{2d}\le 1$ when $d \ge 2$, and that $\sqrt{2\pi}\frac{\sqrt{D+2}}{D+1} < 1$ when $D \ge 7$.

Finally, to give a lower bound on $C(D, d)$, we use equation \eqref{eq:proof-randomized-ratio-v} again and get 
\begin{equation}
C(D, d) \ge \frac{1}{2^{d-2}} \frac{d+1}{\sqrt{2\pi(d+2)}} \Big( \sqrt{\frac{2\pi}{D+1}} \Big) ^\frac{d-1}{D-1}.
\end{equation}

For the RHS, we can have the bound $\frac{d+1}{\sqrt{d+2}} > \sqrt{d}$. Moreover, let $g(D)= (\frac{2\pi}{D+1})^\frac{0.5}{D-1}$, by calculus, one can see that $g(D)$ takes minimum when $D=14$. Thus
\begin{equation}
C(D, d) \ge \sqrt{\frac{2}{\pi}} \sqrt{d} \Big( \frac{g(14)}{2} \Big) ^{d-1} > \frac{0.79 \sqrt{d}}{2.07^{d-1}}.
\end{equation}

This finishes all the claims of the theorem.
\end{proof}

\section{Proof of results in section \ref{sec:sparse-recovery}}

\subsection{Proof of Lemma \ref{thm:sparse-recovery-dualpoint}}
\begin{lemma*}
	For $\cA_0$ which has $s_0$ linearly independent atoms, the set of dual points, $\cD_0$, contains exactly $2^{s_0}$ points specified by $\{\A_0 (\A_0^\transpose \A_0) ^{-1} \cdot \u, \u \in U_{s_0}\}$, where $U_{s_0} := \{ [u_1, \cdots, u_{s_0}], u_i = \pm 1, i = 1, \cdots, s_0 \}$.
\end{lemma*}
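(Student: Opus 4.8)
The plan is to identify the extreme points of the polar set $\cK_0^o = \{\v \in \cS_0 : |\langle \v, \a_i\rangle| \le 1, \forall i \in \cJ_0\}$ directly, exploiting that linear independence of $\cA_0$ forces $d_0 = s_0$, so that $\cK_0^o$ is a full-dimensional polytope inside the $s_0$-dimensional space $\cS_0$. First I would recall, exactly as in the proof of Lemma~\ref{thm:dual-finite}, that each extreme point is a basic feasible solution, determined by $s_0 = d_0$ linearly independent active constraints drawn from the $2s_0$ halfspaces $\langle \v, \a_i\rangle \le 1$ and $\langle \v, \a_i\rangle \ge -1$, $i \in \cJ_0$.

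The key observation I would use is that the two constraints associated with a single atom $\a_i$ cannot both be tight at a feasible point, since $\langle \v, \a_i\rangle$ cannot simultaneously equal $+1$ and $-1$. Hence at most one constraint per atom is active, and any feasible point has at most $s_0$ active constraints. To supply the $s_0$ linearly independent active constraints that an extreme point requires, every atom must then contribute exactly one tight constraint, i.e. $\langle \v, \a_i\rangle = u_i$ with $u_i \in \{+1,-1\}$ for all $i$. In matrix form this is $\A_0^\transpose \v = \u$ with $\u \in U_{s_0}$. Writing $\v = \A_0 \mathbf{c}$ (legitimate since $\v \in \cS_0 = \spann(\cA_0)$) gives $\A_0^\transpose \A_0 \mathbf{c} = \u$; as the Gram matrix $\A_0^\transpose \A_0$ is invertible by linear independence, this yields $\v = \A_0(\A_0^\transpose \A_0)^{-1}\u$, exactly the claimed form.

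For the converse direction I would verify that every candidate is genuinely a vertex: each $\v = \A_0(\A_0^\transpose\A_0)^{-1}\u$ with $\u \in U_{s_0}$ satisfies $\A_0^\transpose \v = \u$, hence $|\langle \v, \a_i\rangle| = |u_i| = 1$ for all $i$, so $\v$ is feasible with all $s_0$ constraints tight; the corresponding active normals are $\pm\a_i$, which are linearly independent, certifying $\v$ as an extreme point. Finally, since $\A_0(\A_0^\transpose\A_0)^{-1}$ has full column rank, the map $\u \mapsto \v$ is injective, so distinct sign patterns give distinct vertices and the count is exactly $\card(U_{s_0}) = 2^{s_0}$.

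The hard part---indeed the only nontrivial point---will be establishing that the listed points exhaust \emph{all} extreme points rather than merely furnishing a subset. This rests entirely on the ``at most one tight constraint per atom'' counting argument which, combined with linear independence of $\cA_0$, forces every vertex to saturate all $s_0$ atoms with a definite sign.
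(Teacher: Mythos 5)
Your proposal is correct, and it reaches the result by a somewhat different route than the paper. The paper's proof is an ``upper bound plus exhibition'' argument: it invokes Lemma~\ref{thm:dual-finite}, which in the full-column-rank case caps $\card(\cD_0)$ at $2^{s_0}$, then shows that the $2^{s_0}$ candidates $\A_0(\A_0^\transpose\A_0)^{-1}\u$ are distinct extreme points (verifying extremality directly from the definition: it supposes $\v_0=(1-\lambda)\v_1+\lambda\v_2$, pulls the identity back through $\A_0^\transpose$ to get $\u_0=(1-\lambda)\x_1+\lambda\x_2$ with $\|\x_i\|_\infty\le 1$, and uses that $\pm1$ is extreme in $[-1,1]$ entrywise), and concludes by matching cardinalities. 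You instead prove both inclusions directly: your ``at most one tight constraint per atom'' counting forces every basic feasible solution to saturate all $s_0$ atoms with a definite sign, which exhausts the extreme points without appealing to the pigeonhole step, and you certify extremality of each candidate via linear independence of the active normals $\pm\a_i$ rather than via the convex-combination definition. Your exhaustion argument is arguably more informative---it explains \emph{why} exactly the sign patterns $U_{s_0}$ occur, and it does not depend on the bound of Lemma~\ref{thm:dual-finite} being tight---at the cost of leaning on the standard polyhedral fact that a vertex of a $d$-dimensional polyhedron is a feasible point with $d$ linearly independent active constraints (which the paper also uses, inside the proof of Lemma~\ref{thm:dual-finite}). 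Both arguments are complete; no gap to report.
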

\begin{proof}
	From Lemma \ref{thm:dual-finite}, there are possibly at most $2^s$ dual points in the case where $\A_0$ is of full column rank. So in order to prove the result, it is enough to show that the set $\{\A_0 (\A_0^\transpose \A_0) ^{-1} \cdot \u, \u \in U_s\}$ contains $2^s$ points, and each of them is a dual point. 
	
	To show that there are $2^s$ different points, notice that $U_s$ has $2^s$ points, so we are left to show that for any $\u_1, \u_2 \in U_s$ with $\u_1 \ne \u_2$, it has $\A_0 (\A_0^\transpose \A_0) ^{-1} \u_1 \ne \A_0 (\A_0^\transpose \A_0) ^{-1} \u_2$. This can be easily established by noticing that $\rank(\A_0 (\A_0^\transpose \A_0) ^{-1}) = \rank(\A_0) = s$, i.e., $\A_0 (\A_0^\transpose \A_0) ^{-1}$ is also of full column rank, so its null space contains only the origin. Consequently, if $\A_0 (\A_0^\transpose \A_0) ^{-1} \u_1 = \A_0 (\A_0^\transpose \A_0) ^{-1} \u_2$, then $\u_1 = \u_2$, which is a contradiction.
	
	Now we show that $\A_0 (\A_0^\transpose \A_0) ^{-1} \u_0$ is a dual point for any $\u_0 \in U_s$. Denote $\v_0 = \A_0 (\A_0^\transpose \A_0) ^{-1} \u_0$. By definition, we need to show that $\v_0$ is an extreme point of the set $\cK_0^o = \{\v \in \cS_0: \|\A_0^\transpose \v\|_\infty \le 1\}$. First, $\v_0$ is in $\cK_0^o$ because $\|\A_0 ^\transpose \v_0\|_\infty = \|\u_0\|_\infty = 1$. Second, suppose there are two points, $\v_1, \v_2 \in \cK_0^o$, such that 
	\begin{equation}
	\v_0 = (1-\lambda) \v_1 + \lambda \v_2
	\label{eq:proof-convex-cmb}
	\end{equation}
	for some $\lambda \in (0, 1)$, we need to show that it must be the case that $\v_1 = \v_2$. Notice that the columns of $\A_0 (\A_0^\transpose \A_0) ^{-1}$ span the space $\cS_0$ and that $\v_1, \v_2 \in \cK_0^o \subseteq \cS_0$, there exists $\x_1, \x_2$ such that $\v_i = \A_0 (\A_0^\transpose \A_0) ^{-1} \x_i, i = 1, 2$. Then by using \eqref{eq:proof-convex-cmb}, it has
	\begin{multline}
	\A_0 (\A_0^\transpose \A_0) ^{-1} \u_0 \\= (1-\lambda) \A_0 (\A_0^\transpose \A_0) ^{-1} \x_1 + \lambda \A_0 (\A_0^\transpose \A_0) ^{-1} \x_2,
	\end{multline}
	and by left multiplying $\A_0 ^\transpose$, we have \begin{equation}
	\u_0 = (1-\lambda) \x_1 + \lambda \x_2.
	\label{eq:proof-convex-cmb-coeff}
	\end{equation}
	Now, consider the equation for each entry separately in \eqref{eq:proof-convex-cmb-coeff}, i.e., $[\u_0]_i = (1-\lambda) [\x_1]_i + \lambda [\x_2]_i$, where $i$ indexes an entry in the vector. The left hand side, being $\pm 1$, is a extreme point of the set $[-1, 1]$, while the right hand side is the convex combination of two points in $[-1, 1]$, so it necessarily has that $[\x_1]_i = [\x_2]_i$. This is true for all entries $i$, so $\x_1 = \x_2$, thus $\v_1 = \v_2$, which shows that $\v_0$ is indeed an extreme point.
\end{proof}

\subsection{Proof of Theorem \ref{thm:sparse-recovery-compare}}

\begin{theorem*}
	If a dictionary $\cA$ satisfies $\mu(\cA) < \frac{1}{2{s_0}-1}$, then for any partition of $\cA$ into $\cA_0$ and $\cA_c$ where $\card(\cA_0) = s_0$, it has that the atoms in $\cA_0$ are linearly independent and that PRC and DRC hold.
\end{theorem*}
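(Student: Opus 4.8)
The plan is to reduce the statement to two quadratic-form estimates governed by the Gram matrix $G := \A_0^\transpose \A_0$, and then to check that the coherence hypothesis is exactly what makes them compatible. Write $\mu := \mu(\cA)$. Since all atoms are unit-norm, $G$ has unit diagonal and off-diagonal entries bounded in absolute value by $\mu$, with $s_0-1$ off-diagonal entries per row. By the Gershgorin circle theorem every eigenvalue of $G$ lies within $(s_0-1)\mu$ of $1$, so $\lambda_{\min}(G) \ge 1-(s_0-1)\mu$. Because $\mu < \frac{1}{2s_0-1} \le \frac{1}{s_0-1}$, this lower bound is strictly positive, so $G \succ 0$ and the atoms of $\cA_0$ are linearly independent. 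Moreover, since the PRC implies the DRC (Theorem \ref{thm:PRC} is implied by Theorem \ref{thm:DRC}), it suffices to establish the PRC, i.e. $\cos\gamma_0 > \cos s(\cA_c, \cS_0)$, after which the DRC follows for free.

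Next I would bound each side of the PRC. For the right-hand side, note that for a unit atom $\a \in \cA_c$ the minimal angle to $\cS_0$ satisfies $\cos s(\a, \cS_0) = \|P_{\cS_0}\a\|_2$, where $P_{\cS_0} = \A_0 G^{-1}\A_0^\transpose$ is the orthogonal projector onto $\cS_0$. Writing $\y := \A_0^\transpose \a$, whose entries are cross inner products between distinct atoms and hence satisfy $\|\y\|_\infty \le \mu$ and $\|\y\|_2^2 \le s_0\mu^2$, I obtain $\|P_{\cS_0}\a\|_2^2 = \y^\transpose G^{-1}\y \le \|G^{-1}\|_2 \|\y\|_2^2 \le s_0\mu^2/(1-(s_0-1)\mu)$. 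Taking the maximum over $\a \in \cA_c$ yields $\cos s(\cA_c, \cS_0) \le \sqrt{s_0}\,\mu/\sqrt{1-(s_0-1)\mu}$. For the left-hand side, I use the identity $\cos\gamma_0 = 1/\max\{\|\v\|_2 : \v \in \cK_0^o\} = 1/\max\{\|\v\|_2 : \v \in \cD_0\}$ from Lemma \ref{thm:bound-polar}, together with the explicit description of the dual points in Lemma \ref{thm:sparse-recovery-dualpoint}. Since each $\v = \A_0 G^{-1}\u$ with $\u \in \{\pm 1\}^{s_0}$ has $\|\v\|_2^2 = \u^\transpose G^{-1}\u \le s_0/\lambda_{\min}(G) \le s_0/(1-(s_0-1)\mu)$, this gives $\cos\gamma_0 \ge \sqrt{1-(s_0-1)\mu}/\sqrt{s_0}$.

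Finally I would combine the two bounds. Both $\gamma_0$ and $s(\cA_c, \cS_0)$ lie in $[0,\pi/2]$ (indeed $\cos\gamma_0 > 0$ by the above), so the PRC is equivalent to the corresponding inequality of cosines. It therefore suffices to show $\sqrt{1-(s_0-1)\mu}/\sqrt{s_0} > \sqrt{s_0}\,\mu/\sqrt{1-(s_0-1)\mu}$; clearing the positive denominators and squaring reduces this to $(1-(s_0-1)\mu)^2 > s_0^2\mu^2$, i.e. $1-(s_0-1)\mu > s_0\mu$, i.e. $(2s_0-1)\mu < 1$, which is precisely the hypothesis $\mu < \frac{1}{2s_0-1}$. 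Hence the PRC holds, and with it the DRC. The main obstacle is not any single estimate—each is routine once $\lambda_{\min}(G)$ is controlled via Gershgorin—but rather the bookkeeping that makes \emph{both} sides collapse into quadratic forms in $G^{-1}$, so that the final comparison lands exactly on the coherence threshold; in particular one must correctly identify $\cos s(\a,\cS_0)$ as a projection norm and invoke that the convex function $\|\cdot\|_2$ attains its maximum over $\cK_0^o$ at a dual (extreme) point.
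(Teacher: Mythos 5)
Your proof is correct, and it follows the paper's skeleton for the most part: establish linear independence and $\lambda_{\min}(\A_0^\transpose\A_0) \ge 1-(s_0-1)\mu$ via Gershgorin, lower-bound $\cos\gamma_0$ by $\sqrt{(1-(s_0-1)\mu)/s_0}$ using Lemma \ref{thm:bound-polar} and the explicit dual points of Lemma \ref{thm:sparse-recovery-dualpoint}, and reduce the PRC to the inequality $1-(s_0-1)\mu > s_0\mu$. Where you genuinely diverge is in the upper bound on $\cos s(\cA_c,\cS_0)$. The paper bounds $\|\A_c^\transpose\v\|_\infty$ for unit $\v\in\cS_0$ by writing $\v=\A_0\x^*$ with $\x^*$ the minimum-$\ell_1$ representation, using strong duality to get $\|\x^*\|_1\le 1/\cos\gamma_0$, and then invoking $\|\A_c^\transpose\A_0\|_\infty\le\mu$, which yields $\cos s(\cA_c,\cS_0)\le\mu/\cos\gamma_0$. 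You instead identify $\cos s(\a,\cS_0)$ with the projection norm $\|P_{\cS_0}\a\|_2$ and evaluate it as the quadratic form $\y^\transpose G^{-1}\y$ with $\y=\A_0^\transpose\a$, $\|\y\|_2^2\le s_0\mu^2$. Both routes land on the same numerical bound $\sqrt{s_0}\,\mu/\sqrt{1-(s_0-1)\mu}$, so neither is sharper here; your version is more elementary and self-contained (pure linear algebra, no appeal to LP duality), while the paper's version has the structural advantage of expressing the bound intrinsically as $\mu/\cos\gamma_0$, which makes the final cancellation against the lower bound on $\cos\gamma_0$ transparent and reuses machinery already developed for the BP analysis. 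One cosmetic remark: your step ``$\mu<\frac{1}{2s_0-1}\le\frac{1}{s_0-1}$'' implicitly assumes $s_0\ge 2$; the case $s_0=1$ is trivial but worth a word.
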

\begin{proof}
	Suppose $\mu(\cA) < 1 / (2s-1)$, we need to show that $\rank(\A_0) = s$ and that PRC and DRC holds. First, the result that $\rank(\A_0) = s$ is well established in studies of sparse recovery. We then only need to show that PRC is true, as DRC is implied by PRC.
	
	We start by giving an upper bound on $1/\cos \gamma_0$. From Lemma \ref{thm:sparse-recovery-dualpoint}, given any $\v \in \cK_0^o$ where $\v \ne 0$, it can be written as $\v = \A_0 (\A_0^\transpose \A_0) ^{-1} \u$ for some $\u \ne 0$ with $\|\u\|_\infty \le 1$. Thus, 
	\[
	\|\v\|_2 ^2 = \v^\transpose \v = \u^\transpose (\A_0^\transpose \A_0)^{-1} \u \le s \cdot \frac{\u^\transpose (\A_0^\transpose \A_0)^{-1} \u}{\u^\transpose \u}.
	\]
	Denote $\lambda_{\max} (\cdot), \lambda_{\min} (\cdot)$ to be the maximum and minimum eigenvalue of a symmetric matrix, respectively. We get
	\[\begin{split}
	\|\v\|_2 ^2 &\le s \cdot \max_{\u \ne 0} \frac{\u^\transpose (\A_0^\transpose \A_0)^{-1} \u}{\u^\transpose \u}\\
	& = s \cdot \lambda_{\max} (\A_0^\transpose \A_0)^{-1} = 
	\frac{s}{\lambda_{\min} (\A_0^\transpose \A_0)}.
	\end{split}	\]
	Notice that $\A_0 ^\transpose \A_0$ is close to an identity matrix, i.e., its diagonals are $1$ and the magnitude of each off-diagonal entry is bounded above by $\mu(\cA)$. By using Gersgorin's disc theorem, $\lambda_{\min} (\A_0^\transpose \A_0) \ge 1 - (s-1)\mu(\cA)$, so
	\[
	\|\v\|_2 ^2 \le \frac{s}{1 - (s-1)\mu(\cA)}.
	\]
	As a consequence, $1/\cos \gamma_0 \le \sqrt{\frac{s}{1 - (s-1)\mu(\cA)}}$ by Lemma \ref{thm:bound-polar}.
	
	In the second step, we give an upper bound for the right hand side of PRC. By definition,
	\[
	\cos s(\cA_c, \cS_0) = \max_{\substack{\v \in \cS_0,\\\|\v\|_2 = 1}} \|\A_c ^\transpose \v\|_\infty.
	\]
	We thus need to bound $\|\A_c ^\transpose \v\|_\infty$ for any $\v \in \cS_0$ with $\|\v\|_2 = 1$. Consider the optimization program
	\[
	\x^* = \arg\min_{\x} \|\x\|_1 \st \v = \A_0 \x.
	\]	
	and its dual program
	\[
	\max_\omega \langle \omega, \v \rangle \st \|\A_0 ^\transpose \omega\|_{\infty} \le 1.
	\]
	The strong duality holds since the primal problem is feasible, and the objective of the dual is bounded by $\|\omega\|_2 \|\v\|_2 \le 1/\cos \gamma_0$. Consequently, it has $\|\x^*\|_1 \le 1/\cos \gamma_0$. This leads to
	\[\begin{split}
	\|\A_c ^\transpose \v\|_\infty &= \|\A_c ^\transpose \A_0 \x^*\|_\infty \le \|\A_c ^\transpose \A_0\|_\infty \|\x^*\|_1 \\
	&\le \mu(\cA) /\cos \gamma_0,
	\end{split}\]
	in which $\|\cdot\|_\infty$ for matrix treats the matrix as a vector.
	
	Now we combine the results from the above two parts.
	\[\begin{split}
	\cos s(\cA_c, \cS_0) &\le \mu(\cA) /\cos \gamma_0 \\
	&= \cos \gamma_0 \cdot (\mu(\cA) /\cos \gamma_0^2)\\
	&\le \cos \gamma_0 \frac{s \mu(\cA)}{1 - (s-1)\mu(\cA)},
	\end{split}\]
	in which
	\[
	\frac{s \mu(\cA)}{1 - (s-1)\mu(\cA)} = 1 + \frac{\mu(\cA)(2s-1) - 1}{1-(s-1)\mu} < 1,
	\]
	thus $\cos s(\cA_c, \cS_0) < \cos \gamma_0$, which is the PRC.
\end{proof}

\section{Proof of results in section \ref{sec:sparse-classification}}

Theorem \ref{thm:sparse-classification-deterministic} is a trivial application of the result in theorem \ref{thm:DRC} to all subspaces $i=1, \cdots, n$. 

For Theorem \ref{thm:sparse-classification-randomized}, the result is acquired by applying union bound. We give more details on this proof since the probabilistic model is not the same as that in Theorem \ref{thm:randomized} and there are certain points that need to be explained and clarified. 

Concretely, let $E_i$ be event that the condition 
\begin{equation}
	\gamma_i < s(\cD_i, \cA \backslash \cA_i)
	\label{eq:proof-sparse-classification}
\end{equation}
is satisfied, $i = 1, \cdots, n$. For a fixed $i$, the LHS of \eqref{eq:proof-sparse-classification} can be upper bounded in the same way as in \eqref{eq:bound-on-left} by using Theorem \ref{thm:covering-radius-bound}, i.e.
\begin{multline}
P(\gamma_i < \gamma^*) \ge 1 -  \frac{d_i \cdot v_{d_i}}{v_{(d_i-1)}} \cdot\frac{1}{\sin^{(d_i-1)}\frac{\gamma^*}{4}} \\ \cdot \exp (-s_i \frac{2 v_{(d_i-1)}}{v_{d_i}}\sin^{(d_i-1)} \frac{\gamma^*}{2}).
\end{multline}

For the RHS of \eqref{eq:proof-sparse-classification}, the analysis is similar to that that leads to Equation \eqref{eq:bound-on-right}. For any point $\v \in \cD_i$ and $\w \in \cA \backslash \cA_i$, we observe that both of them have a uniform distribution on the unit sphere $\Sp^{D-1}$, and that they are independent due to the fact that they are from different subspaces. Thus one gets

\begin{equation}
P(s(\cD_i,  \cA \backslash \cA_i) > \gamma^*) \ge 1 - (s- s_i) \cdot \binom{s_i}{d_i} 2^{d_i} \cdot \frac{v_{D-1}}{v_D} \sin ^{D-1}\gamma^*.
\end{equation}

By combining these two bounds in the same way as in the proof of Theorem \ref{thm:randomized}, one get

\begin{multline}
	P(E_i) \ge 1 -\frac{d_i\cdot2^{d_i}}{C(D,d_i)} \sqrt{\rho_i} \cdot e^{-C(D, d_i) \sqrt{\rho_i}} - \frac{\sum_{j\ne i} s_j}{s_i}\frac{d_i (2e)^{d_i}}{{(\rho_i)} ^{k_i}} \\
	> 1 -\frac{d_i\cdot2^{d_i}}{C(D,d_i)} \sqrt{\rho_i} \cdot e^{-C(D, d_i) \sqrt{\rho_i}} -\frac{d_i (2e)^{d_i}}{p_i (\rho_i) ^{k_i}}.
\end{multline}
 
By applying union bound,
\begin{equation}
	P(\text{SRC succeeds}) = P(\cap_{i=1}^n E_i) \ge 1 - \sum_{i=1}^n (1 - P(E_i)),
\end{equation}
one can get the conclusion in \eqref{eq:sparse-classification-randomized}.

% use section* for acknowledgment
\section*{Acknowledgment}

The authors would like to thank the support of NSF BIGDATA grant 1447822.

% Can use something like this to put references on a page
% by themselves when using endfloat and the captionsoff option.
\ifCLASSOPTIONcaptionsoff
  \newpage
\fi

% trigger a \newpage just before the given reference
% number - used to balance the columns on the last page
% adjust value as needed - may need to be readjusted if
% the document is modified later
%\IEEEtriggeratref{8}
% The "triggered" command can be changed if desired:
%\IEEEtriggercmd{\enlargethispage{-5in}}

% references section

% can use a bibliography generated by BibTeX as a .bbl file
% BibTeX documentation can be easily obtained at:
% http://www.ctan.org/tex-archive/biblio/bibtex/contrib/doc/
% The IEEEtran BibTeX style support page is at:
% http://www.michaelshell.org/tex/ieeetran/bibtex/
\bibliographystyle{IEEEtran}
% argument is your BibTeX string definitions and bibliography database(s)
\bibliography{IEEEabrv,biblio/vidal,biblio/vision,biblio/math,biblio/learning,biblio/sparse,biblio/geometry,biblio/dti,biblio/recognition,biblio/surgery,biblio/coding,biblio/matrixcompletion,biblio/segmentation}
\end{document}